\newcolumntype{H}{>{\iffalse}c<{\fi}@{}}
\newtheoremstyle{named}{}{}{\itshape}{}{\bfseries}{.}{.5em}{\thmnote{#3's }#1}
\theoremstyle{named}
\theoremstyle{definition}
\def\R{{\mathbb R}}  
\newcommand{\mathleft}{\@fleqntrue\@mathmargin0pt}
\newcommand{\N}{\mathbb{N}}  
\newcommand{\E}{\mathbb{E}} 
\newcommand{\SP}{s \sim{\mathcal{P}}}
\newcommand{\p}{\partial}
\newcommand{\Cov}{{\rm Cov}}
\newcommand{\vl}{\vec{\lambda}}
\newcommand{\xl}{x_{i, \lambda_i}}
\newcommand{\yl}{y_{i, \lambda_i}}
\newcommand{\xll}{x_{i-1, \lambda_{i-1}}}
\newcommand{\vz}{\vec{0}}
\newcommand{\ts}{\theta_*}
\newcommand{\xn}{x_{n, \lambda_n}}
\newcommand{\xnn}{x_{n-1, \lambda_{n-1}}}
\newcommand{\ta}{\theta_{\rm aug}}
\DeclareMathOperator{\sign}{sign}
\newcommand{\answerTODO}[1][]{\textcolor{red}{\bf [TODO]}}
\algnewcommand{\Inputs}[1]{%
  \State \textbf{Inputs:}
  \Statex \hspace*{\algorithmicindent}\parbox[t]{.8\linewidth}{\raggedright #1}
}
\algnewcommand{\Initialize}[1]{%
  \State \textbf{Initialize:}
  \Statex \hspace*{\algorithmicindent}\parbox[t]{.8\linewidth}{\raggedright #1}
}
\newtheoremstyle{named}{}{}{\itshape}{}{\bfseries}{.}{.5em}{\thmnote{#3's }#1}
\theoremstyle{named}
\DeclarePairedDelimiter{\norm}{\lVert}{\rVert} 
\theoremstyle{definition}
\newtheorem{theorem}{Theorem}[section]
\newtheorem{lemma}[theorem]{Lemma}
\newtheorem{proposition}[theorem]{Proposition}
\newtheorem{assumption}{Assumption}
\newcommand{\rvline}{\hspace*{-\arraycolsep}\vline\hspace*{-\arraycolsep}}
\DeclareMathOperator*{\argmin}{argmin}
\def\R{{\mathbb R}}  
\title{Recursive Time Series Data Augmentation}
\author{
    Amine Mohamed Aboussalah \\
    Finance and Risk Engineering \\
    New York University \\
    \texttt{ama10288@nyu.edu}
    \And    
    Min-Jae Kwon \\
    Computer Science \\
    University of Virginia \\
    \texttt{hbt9su@virginia.edu}
    \And    
    Raj G Patel\\
    Department of Mechanical \& Industrial Engineering\\
    University of Toronto\\
    \texttt{rajg.patel@mail.utoronto.ca} 
    \And    
    Cheng Chi \\
    Department of Mechanical \& Industrial Engineering\\
    University of Toronto\\
    \texttt{cheng.chi@mail.utoronto.ca} 
    \And    
    Chi-Guhn Lee \\
    Department of Mechanical \& Industrial Engineering\\
    University of Toronto\\
    \texttt{cglee@mie.utoronto.ca} 
}
\begin{document}

\maketitle

\begin{abstract}

Time series observations can be seen as realizations of an underlying dynamical system governed by rules that we typically do not know. For time series learning tasks we create our model using available data. Training on available realizations, where data is limited, often induces severe over-fitting thereby preventing generalization. To address this issue, we introduce a general recursive framework for time series augmentation, which we call the Recursive Interpolation Method (RIM). New augmented time series are generated using a recursive interpolation function from the original time series for use in training. We perform theoretical analysis to characterize the proposed RIM and to guarantee its performance under certain conditions. We apply RIM to diverse synthetic and real-world time series cases to achieve strong performance over non-augmented data on a variety of learning tasks. Our method is also computationally more efficient and leads to better performance when compared to state of the art time series data augmentation.

\end{abstract}

\section{Introduction}

\vspace{-0.1cm}

The recent success of machine learning (ML) algorithms depends on the availability of a large amount of data and prodigious computing power, which in practice are not always available. In real world applications, it is often impossible to indefinitely sample and ideally, we would like the ML model to make good decisions with a limited number of samples. To overcome these issues, we can exploit additional information such as the structure or invariance in the data that help the ML algorithms efficiently learn and focus on the most important features for solving the task. In ML, the exploitation of structure in the data has been handled using four different yet complementary approaches: 1) Architecture design, 2) Transfer learning, 3) Data representation, and 4) Data augmentation. Our focus in this work is on data augmentation approaches in the context of time series learning.

Time series representations do not expose the full information of the underlying dynamical system \cite{prado1998latent} in a way that ML can easily recognize. For instance, in financial time series data, there are patterns at various scales that can be learned to improve performance. At a more fundamental level, time series are one-dimensional projections of a hypersurface of data called the phase space of a dynamical system. This projection results in a loss of information regarding the dynamics of the system. However, we can still make inferences about the dynamical system that projects a time series realization. Our approach is to use these inferences to generate additional time series data from the original realization to build richer representations and improve time series pattern identification resulting in more optimal parameters and reduced variance. We show that our methodology is applicable to a variety of ML algorithms.

Time series learning problems depend on the observed historical data used for training. We often use a set of time series data to train the ML model. Each element in the set can be viewed as a sample derived from the underlying stochastic dynamical system. However, each historical time series data sample is only one particular realization of the underlying stochastic dynamical system in the real world that we are trying to learn. Our work focuses on problems where available realizations are limited but is not limited to these problems. In fact, our method can be applied to any time series learning task such as stock price prediction where we often have a single realization or problems with numerous realizations such as speech recognition where many audio clips are available for training. Let us consider the stock price prediction problem. The task is to predict or classify the trend of future price. Ideally we want our model to perform well by capturing the stochastic dynamics of stock markets. However, we only train the model using a single time series realization or limited historical realizations. As a result, we do not truly capture the characteristic behaviour of the underlying dynamical system. Using the original training data and hence one or a few realizations of the underlying dynamical system usually induces over-fitting. This is not ideal as we want our model to perform well in the stochastic system instead of just a specific realization of that system.

\textbf{Contributions.} The contributions of our work are as follows:
\begin{itemize}[nosep]
 \item We present a time series augmentation technique based on recursive interpolation.
 \item We provide a theoretical analysis of learning improvement for the proposed time series augmentation method:
 \begin{itemize}
   \item We show that our recursive augmentation allows us to control by how much the augmented time series trajectory deviates from the original time series trajectory (Theorem \ref{aug_characterization}) and that there is a natural trade-off that is induced when our augmentation deviates considerably from the original time series (Theorem \ref{Augmented_optimal_parameter0}).
       
   \item We demonstrate that our learning bound depends on the dimension and properties of the time series, as well as the neural network structure (Theorems \ref{trade off with velocity and acceleration} and \ref{learning bound with velo and accler}).

   \item We believe that this work is the first to offer a theoretical ML framework for time series data augmentation with guarantees for variance reduction in the learned model (Theorem \ref{Asymptotic normality_variance_reduction}).
 \end{itemize}

 \item We empirically demonstrate learning improvements using synthetic data as well as real world time series datasets.
\end{itemize}

\textbf{Outline of the paper.}
Section \ref{Related_Work} presents the literature review. Section \ref{Theoretical_Framework_TS} defines the notations, the problem setting, and provides the main theoretical results. Section \ref{exp:re} describes the experimental results, and Section \ref{conclusion} concludes with a summary and a discussion of future work.

\vspace{-0.2cm}

\section{Related Work}
\label{Related_Work}
\vspace{-0.1cm}

\textbf{Augmentation for Computer Vision.}
In the computer vision context, there are multiple ways to augment image data like cropping, rotation, translation, flipping, noise injection and so on. Among them, the mixup technique proposed in \cite{zhang2018mixup} is similar to our approach. They train a neural network on convex combinations of pairs of images and their labels. However, just applying a static technique to dynamic time series data is not appropriate. \cite{CDL20} showed that data augmentation has a similar effect to an averaging operation over the orbits of a certain group of transformation that keep the data distribution invariant.


\textbf{Augmentation for Time Series.}
There is an exhaustive list of transformations applied to time series that are usually used as data augmentation \cite{wen2020time}. \cite{fawaz2018data} described transformations in the time domain such as time warping and time permutation. There are methods that belong to the magnitude domain such as magnitude warping, Gaussian noise injection, quantization, scaling, and rotation \cite{wen2019time}. There exists other transformations on time series in the frequency and time-frequency domains that are based on Discrete Fourier Transform (DFT). In this context, they apply transformations in the amplitude and phase spectra of the time series and apply the reverse DFT to generate a new time series signal \cite{gao2020robusttad}. Besides the transformations in different domains, there are also more advanced methods, including decomposition-based methods such as the Seasonal and Trend decomposition using Loess (STL) method and its variants \cite{cleveland1990stl,wen2020fast}, statistical generative models \cite{kang2020gratis}, and learning-based methods. The learning-based methods can be further divided into embedding space \cite{devries2017dataset}, and deep generative models (DGMs) \cite{esteban2017real,yoon2019time}. These approaches are problem dependent and do not offer theoretical guaranteed learning improvement.{ In addition, the learning-based methods require large amounts of training data.}

\textbf{Augmentation for Reinforcement Learning (RL).}
\cite{laskin2020reinforcement} presented the RL with Augmented Data (RAD) module which can augment most RL algorithms that use image data. They have demonstrated that augmentations such as random translate, random convolutions, crop, patch cutout, amplitude scale, and color jitter can enable RL algorithms to outperform complex advanced methods on standard benchmarks. \cite{kostrikov2021image} presented a data augmentation method that can be applied to conventional model-free reinforcement learning (RL) algorithms, enabling learning directly from pixels without the need for pre-training or auxiliary losses. The inclusion of this augmentation method improves performance substantially, enabling a Soft Actor-Critic agent to reach advanced functioning capability on the DeepMind control suite, outperforming model-based methods and contrastive learning. \cite{laskin2020reinforcement} and \cite{kostrikov2021image} show the benefit of RL augmentation using convolutional neural networks (CNNs) for static data but do not handle dynamic data such as time series.

Ideally, we would like to have access to more data that is representative of the underlying dynamics of the system or the regime under which we operate. However, we can not randomly add more data as there is a probability that the added data might not be representative of our stochastic dynamical system. To ensure that we are able to add meaningful data without disturbing the properties of the original data, we introduce a new approach called Recursive Interpolation Method. Our paper proposes a recursive interpolation method for time series as a tool for generating data augmentations.
\vspace{-0.3cm}

\section{Theoretical Framework for Recursive Interpolation Method}
\label{Theoretical_Framework_TS}
\vspace{-0.1cm}

\subsection{Recursive Interpolation Method (RIM)}

In our setting, we consider each time series sample as one realization from the underlying dynamical system. The realization consists of features along the time axis. Let $d+1$ be the dimension of the time series sample and $\{0, 1, \dots, k\}$ be the label set for each sample. Then, each sample belongs to $\R^{d+1} \times \{0, 1, \dots, k\}$. Let $S=\{s_0, s_1, \dots, s_N\}$ be the collection of the time series samples. Let $\mathcal{D}$ be a distribution with support $[0, 1)$ and $\lambda_i$ be drawn from $\mathcal{D}$ independently denoted by $\lambda_i \sim{\mathcal{D}}$ for{ time} $i \in [1:d]$. Let us denote $\vl = (\lambda_1, \dots, \lambda_d)$ to be the vector of interpolations. For the sake of notational simplicity, in the rest of the paper, we denote $\vl \triangleq \lambda$ and $\lambda \sim \mathcal{D}$ means that each component $\lambda_i$ of $\lambda$ is sampled independently from $\mathcal{D}$.






For a given vector $\lambda \in [0, 1)^d$ and a time series sample $s \in S$, we generate an augmented time series sample $s_{\lambda}$ as follows. Let us consider an original time series sample $s=(x_0, x_1, \dots, x_d, y)$ where the time series features $(x_0, x_1, \dots, x_d) \in \R^{d+1}$ and $y \in \{0, 1, \dots, k\}$ being the label of the corresponding time series sample. For each ${\lambda}=(\lambda_1, \dots, \lambda_d) \in [0, 1)^d$ ($\lambda_0$ is considered to be a dummy value), we define an augmented sample $s_{{\lambda}}=(x_{0, \lambda_0}, x_{1, \lambda_1}, \dots, x_{d, \lambda_d}, y)$ such that
\begin{equation}
\label{eqn:aug_label_classification_1}
x_{i,{\lambda_i}}
= (1-\lambda_i) x_i +\lambda_i x_{{i-1},{\lambda_{i-1}}} \quad\text{and}\quad
x_{0, \lambda_0} = x_0
\end{equation}

\vspace{-0.5cm}

\begin{figure}[!htp]
\centering
\includegraphics[width=9cm, height=4cm]{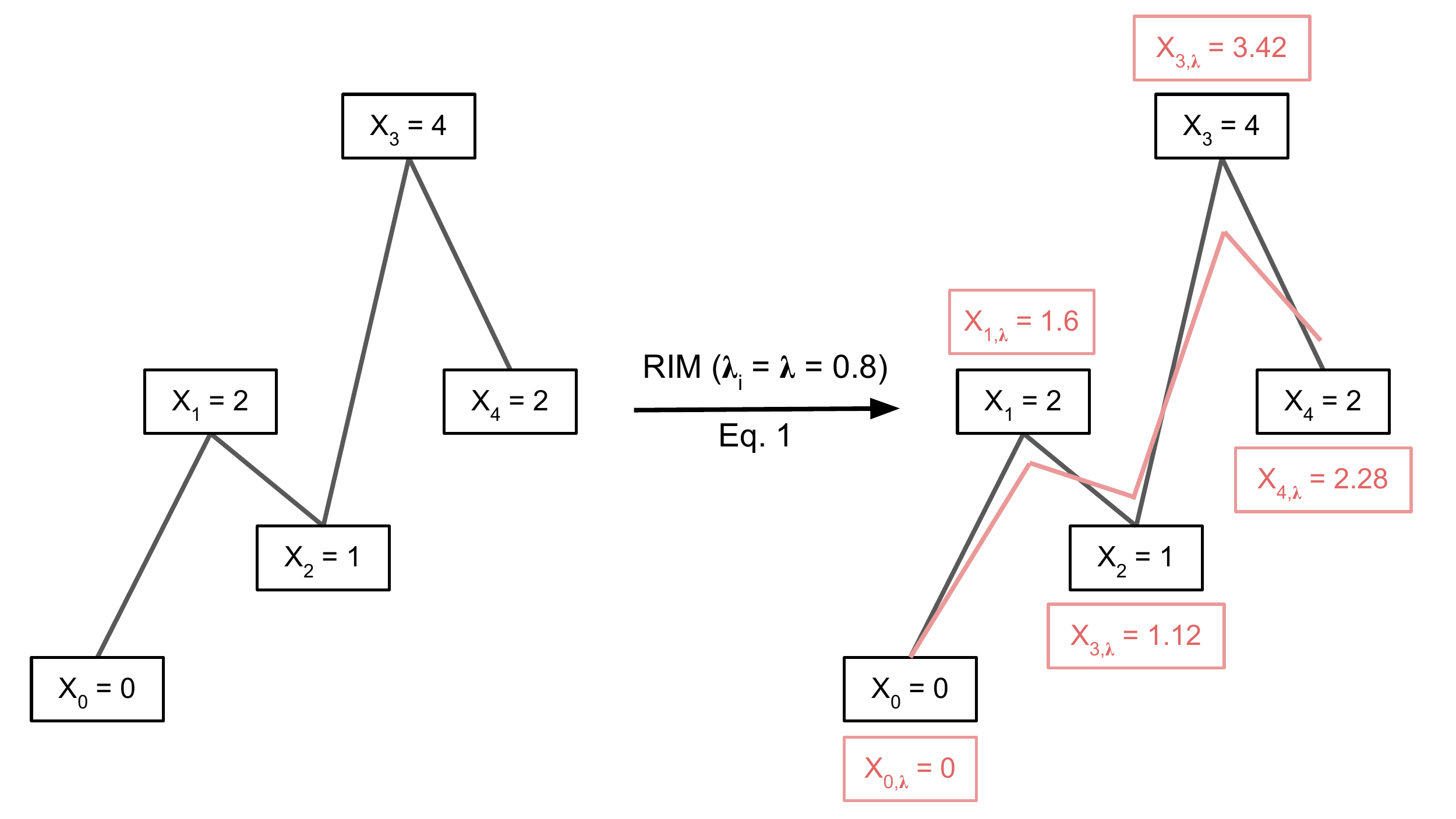}
    \caption{{ Illustration of RIM.}}
\label{fig:RIM_illustration}
\end{figure}

The newly generated augmented sample $s_{{\lambda}}$ has the same label $y$ as the original sample $s$.
Our recursive methodology allows us to generate a new time series realization that preserves the trajectory of the original data within some bound (Theorem \ref{aug_characterization}).

{\bf{Notations.}}
Let $l(s, \theta)$ be a loss function defined on a sample $s$ and a model parameter $\theta$ and similarly $l(s_{\lambda}, \theta)$ is the loss defined on the augmented sample using a model parameter $\theta$. We denote that the distribution $\mathcal{P}$ is parametrized by $\theta_*$ on the sample space $\mathcal{S}$. Given few realizations $\{s_i\}_{i \in [0:N]}$ from the distribution $\mathcal{P}$, we set:
\begin{align}
&\theta_*=\argmin_{\theta}\E_{s \sim {\mathcal{P}}}[l(s, \theta)]
\notag\\
&\hat{\theta}=\argmin_{\theta} \frac{1}{N+1}\sum_{i=0}^N
l(s_i, \theta)
\notag\\
&\theta_{\rm aug}=\argmin_{\theta}\E_{s \sim {\mathcal{P}}}[\E_{\lambda \sim \mathcal{D}}[ l(s_{\lambda}, \theta)] ] 
\\
&\hat{\theta}_{{\rm aug}}=\argmin_{\theta}\frac{1}{N+1}\sum_{i=0}^{N} \E_{\lambda \sim \mathcal{D}}[l(s_{i, \lambda}, \theta) ]
\notag\\
&\mathcal{R}_n(l \circ \Theta)=\E_{\epsilon_i \sim \mathcal{E}}
\left[\sup_{\theta \in \Theta} \biggr \lvert \frac{1}{N+1} \sum_{i=0}^N \epsilon_i
l(s_i, \theta) \biggr \rvert
\right]
\notag
\end{align}

Using the notations above, our recursive time series data augmentation minimizes the augmented loss under which we take the expectation over the augmented sample space. We call it average augmented loss and denote it by $l_{\rm aug}(s, \theta)=\E_{\lambda \sim {\mathcal{D}}}[l(s_{\lambda}, \theta)]$.

The three most important aspects of our theoretical framework are the characterization of our recursive time series augmentation: the trade-off that is induced in the learning parameter space (Section \ref{Sec3.2}), the learning bound showing the impact of the structural properties of time series and the neural network on the learning parameters (Section \ref{Sec3.3}), and better parameter learning with reduced variance when using augmented samples compared to the non augmented ones (Section \ref{Sec3.4}).

\vspace{-0.1cm}

\subsection{Learning Bound Connecting Original Time Series and Augmented Time Series}
\label{Sec3.2}

We define the recursively interpolated time series and show that the augmented time series samples can deviate from the original ones. We measured this deviation from the original time series using a norm distance between the augmented time series and the original one. We show that this distance is bounded, where the bound depends on the characteristics of the time series features (Theorem \ref{aug_characterization}).


Let $\sign(t)=
\begin{cases}
0 &\text{ if } t =0\\
1 &\text{ if } t>0 \\
-1 &\text{ if } t<0 \\
\end{cases}
$, $\delta_{ab}=
\begin{cases}
1 &\text{ if } a=b\\
0 &\text{ if } a\not=b \\
\end{cases}$,
and $\mathcal{D}$ be a distribution with support $[0, 1)$.

\begin{theorem}(Characterization of recursive augmentation)
\label{aug_characterization}
If $\lambda_n \sim{\mathcal{D}}$ and $g(\lambda_n)=(1-\lambda_n)(1-\delta_{0n})+(1-\sign (n))$, then the following holds. Let $n \in [0:N]$.
\begin{enumerate}
    \item [(1)] $x_{n, \lambda_n}=\sum_{k=0}^n (\prod_{i=k+1}^n \lambda_i)g(\lambda_n) x_n$ where  $\lambda_j\sim{\mathcal{D}}$ for $j\geq 1$ and $\lambda_0$ is a dummy value.
    \item [(2)] Let $\norm{\cdot}$ be a norm, $e=\E[\mathcal{D}]$, $m'=\max_{i \in [1:N]}\{\norm{x_i-x_{i-1}}\}$ and $m=\max_{i \in [0:N]}\{\norm{x_i}\}$. Then
    \begin{equation}
    \norm{    
    \E_{\lambda_1, \dots, \lambda_n}[
    (x_{n, \lambda_n}-x_n)
    ]}
    \leq 
    \min\{(3e)m, 
    \frac{e}{1-e}m',
    N e m'
    \},\\
    \end{equation}
    \begin{equation}\label{eqn: exp of norm0}
    \E_{\lambda_1, \dots, \lambda_n}[
     \norm{    
    (x_{n, \lambda_n}-x_n)
    }] 
    \leq 2 \lambda_n m
    \end{equation}
\end{enumerate} 
\end{theorem}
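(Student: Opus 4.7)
The plan is to prove parts (1) and (2) separately, with the closed form in (1) driving the expectation bound and the one-step recursion driving the pathwise norm bound in (2).

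For part (1), I would argue by induction on $n$. The base case $n=0$ is immediate: the only term in the sum has $k=0$, an empty product, and $g(\lambda_0)=(1-\lambda_0)\cdot 0+(1-0)=1$, so the right-hand side reduces to $x_0$, matching the boundary condition $x_{0,\lambda_0}=x_0$. For the inductive step, I would substitute the inductive hypothesis for $x_{n-1,\lambda_{n-1}}$ into the defining relation $x_{n,\lambda_n}=(1-\lambda_n)x_n+\lambda_n x_{n-1,\lambda_{n-1}}$. The factor $\lambda_n$ extends each product $\prod_{i=k+1}^{n-1}\lambda_i$ into $\prod_{i=k+1}^{n}\lambda_i$ for $k\leq n-1$, and the $(1-\lambda_n)x_n=g(\lambda_n)x_n$ term supplies the missing $k=n$ summand.

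For the first bound in part (2), I would take expectations in the closed form. Independence of the $\lambda_i$ gives $\E[\prod_{i=k+1}^n\lambda_i]=e^{n-k}$ and $\E[g(\lambda_k)]=1-e$ for $k\geq 1$, so
\[
\E[x_{n,\lambda_n}]-x_n \;=\; -e\,x_n + (1-e)\sum_{k=1}^{n-1} e^{n-k}x_k + e^n x_0.
\]
The bound $(3e)m$ then drops out of the triangle inequality with $\norm{x_k}\leq m$: evaluating the geometric sum and collecting terms yields $\norm{\E[x_{n,\lambda_n}]-x_n}\leq 2em\leq 3em$. For the bounds in terms of $m'$, I would instead derive a linear recursion for $\mu_n:=\E[x_{n,\lambda_n}]-x_n$ directly from the defining relation: $\mu_n = e\mu_{n-1}+e(x_{n-1}-x_n)$ with $\mu_0=0$, which unrolls to $\mu_n=\sum_{k=1}^n e^{\,n-k+1}(x_{k-1}-x_k)$. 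Bounding each difference by $m'$ and summing the geometric series gives $\norm{\mu_n}\leq em'/(1-e)$, while bounding the geometric factors by $1$ with $n\leq N$ terms gives $\norm{\mu_n}\leq Nem'$.

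For the pathwise bound $\E[\norm{x_{n,\lambda_n}-x_n}]\leq 2\lambda_n m$, I would bypass the closed form and work directly with the one-step recursion. From $x_{n,\lambda_n}-x_n=\lambda_n(x_{n-1,\lambda_{n-1}}-x_n)$ and the triangle inequality, $\norm{x_{n,\lambda_n}-x_n}\leq \lambda_n(\norm{x_{n-1,\lambda_{n-1}}}+\norm{x_n})\leq 2\lambda_n m$, using the auxiliary invariant $\norm{x_{k,\lambda_k}}\leq m$, which is an immediate convex-combination induction since $\lambda_k\in[0,1)$. Taking the expectation over $\lambda_1,\dots,\lambda_{n-1}$ preserves this bound. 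The main obstacle I anticipate is bookkeeping the cancellation that keeps the $(3e)m$ estimate from degrading to $em/(1-e)$: the three contributions $-e\,x_n$, $(1-e)\sum e^{n-k}x_k$, and $e^n x_0$ have magnitudes $em$, $(e-e^n)m$, and $e^n m$, and it is the near-cancellation between the last two that removes the $1/(1-e)$ blow-up. Everything else reduces to routine geometric-series arithmetic and a one-line convex-combination induction.
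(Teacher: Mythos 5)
Your proposal is correct, and for part (1) and the $(3e)m$ bound it coincides with the paper's argument (induction on the recursion for the closed form; take expectations using independence, apply the triangle inequality, and evaluate the geometric sum --- your tally of $em+(e-e^n)m+e^nm=2em\leq 3em$ is in fact slightly sharper than the paper's $(e^{n-1}-e^n+2e)m\leq 3em$). You deviate from the paper in two places, both legitimately. For the $m'$ bounds the paper derives the \emph{pathwise} recursion $\norm{x_{n,\lambda_n}-x_n}\leq\lambda_n(\norm{x_n-x_{n-1}}+\norm{x_{n-1,\lambda_{n-1}}-x_{n-1}})$, unrolls it to $\sum_{k=1}^n(\prod_{i=k}^n\lambda_i)\norm{x_k-x_{k-1}}$, and then invokes Jensen's inequality before taking expectations; you instead run the recursion directly on $\mu_n=\E[x_{n,\lambda_n}]-x_n$, getting $\mu_n=e\mu_{n-1}+e(x_{n-1}-x_n)$, which reaches the same geometric sum $\sum_k e^{n-k+1}m'$ without needing Jensen --- a slightly cleaner route to the same constants. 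For the final bound the paper expands $x_{n,\lambda_n}-x_n$ via the closed form and telescopes the sum $\lambda_1\cdots\lambda_n+\lambda_2\cdots\lambda_n(1-\lambda_1)+\cdots+\lambda_n(1-\lambda_{n-1})+\lambda_n=2\lambda_n$; your one-step argument $\norm{x_{n,\lambda_n}-x_n}=\lambda_n\norm{x_{n-1,\lambda_{n-1}}-x_n}\leq 2\lambda_n m$, justified by the convex-combination invariant $\norm{x_{k,\lambda_k}}\leq m$, gives the same constant with less bookkeeping and makes transparent why the right-hand side still depends on $\lambda_n$ (the expectation only integrates out $\lambda_1,\dots,\lambda_{n-1}$), a point the paper's telescoping identity leaves implicit. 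One cosmetic note: your description of the $(3e)m$ step as exploiting a ``near-cancellation'' is a slight misnomer --- the middle and last contributions sum \emph{exactly} to $em$ after the triangle inequality, so no cancellation of signed terms is needed --- but this does not affect the validity of the argument.
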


Recall that $s=(x_0, x_1, \dots, x_d, y)$ and $s_{\lambda} = (x_{0, \lambda_0}, x_{1, \lambda_1}, \dots, x_{d, \lambda_d},y )$, we have $$\norm{s-s_{\lambda}}_2 = \sqrt{\sum_{i=0}^d (x_i - x_{i, \lambda_i})^2} \leq \sum_{i=0}^d |x_i - x_{i, \lambda_i}| = \norm{s - s_{\lambda}}_1$$

Note that we used the $l_2$ norm for simplicity but in general, any norm satisfies the inequality. Hence, measuring the distance of each feature between the augmented time series sample and the original one gives us some information about how far they deviate from each other.

\begin{theorem}(Learning bound using characterization of recursive augmentation)
\label{Augmented_optimal_parameter0}
Without any loss of generality, let $l(\cdot, \cdot) \in [0, 1]$\footnote{We can choose the range of the loss function to be in any compact and connected subset of $\R$ under the usual topology.}
be a loss function with Lipschitz condition and $S=\{s_0, s_1, \dots, s_N\}$ be the collection of the time series samples. Then with probability at least $1-\delta$ over the samples $\{s_i\}_{i \in [0:N]}$, we have 
\begin{equation}
\label{eqn: optimal rade bound}
\resizebox{0.93\textwidth}{!}{$
    \E_{s \sim {\mathcal{P}}}[l(s, \hat{\theta}_{{\rm aug}})]-\E_{s \sim {\mathcal{P}}}[l(s, \theta_*)] < 2 \mathcal{R}_N(l_{\rm aug} \circ \Theta)+ \sqrt{\frac{2 \log(2/\delta)}{N+1}}+2 
    L_{\rm Lip}\E_{s \sim{\mathcal{P}}}    
\E_{\lambda \sim{\mathcal{D}}}[
\norm{s_{\lambda}-s}
]. 
$}
\end{equation}
Moreover,
we have
\begin{equation}
\label{eqn: rade compare aug and original}
\mathcal{R}_N(l_{\rm aug} \circ \Theta)
    \leq \mathcal{R}_N(l \circ \Theta)+\max_{i \in \{0, \dots, N\}} L_{\rm Lip} \E_{\lambda \sim \mathcal{D}}[\norm{s_{i, \lambda}-s_i}].
\end{equation}
with {
$
\E_{\lambda \sim \mathcal{D}}[\norm{s_{i, \lambda} - s}] \leq 2 m d e $,
where $e=\E[\mathcal{D}]$, $d+1$ is the dimension of the time series sample, and $m=\max_{i \in [0:N]}\{\norm{x_i}\}$.
}
\end{theorem}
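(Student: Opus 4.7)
The plan is to split the excess risk $\E_{s\sim\mathcal{P}}[l(s,\hat\theta_{\rm aug})] - \E_{s\sim\mathcal{P}}[l(s,\theta_*)]$ by introducing $L(\theta) := \E_{s\sim\mathcal{P}}[l(s,\theta)]$ and the population averaged-augmented risk $L_{\rm aug}(\theta) := \E_{s\sim\mathcal{P}}[l_{\rm aug}(s,\theta)]$, and writing
\[
L(\hat\theta_{\rm aug}) - L(\theta_*) = [L(\hat\theta_{\rm aug}) - L_{\rm aug}(\hat\theta_{\rm aug})] + [L_{\rm aug}(\hat\theta_{\rm aug}) - L_{\rm aug}(\theta_*)] + [L_{\rm aug}(\theta_*) - L(\theta_*)].
\]
The outer brackets are Lipschitz gaps between the two population risks at a fixed parameter; the middle bracket is a generalization gap for the augmented loss that Rademacher complexity will control. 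This decomposition produces precisely the three additive pieces on the right-hand side of \eqref{eqn: optimal rade bound}.

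For the outer brackets, the Lipschitz hypothesis and Jensen's inequality give, uniformly in $\theta$,
\[
|L(\theta) - L_{\rm aug}(\theta)| \leq \E_{s \sim \mathcal{P}} \E_{\lambda \sim \mathcal{D}} |l(s,\theta) - l(s_\lambda,\theta)| \leq L_{\rm Lip}\, \E_{s \sim \mathcal{P}} \E_{\lambda \sim \mathcal{D}}[\|s - s_\lambda\|],
\]
contributing the $2L_{\rm Lip}\E_s\E_\lambda[\|s_\lambda - s\|]$ term. For the middle bracket, because $L_{\rm aug}(\theta_{\rm aug}) \leq L_{\rm aug}(\theta_*)$ by the definition of $\theta_{\rm aug}$, it suffices to upper bound $L_{\rm aug}(\hat\theta_{\rm aug}) - L_{\rm aug}(\theta_{\rm aug})$. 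I would split this difference by adding and subtracting the empirical risks $\hat L_{\rm aug}(\hat\theta_{\rm aug})$ and $\hat L_{\rm aug}(\theta_{\rm aug})$: the cross term is non-positive by empirical optimality of $\hat\theta_{\rm aug}$; the first residual is controlled by the uniform deviation $\sup_\theta(L_{\rm aug}(\theta) - \hat L_{\rm aug}(\theta)) \leq 2\mathcal{R}_N(l_{\rm aug}\circ\Theta) + \sqrt{\log(2/\delta)/(2(N+1))}$ via symmetrization plus McDiarmid (valid since $l \in [0,1]$ implies $l_{\rm aug} \in [0,1]$); and the third, being a pointwise deviation at the fixed parameter $\theta_{\rm aug}$, yields to Hoeffding alone. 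A union bound with budget $\delta/2$ on each tail recovers exactly $2\mathcal{R}_N(l_{\rm aug}\circ\Theta) + \sqrt{2\log(2/\delta)/(N+1)}$.

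To prove \eqref{eqn: rade compare aug and original}, I would write $l_{\rm aug}(s_i,\theta) = l(s_i,\theta) + (l_{\rm aug}(s_i,\theta) - l(s_i,\theta))$ inside the Rademacher expectation and apply subadditivity of $\sup_\theta |\cdot|$; since $|l_{\rm aug}(s_i,\theta) - l(s_i,\theta)| \leq L_{\rm Lip}\E_\lambda[\|s_{i,\lambda}-s_i\|]$ holds uniformly in $\theta$, the extra Rademacher-weighted average collapses (via $|\epsilon_i|=1$) to at most $\max_i L_{\rm Lip}\E_\lambda[\|s_{i,\lambda}-s_i\|]$. For the final coordinate-wise deviation bound, I would apply inequality \eqref{eqn: exp of norm0} of Theorem \ref{aug_characterization} feature-by-feature (taking the additional expectation over $\lambda_i \sim \mathcal{D}$ turns the right-hand side into $2em$ per coordinate), then sum over the $d+1$ time coordinates of $s$ and invoke $\|\cdot\|_2 \leq \|\cdot\|_1$ to reach the advertised $2mde$ up to the usual absorption of the $+1$ into the leading constant.

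The anticipated obstacle is bookkeeping rather than conceptual: carefully aligning the symmetrization constant with the McDiarmid/Hoeffding tails so that the two concentration contributions combine into the single term $\sqrt{2\log(2/\delta)/(N+1)}$ in \eqref{eqn: optimal rade bound}, and disciplining the comparison through $\theta_{\rm aug}$ rather than directly to $\theta_*$ so that the middle bracket stays non-positive. All substantive geometric content of recursive interpolation is already packaged in Theorem \ref{aug_characterization}, so it only needs to be invoked cleanly at the very end.
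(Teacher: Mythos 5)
Your proposal is correct and follows essentially the same route as the paper: the three-bracket decomposition you describe is exactly the paper's $u_1+\dots+u_5$ splitting (your middle bracket being $u_2+u_3+u_4$), the Lipschitz/Jensen control of the outer brackets, the supremum-subadditivity argument for \eqref{eqn: rade compare aug and original}, and the feature-by-feature application of \eqref{eqn: exp of norm0} all match the paper's proof. The only notable difference is cosmetic bookkeeping: you compare through $\theta_{\rm aug}$ where the paper compares the empirical risks directly at $\theta_*$ (both give a non-positive cross term), and your $\delta/2$ union-bound accounting actually reproduces the stated $\sqrt{2\log(2/\delta)/(N+1)}$ more faithfully than the paper's own proof, which ends with the looser constant $5\sqrt{2\log(4/\delta)/(N+1)}$.
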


Theorem \ref{Augmented_optimal_parameter0} represents the standard argument for regret bounds using Rademacher complexity. The term in the left in (Eq.\ref{eqn: optimal rade bound}) is what we call the generalization error or sometimes simply the risk, and the term in the right is called the regret excess risk. Theorem \ref{Augmented_optimal_parameter0} tells us how good is our parameter estimation compared to the optimal parameters. Our bound is governed by three terms: 1) Rademacher complexity using the augmented data, 2) the sample size, 3) and the distance between our original time series sample and the augmented one. 
Now if the distance in (Eq.\ref{eqn: optimal rade bound}) between time series samples before and after augmentation goes to zero then the learning bound is more tight. Another implication of the distance being small is that our recursive augmentation method decreases the Rademacher complexity as demonstrated by (Eq.\ref{eqn: rade compare aug and original}). Decreasing Rademacher complexity simply ends up producing a tighter bound on the parameter space.

On the other hand, if the distance in (Eq.\ref{eqn: optimal rade bound} and Eq.\ref{eqn: rade compare aug and original}) becomes large ($i.e.$ the augmented time series samples deviate considerably from the original ones), then we can not guarantee with high probability that our method will always outperform the non augmented case.{ This is a trade-off frequently observed in learning theory}.

\vspace{-0.1cm}

\subsection{Learning Bound Connecting the Structural Properties of Time Series and Neural Networks}
\label{Sec3.3}


\begin{theorem}\label{trade off with velocity and acceleration}
Let $f_{\theta}$ be a neural network with a model parameter $\theta$, ReLU activations, and sigmoid function as the activation function for the last layer denoted by $f_{\theta}(x)=\sigma (g_{\theta}(x))$ where $g_{\theta}(x)=\nabla g_{\theta}^T x +b$ is the pre-activation signal for the last layer. Then the cross entropy loss function $l(\cdot, \cdot)$ has error bound as follows
\begin{equation}
\norm{l(s_{\lambda}, \theta)-l(s, \theta)}
\leq
\sqrt{d} \biggr(\norm{A}_F +\sum_{i=1}^d 
\norm{B_i}_F \biggr) \norm{\nabla g_{\theta}}
\end{equation}
where 
\begin{equation*}
\label{eqn: velo_1}
\resizebox{1.0\textwidth}{!}{$
A=
\begin{pmatrix}
\begin{matrix}
  0
  \end{matrix}
  & \rvline & \vec{0}^T \\
\hline
  \vec{0}  & \rvline &
  \begin{matrix}
    \frac{\p x_{1, \lambda_1}}{\p \lambda_1} \rvert_{\lambda=\vz} 
    & \frac{\p x_{2, \lambda_2}}{\p \lambda_1} \rvert_{\lambda=\vz}
    & \cdots & 
    \frac{\p x_{d, \lambda_d}}{\p \lambda_1} \rvert_{\lambda=\vz}\\
    0 & \frac{\p x_{2, \lambda_2}}{\p \lambda_2} \rvert_{\lambda=\vz}
    & \cdots & 
    \frac{\p x_{d, \lambda_d}}{\p \lambda_2} \rvert_{\lambda=\vz}\\
    \vdots & \vdots & \vdots & \vdots \\
    0 & 0 & \cdots & \frac{\p x_{d, \lambda_d}}{\p \lambda_d} \rvert_{\lambda=\vz}\\
  \end{matrix}
\end{pmatrix}
\quad
B_i=
\begin{pmatrix}
    \begin{matrix}
  0
  \end{matrix}
  & \rvline & \vec{0}^T \\
\hline
  \vec{0}  & \rvline &
  \begin{matrix}
    \frac{\p^2 x_{1, \lambda_1}}{\p \lambda_i \p \lambda_1} \rvert_{\lambda=\vz}
    & \frac{\p^2 x_{2, \lambda_2}}{\p \lambda_i \p \lambda_1} \rvert_{\lambda=\vz}& \cdots & 
    \frac{\p^2 x_{d, \lambda_d}}{\p \lambda_i \p \lambda_1} \rvert_{\lambda=\vz}\\
    0 & \frac{\p^2 x_{2, \lambda_2}}{\p \lambda_i \p \lambda_2} \rvert_{\lambda=\vz}& \cdots 
    & \frac{\p^2 x_{d, \lambda_d}}{\p \lambda_i \p \lambda_2} \rvert_{\lambda=\vz}\\
    \vdots & \vdots & \vdots & \vdots \\
    0 & 0 & \cdots & 
    \frac{\p^2 x_{d, \lambda_d}}{\p \lambda_i \p \lambda_d} \rvert_{\lambda=\vz},\\
  \end{matrix}
\end{pmatrix}
$}
\end{equation*}
$\vec{0} \in \R^{d}$ denoted by column vector, and $\norm{\cdot}_{F}$ is a Frobenius norm.
\end{theorem}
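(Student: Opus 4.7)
The plan is to split the argument into two ingredients that together produce the stated bound: a Lipschitz reduction on the loss side that isolates the factor $\|\nabla g_\theta\|$, followed by a Taylor expansion of the feature map $\lambda \mapsto s_\lambda$ around $\lambda = \vec{0}$ whose first- and second-order coefficients are precisely the entries of $A$ and $B_i$.

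First I would exploit the structural fact that the cross-entropy loss composed with a sigmoid output over an affine pre-activation is Lipschitz in the input with constant $\|\nabla g_\theta\|$. Concretely, for $y \in \{0,1\}$ one checks that $\partial l/\partial g_\theta = \sigma(g_\theta) - y \in [-1,1]$, so $l$ is $1$-Lipschitz in the scalar $g_\theta$; combined with $g_\theta(x) = (\nabla g_\theta)^T x + b$ and the chain rule, this gives
$$|l(s_\lambda, \theta) - l(s, \theta)| \leq \|\nabla g_\theta\| \cdot \|s_\lambda - s\|_2.$$
This explains why only one power of $\|\nabla g_\theta\|$ appears on the right-hand side and reduces everything to bounding $\|s_\lambda - s\|_2$.

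Next I would use that the recursion $x_{i,\lambda_i} = (1-\lambda_i)x_i + \lambda_i x_{i-1,\lambda_{i-1}}$ forces $x_{i,0}=x_i$ by induction on $i$, so $s_\lambda|_{\lambda=\vec{0}}=s$. I then Taylor-expand each coordinate through second order around $\vec{0}$,
$$x_{j,\lambda_j} - x_j \;=\; \sum_{i=1}^d \lambda_i\, \tfrac{\partial x_{j,\lambda_j}}{\partial \lambda_i}\Big|_{\vec{0}} + \tfrac{1}{2}\sum_{i,k=1}^d \lambda_i\lambda_k\,\tfrac{\partial^2 x_{j,\lambda_j}}{\partial \lambda_i\,\partial \lambda_k}\Big|_{\vec{0}} + R_j(\lambda),$$
and observe that the first-order coefficients are exactly the entries of $A$ while the second-order coefficients form the $B_i$. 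Stacking over $j$ puts $s_\lambda - s$ into the form $A^T\lambda$ plus a quadratic form whose coefficient matrices are the $B_i$. Using $\|\lambda\|_2 \leq \sqrt{d}$ (since $\lambda \in [0,1)^d$) and the submultiplicativity $\|Mv\|_2 \leq \|M\|_F\|v\|_2$, the first-order part of $\|s_\lambda-s\|_2$ is bounded by $\sqrt{d}\|A\|_F$, and analogous bounding of each quadratic contribution yields $\sqrt{d}\sum_{i=1}^d \|B_i\|_F$. Multiplying by $\|\nabla g_\theta\|$ then produces the claimed inequality.

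The hard part will be the remainder $R_j(\lambda)$: by Theorem \ref{aug_characterization}(1), $x_{j,\lambda_j}$ is a polynomial of degree up to $j$ in $\lambda$, so a second-order expansion is not exact. Two routes are available. One is to use the Lagrange form of Taylor's theorem, obtaining the second-order coefficients at some $\xi \in [0,\lambda]$, and then to argue via the explicit polynomial form from Theorem \ref{aug_characterization}(1) that $\|B_i(\xi)\|_F$ is dominated by $\|B_i(\vec{0})\|_F$ uniformly on $[0,1)^d$. A cleaner alternative is to apply the mean-value theorem directly to $F(\lambda):=l(s_\lambda,\theta)$: one bounds $|F(\lambda)-F(0)| \leq \sqrt{d}\sup_\xi\|\nabla F(\xi)\|_2$, expands $\nabla F(\xi)$ via the fundamental theorem with Hessian, and reads off the contributions of $A$ and $B_i$ through the chain rule. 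The technically delicate step in either route is ruling out a spurious $\|\nabla g_\theta\|^2$ coming from the $\partial^2 l/\partial g_\theta^2 = \sigma(g_\theta)(1-\sigma(g_\theta))$ piece of the Hessian; this is exactly where one must carefully pair $|\partial l/\partial g_\theta|\le 1$ with a single application of the affine chain rule so that only one factor of $\|\nabla g_\theta\|$ survives, matching the stated bound.
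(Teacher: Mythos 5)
Your proposal is correct in outline but takes a genuinely different route from the paper. The paper Taylor-expands the composite map $\lambda \mapsto l_{\theta}(\lambda) = l(s_{\lambda},\theta)$ directly around $\lambda=\vec{0}$: it first proves a lemma giving the explicit form of $\partial x_{i,\lambda_i}/\partial\lambda_j$, then computes $\partial l_{\theta}/\partial\lambda_j$ and $\partial^2 l_{\theta}/\partial\lambda_u\partial\lambda_j$ by the chain rule through the sigmoid/cross-entropy structure, identifies the first-order term as $(y-f_{\theta}(x))(0,\lambda)^T A\nabla g_{\theta}$ and the second-order term as $\sum_i\lambda_i(y-f_{\theta}(x))(0,\lambda)^T B_i\nabla g_{\theta}$ minus the nonnegative quantity $f_{\theta}(x)(1-f_{\theta}(x))(0,\lambda)^T A\nabla g_{\theta}\nabla g_{\theta}^T A^T(0,\lambda)$, and discards the latter by its sign. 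You instead factor the bound as a Lipschitz constant of the loss in its input (namely $\norm{\nabla g_{\theta}}$, via $|\sigma(g)-y|\le 1$) times a second-order Taylor bound on $\norm{s_{\lambda}-s}$ whose first- and second-order coefficients are read off as $A$ and the $B_i$. Your factorization buys two things: the problematic $\norm{\nabla g_{\theta}}^2$ cross-term never arises (the paper must argue it away by sign, and that sign argument in fact only controls the signed difference from above rather than the absolute value appearing in the statement), and the feature-side expansion is independent of the choice of loss. What the paper's route buys is that $A$ and $B_i$ enter through the actual derivatives of the loss, so the constants of the statement appear directly without tracking the extra factor $\tfrac12$ from the Hessian term. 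Both arguments share the same unresolved weakness, which you correctly flag as the hard part: by Theorem \ref{aug_characterization}(1), $x_{j,\lambda_j}$ is a polynomial of degree $j$ in $\lambda$, so a second-order expansion at $\vec{0}$ is not exact, and neither your sketch nor the paper (which writes the remainder as $O(\norm{\lambda}^2)$ and then silently drops it) controls the remainder uniformly over $\lambda\in[0,1)^d$; your two proposed repairs are plausible but left incomplete, so on this point you are no worse off than, and more explicit than, the paper's own proof.
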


Theorem \ref{trade off with velocity and acceleration}  shows how the structural properties of the time series (A and B) and the neural network architecture ($\nabla g_{\theta}$) can affect the learning process. The interpolation vector $\lambda$ will determine velocity $A$ and accelerations $\{B_i\}_{i \in [1:d]}$ of the features for the augmented sample $s_{\lambda}$.

\begin{theorem}\label{learning bound with velo and accler}(Learning bound with structural properties on time series and neural network)
Let $l$ be the cross entropy loss function such that $l(\cdot, \cdot) \in [0, 1]$. Then with probability at least $1-\delta$ over the samples $\{s_i\}_{i \in [0:N]}$, we have 
\begin{equation}\label{eqn: learning bound with neural network}
\resizebox{0.93\textwidth}{!}{$
    \E_{s \sim {\mathcal{P}}}[l(s, \hat{\theta}_{{\rm aug}})]
    -
    \E_{s \sim {\mathcal{P}}}[l(s, \theta_*)] < 2 \mathcal{R}_N(l_{\rm aug} \circ \Theta)
    + \sqrt{\frac{2 \log(2/\delta)}{N+1}}
    +2 
     \sqrt{d} \biggr(
    \mathcal{A} +\sum_{i=1}^d 
 \mathcal{B}_i \biggr) \norm{\nabla g_{\theta}}
 $} 
\end{equation}
Moreover, we have
\begin{equation}
\label{eq:Rademacher_velo_acceleation}
\mathcal{R}_N(l_{\rm aug} \circ \Theta)
    \leq \mathcal{R}_N(l \circ \Theta)+
    \sqrt{d} \biggr(
    \mathcal{A} +\sum_{i=1}^d 
 \mathcal{B}_i \biggr) \norm{\nabla g_{\theta}}.
\end{equation}
\end{theorem}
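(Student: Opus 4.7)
The plan is to combine the Rademacher-complexity argument of Theorem \ref{Augmented_optimal_parameter0} with the pointwise loss perturbation bound of Theorem \ref{trade off with velocity and acceleration}, substituting the velocity/acceleration quantity for the Lipschitz-times-distance term. No new concentration machinery is required; all of the probabilistic work is inherited verbatim.

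First, I would recall the three-term decomposition that underlies Theorem \ref{Augmented_optimal_parameter0}:
\begin{align*}
\E_{\SP}[l(s, \hat{\theta}_{{\rm aug}})] - \E_{\SP}[l(s, \theta_*)]
&= \bigl(\E_{\SP}[l(s, \hat{\theta}_{{\rm aug}})] - \E_{\SP}[l_{\rm aug}(s, \hat{\theta}_{{\rm aug}})]\bigr)\\
&\quad + \bigl(\E_{\SP}[l_{\rm aug}(s, \hat{\theta}_{{\rm aug}})] - \E_{\SP}[l_{\rm aug}(s, \theta_*)]\bigr)\\
&\quad + \bigl(\E_{\SP}[l_{\rm aug}(s, \theta_*)] - \E_{\SP}[l(s, \theta_*)]\bigr).
\end{align*}
The middle bracket is bounded by the standard symmetrization/McDiarmid argument to give $2\mathcal{R}_N(l_{\rm aug} \circ \Theta) + \sqrt{2\log(2/\delta)/(N+1)}$, exactly as in Theorem \ref{Augmented_optimal_parameter0}. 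The first and third brackets are each bounded by $\E_{\SP}\E_{\lambda \sim \mathcal{D}}[|l(s,\theta) - l(s_\lambda, \theta)|]$ via Jensen's inequality applied to $l_{\rm aug}(s,\theta) = \E_\lambda[l(s_\lambda,\theta)]$.

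Second, rather than applying Lipschitz continuity of $l$ followed by the distance bound $\E_\lambda[\norm{s_\lambda - s}]$, I would invoke Theorem \ref{trade off with velocity and acceleration} directly on the cross-entropy loss, which yields
\begin{equation*}
|l(s, \theta) - l(s_\lambda, \theta)| \leq \sqrt{d}\Bigl(\norm{A}_F + \sum_{i=1}^d \norm{B_i}_F\Bigr) \norm{\nabla g_\theta}.
\end{equation*}
Since $A$ and $B_i$ are evaluated at $\lambda=\vec{0}$, the right-hand side depends only on $s$ and $\theta$, not on $\lambda$, so taking $\E_{\SP}$ and letting $\mathcal{A} = \E_{\SP}[\norm{A}_F]$, $\mathcal{B}_i = \E_{\SP}[\norm{B_i}_F]$ delivers Eq.~\ref{eqn: learning bound with neural network}.

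Third, for Eq.~\ref{eq:Rademacher_velo_acceleation} I would mimic the derivation of Eq.~\ref{eqn: rade compare aug and original}: inside the supremum defining $\mathcal{R}_N(l_{\rm aug} \circ \Theta)$ I write $l_{\rm aug}(s_i,\theta) = l(s_i,\theta) + (l_{\rm aug}(s_i,\theta) - l(s_i,\theta))$, split the supremum of the sum into two suprema by the triangle inequality, and uniformly bound the perturbation by the velocity/acceleration expression via Theorem \ref{trade off with velocity and acceleration} (first over $\lambda$ using Jensen, then pulling the $\norm{\nabla g_\theta}$ factor outside the Rademacher expectation since the right-hand side does not depend on the signs $\epsilon_i$). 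The main obstacle is purely cosmetic: being careful that $\mathcal{A}$ and $\mathcal{B}_i$ are given a consistent interpretation (expectation over $s \sim \mathcal{P}$, or equivalently the per-sample maximum over the training set when plugged into the Rademacher comparison) in both inequalities. Mathematically, once the Lipschitz-distance step of Theorem \ref{Augmented_optimal_parameter0} is replaced by the pointwise bound of Theorem \ref{trade off with velocity and acceleration}, the result is immediate.
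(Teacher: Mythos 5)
Your proposal is correct and follows essentially the same route as the paper: the paper likewise reuses the excess-risk decomposition and Rademacher/McDiarmid machinery from Theorem \ref{Augmented_optimal_parameter0} verbatim, replacing only the Lipschitz-times-distance steps (Eq. \ref{eqn: u1u52b} and the last lines of Eq. \ref{eqn: radema diff}) with the pointwise bound from Theorem \ref{trade off with velocity and acceleration}. The only caveat is the one you already flag: the paper resolves the interpretation of $\mathcal{A}$ and $\mathcal{B}_i$ by taking $\mathcal{A}=\sup_{s\in S}\norm{A}_F$ and $\mathcal{B}_i=\sup_{s\in S}\norm{B_i}_F$ rather than expectations, which makes both inequalities hold uniformly.
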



Theorem \ref{learning bound with velo and accler} reveals that there is a trade off between the dimension of features of a time series sample, the $\lambda$, and the neural network architecture. 
$\mathcal{A}$ and $\{\mathcal{B}_i\}_{i \in [1:d]}$ are the bounds for the velocity and accelerations, respectively, which are computed using the interpolation vector. The last term of (Eq.\ref{eqn: learning bound with neural network}) is constructed by the gradient of the pre-activation $g_{\theta}$ with respect to an input, the dimension of features of a time series sample, and the bound for the velocity and accelerations induced by the interpolation vector. Theorem \ref{learning bound with velo and accler} tells us how good is our parameter estimation compared to the optimal parameters. Our bound is governed by three terms: 1) Rademacher complexity using the augmented data, 2) the sample size, 3) and a term that depends on the dimension of the features of the time series sample $d$, the structural properties of the time series ($\mathcal{A}$ and $\mathcal{B}_i$), and the neural network architecture ($\nabla g_{\theta}$). Because our RIM method uses a recursive interpolation between two consecutive features, the order of magnitude of A and B do not change drastically, and hence do not blow up the bound. Another implication of the A and B being small is that our recursive augmentation method decreases the Rademacher complexity as demonstrated by (Eq.\ref{eq:Rademacher_velo_acceleation}). Decreasing Rademacher complexity simply ends up producing a tighter bound on the parameter space.


\vspace{-0.1cm}
\subsection{Variance Reduction}
\label{Sec3.4}



{
Suppose that we observe a set $\{s_0, s_1, \dots, s_N\}$ of $N+1$ samples from the underlying sample space $\mathcal{S}$. Using our RIM method, we can augment the observed sample $s_i$ with a distribution $\mathcal{D}$, which results in the set of augmented samples $\{s_{i, \lambda} \mid \lambda \sim \mathcal{D}\}$ for $s_i$. Based on mild assumptions (please refer to Appendix \ref{A.5}) on the regularity of the loss function and on the underlying sample space, we have the following results.
}

\begin{theorem}\label{Asymptotic normality_variance_reduction}
(Asymptotic normality) Assume $\Theta$ is open. Then 
$\hat{\theta}$ and $\hat{\theta}_{\rm aug}$ admit the following Bahadur representation;
\begin{equation}\label{eqn: normality}
\begin{aligned}
    &\sqrt{N+1}(\hat{\theta}-\theta_*)=
    \frac{1}{\sqrt{N+1}} V^{-1}_{\theta_*}
    \sum_{i=0}^N
    \nabla l(s_i, \theta_{*})+o_{\mathcal{P}}(1)\\
    & \sqrt{N+1}(\hat{\theta}_{\rm aug}-\theta_*)=
    \frac{1}{\sqrt{N+1}}
    V^{-1}_{\theta_*}
    \sum_{i=0}^N
    \nabla l_{\rm aug}(s_i, \theta_*)+o_{\mathcal{P}}(1)
\end{aligned}
\end{equation}
Therefore, both $\hat\theta$ and $\hat{\theta}_{\rm aug}$ are asymptotically normal
\begin{equation}
\label{eqn: normality1}
    \sqrt{N+1}(\hat\theta - \theta_*)
    \rightarrow
    \mathcal{N}(0, \Sigma_0)
    ~{\rm and}~
    \sqrt{N+1}(\hat{\theta}_{\rm aug} - \theta_*)
    \rightarrow
    \mathcal{N}(0, \Sigma_{\rm aug})
\end{equation}
where the covariance is given by
\begin{equation}
\label{eq:variance_reduction}
\begin{aligned}
    &\Sigma_0=V^{-1}_{\theta_*}
    \E_{s \sim \mathcal{P}}[
    \nabla l(s, \theta_{*}) 
    \nabla l(s, {\theta_*})^T
    ]
    V^{-1}_{\theta_*}\\
    &
    \Sigma_{\rm aug}=\Sigma_0-
    \E_{s \sim \mathcal{P}}[
    X X^T
    ]
\end{aligned}
\end{equation}
where $X=\nabla l(s, \theta_{*}) -
\nabla l_{\rm aug}(s, \theta_*)$.
As a consequence, the asymptotic relative efficiency of $\hat{\theta}_{\rm aug}$ compared to
$\hat\theta$
is ${\rm RE}=\frac{tr(\Sigma_0)}{tr(\Sigma_{\rm aug})} \geq 1$.
\end{theorem}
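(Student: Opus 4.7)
The plan is to treat both $\hat\theta$ and $\hat\theta_{\rm aug}$ as standard M-estimators and apply classical asymptotic theory in three steps: consistency, a one-step Taylor expansion of the first-order condition to obtain the Bahadur representation, and the CLT for the resulting score average. Throughout I would lean on the regularity assumptions of Appendix \ref{A.5} (identifiability of $\theta_*$ as the unique minimizer of both $\theta \mapsto \E_{s \sim \mathcal{P}}[l(s,\theta)]$ and $\theta \mapsto \E_{s \sim \mathcal{P}}[l_{\rm aug}(s,\theta)]$, dominated convergence to commute $\E_\lambda$ with one and two differentiations in $\theta$, positive definiteness of the information matrix $V_{\theta_*}=\E_{s \sim \mathcal{P}}[\nabla^2 l(s,\theta_*)]$, and a uniform $L^1$ envelope enabling a uniform LLN on an open neighborhood of $\theta_*$).

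First I would establish consistency $\hat\theta,\hat\theta_{\rm aug}\to_{\mathcal{P}}\theta_*$ via a uniform law of large numbers combined with identifiability; since $l_{\rm aug}(s,\theta)=\E_\lambda[l(s_\lambda,\theta)]$ inherits the same regularity as $l$, the argument is identical in both cases. Next, the first-order conditions $\sum_{i=0}^N \nabla l(s_i,\hat\theta)=0$ and $\sum_{i=0}^N \nabla l_{\rm aug}(s_i,\hat\theta_{\rm aug})=0$, Taylor-expanded around $\theta_*$, give for some $\tilde\theta_N$ on the segment joining $\hat\theta$ and $\theta_*$,
\begin{equation*}
0 = \frac{1}{N+1}\sum_{i=0}^N \nabla l(s_i,\theta_*) + \Bigl(\frac{1}{N+1}\sum_{i=0}^N \nabla^2 l(s_i,\tilde\theta_N)\Bigr)(\hat\theta-\theta_*),
\end{equation*}
and a second uniform LLN together with consistency of $\tilde\theta_N$ shows that the bracketed Hessian average converges in probability to $V_{\theta_*}$. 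Rearranging and multiplying by $\sqrt{N+1}$ yields the first Bahadur representation in Eq.~\eqref{eqn: normality}; the same computation with $l_{\rm aug}$ in place of $l$, using the identity $\E_{s \sim \mathcal{P}}[\nabla^2 l_{\rm aug}(s,\theta_*)]=V_{\theta_*}$ (obtained by swapping $\E_\lambda$ with $\nabla^2$), gives the second line. Applying the CLT to the i.i.d.\ mean-zero scores $\nabla l(s_i,\theta_*)$ and $\nabla l_{\rm aug}(s_i,\theta_*)$ then delivers Eq.~\eqref{eqn: normality1} with sandwich covariances $\Sigma_0=V_{\theta_*}^{-1}\E_{s \sim \mathcal{P}}[\nabla l(s,\theta_*)\nabla l(s,\theta_*)^T]V_{\theta_*}^{-1}$ and the analogous expression for $\Sigma_{\rm aug}$.

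For the variance-reduction decomposition I would view $\nabla l_{\rm aug}(s,\theta_*)=\E_\lambda[\nabla l(s_\lambda,\theta_*)]$ as a Rao--Blackwell smoothing of the score. Recognizing $\nabla l_{\rm aug}$ as the conditional expectation of $\nabla l(s_\lambda,\theta_*)$ given $s$ and invoking the tower property yields the orthogonality $\E_{s \sim \mathcal{P}}[(\nabla l - \nabla l_{\rm aug})\nabla l_{\rm aug}^T]=0$, giving the Pythagorean split $\E[\nabla l\,\nabla l^T]=\E[\nabla l_{\rm aug}\nabla l_{\rm aug}^T]+\E[XX^T]$ with $X=\nabla l - \nabla l_{\rm aug}$. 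Substituting into the sandwich forms of $\Sigma_0$ and $\Sigma_{\rm aug}$ yields the decomposition in Eq.~\eqref{eq:variance_reduction}, and positive semi-definiteness of $\E[XX^T]$ together with $V_{\theta_*}^{-1}\succ 0$ then gives $\Sigma_0-\Sigma_{\rm aug}\succeq 0$, hence trace monotonicity delivers ${\rm RE}=\tr(\Sigma_0)/\tr(\Sigma_{\rm aug})\geq 1$. I expect the main obstacle to be justifying the orthogonality cleanly: it requires identifying the correct $\sigma$-algebra and verifying that the augmentation is distribution-preserving in the sense needed for $\nabla l_{\rm aug}$ to be a genuine conditional expectation of a score whose unconditional second moment matches $\E[\nabla l\,\nabla l^T]$; this, together with the dominated-convergence justifications for commuting $\nabla$ and $\E_\lambda$ uniformly near $\theta_*$, is exactly what the regularity assumptions of Appendix \ref{A.5} must supply.
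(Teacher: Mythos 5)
Your proposal is correct and follows the paper's strategy in both halves: standard M-estimation asymptotics for the Bahadur representation, and the law of total covariance (your Rao--Blackwell/Pythagorean split) for the decomposition $\Sigma_{\rm aug}=\Sigma_0-\E_{s\sim\mathcal{P}}[XX^T]$. The one place you genuinely diverge is the route to the Bahadur representation: you Taylor-expand the empirical first-order condition, which requires $\theta\mapsto l(s,\theta)$ to be twice differentiable near $\theta_*$ together with a uniform LLN for the empirical Hessian; the paper instead verifies (Propositions in Appendix \ref{A.5}) that $l_{\rm aug}$ inherits the hypotheses of van der Vaart's Theorem 5.23 --- an $L^2$ Lipschitz envelope, differentiability at $\theta_*$ for almost every $s$, and a second-order Taylor expansion of the \emph{population} risk only --- and then invokes that theorem for both pairs $(\theta_*,l)$ and $(\theta_{\rm aug},l_{\rm aug})$. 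The paper's route buys asymptotic normality under weaker smoothness of the sample criterion (relevant since the losses here involve ReLU networks, so $l(s,\cdot)$ need not be $C^2$), at the cost of checking the stochastic-differentiability conditions for the averaged loss; that checking (dominated convergence to commute $\nabla$ with $\E_{\lambda}$, measurability, the uniform LLN) is precisely the bulk of the appendix and is the part your sketch defers to "inherited regularity." On the variance-reduction step the two arguments coincide: the subtlety you flag --- that $\nabla l_{\rm aug}(s,\theta_*)$ must be a genuine conditional expectation of the score whose unconditional second moment matches $\E[\nabla l\,\nabla l^T]$ --- is exactly what the paper's disjointness assumption and its lemma on the laws of total expectation and covariance (over the partition of $\mathcal{S}$ into augmentation orbits) are there to supply, so your orthogonality/Pythagorean identity is the same key lemma in different clothing.
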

(Eq.\ref{eqn: normality1}) describes the asymptotic behaviour of the learning parameters. (Eq.\ref{eq:variance_reduction}) shows that our recursive time series augmentation reduces the variance of the learning parameters.

\vspace{-0.2cm}
\section{Experiments}
\label{exp:re}
\vspace{-0.1cm}
\subsection{Design}
\vspace{-0.1cm}
We show our results on time series classification tasks with different time series augmentation methods. We compare the improvements of downstream task performance due to different data augmentation methods by enlarging the training set when we train the classifiers. To benchmark our RIM approach, we compare our results with that achieved by TimeGAN approach \cite{yoon2019time}. We specifically select TimeGAN as they are known to preserve the temporal dynamics thereby maintaining the correlation between variables across time. Furthermore, with the flexibility of the unsupervised GAN framework and the control offered by the supervised training in autoregressive models, comparing our results against those by TimeGANs can provide us a rigorous benchmark against a tested state-of-the-art approach. 

We use a relatively small training set with a large testing set so that it is more challenging for classifiers to generalize and data augmentations are favorable. Note that we use data augmentation methods on two classes separately as data augmentations should only preserve properties of that particular class. We use different data augmentation methods on these two classes of time series in our training set as follows: RIM methods can be directly applied to time series within each class to generate new time series for that class to enlarge the original training set such that the generated series are close to original series in that class according to Theorem \ref{aug_characterization}. For the TimeGAN baseline, we train two TimeGANs separately using time series from each class. Once these two TimeGANs are trained, they are used to generate time series for each class to enlarge the original training set. We consider four tasks: the first two use synthetic datasets generated by solving 1-dimensional ODEs, and the last two use real-world datasets. We compare testing accuracy using the original data, augmented data with RIM, and augmented data with TimeGANs. 




\vspace{-0.1cm}
\subsection{Results}
\label{exp:res}
\vspace{-0.1cm}
For task 1, we consider solutions to ODEs containing exponential functions, and two classes in our binary classification correspond to the two ODEs with different parameters. For task 2, we consider solutions to ODEs with trigonometric functions. In this setting, ODEs can be thought of as generators that generate time series on which we are performing classification, and ODEs with different parameters invoke different dynamical behaviors on their solutions (time series). For each class, we generate multiple solutions using corresponding ODE with different initial values. To make the learning tasks harder, we add random noise to the solution generated by these ODEs. 

\begin{figure}[!htp]
\centering
\includegraphics[width=.47\textwidth]{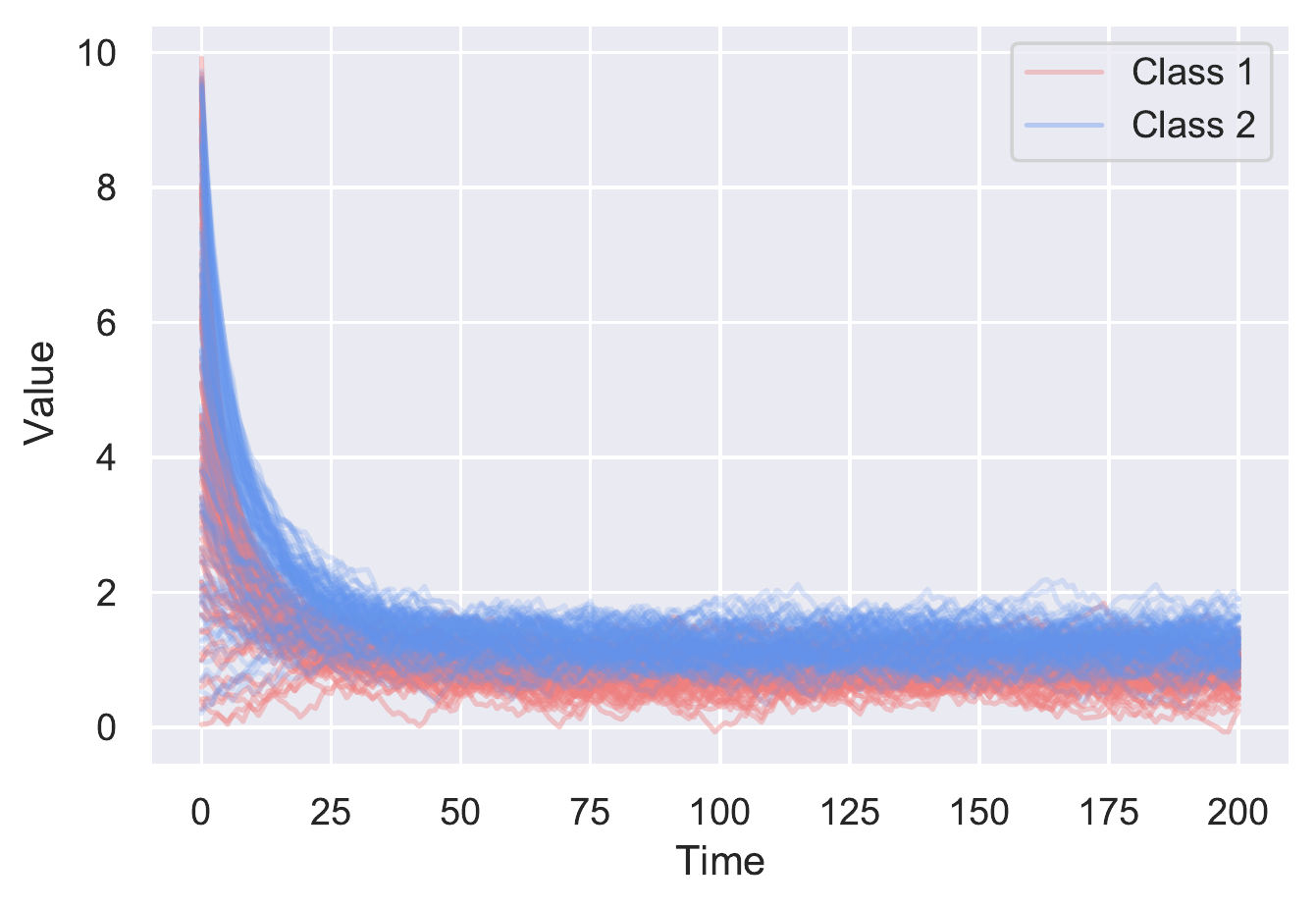}\quad
\includegraphics[width=.46\textwidth]{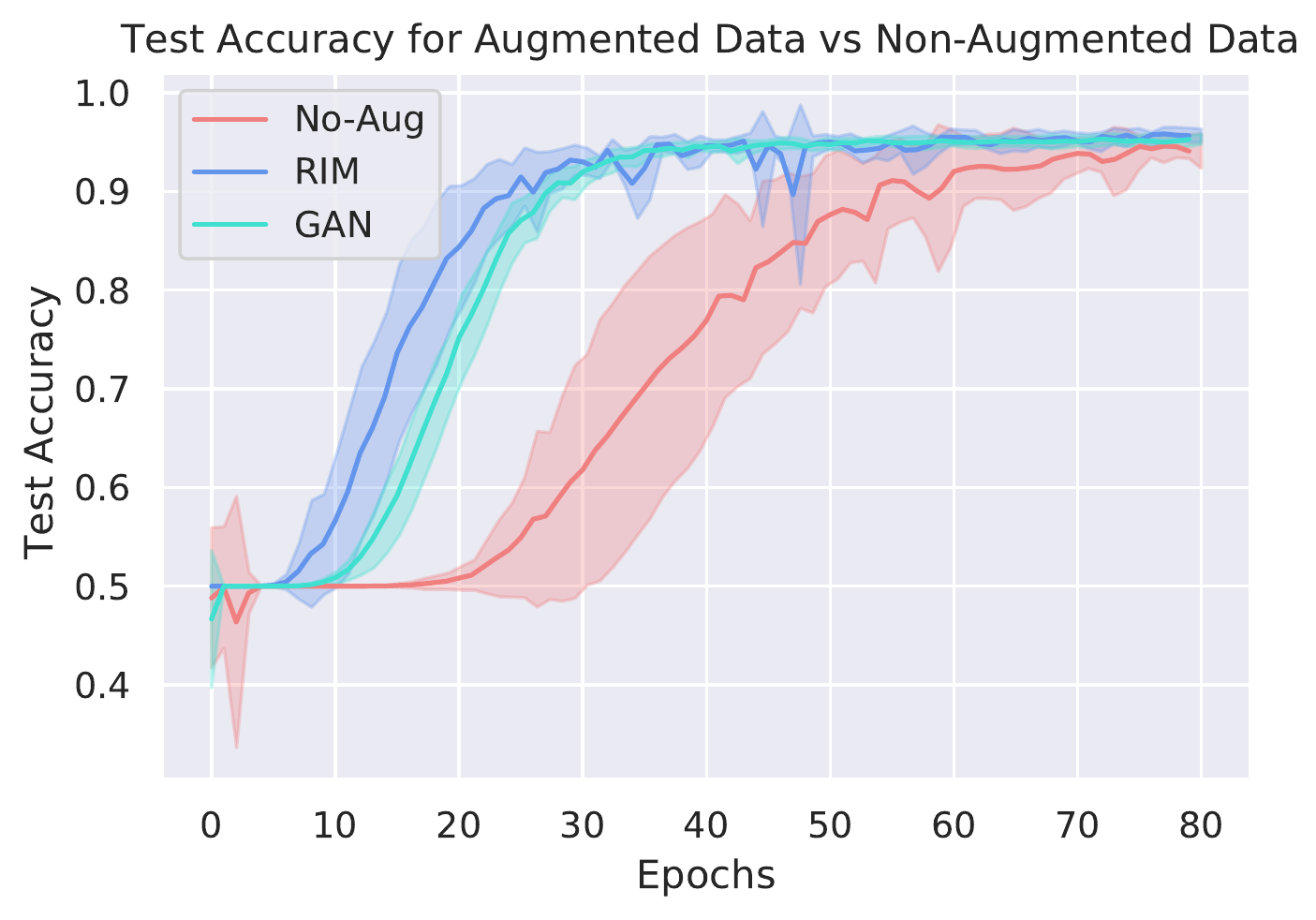}
\caption{Time series from two classes on the left plot,  Test Accuracy for the exponential synthetic ODE system using a Convolutional Neural Network with kernel size=3, filter=32, batch size=16, using BatchNorm and Adam optimizer. The test accuracy plot indicates the resulting mean {$\pm$} standard deviation from 10 runs.}
\label{fig:exp}
\end{figure}

\label{subsec:ode_experiments}

\textbf{Task 1: Synthetic data - Exponential ODEs solutions}
\vspace{-0.1cm}

Two ODEs (time series generators) containing exponential functions we use:

class 1 $\frac{dy}{dt} = -0.5y^2+e^{-y}$; class 2 $\frac{dy}{dt} = -0.3y^2+1.5e^{-y}$


\textbf{Task 2: Synthetic data - Trigonometric ODEs solutions} \\
Two ODEs (time series generators) we use: 

class 1  $\frac{dy}{dt} = 0.6+0.5\sin(y)$; class 2 $\frac{dy}{dt} = 1+\cos(y)$

\begin{figure}[!htp]
\centering
\includegraphics[width=.47\textwidth]{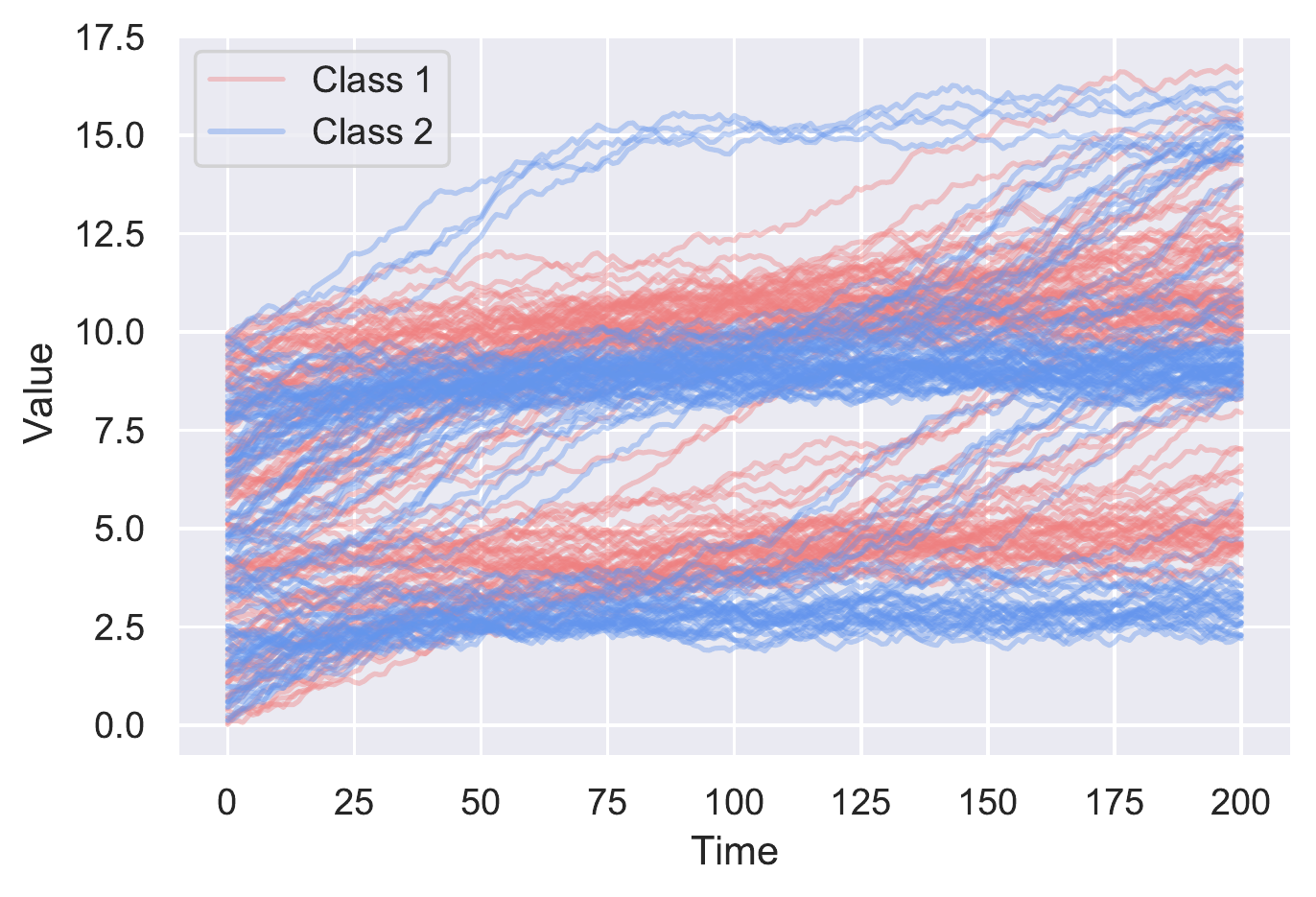}\quad
\includegraphics[width=.46\textwidth]{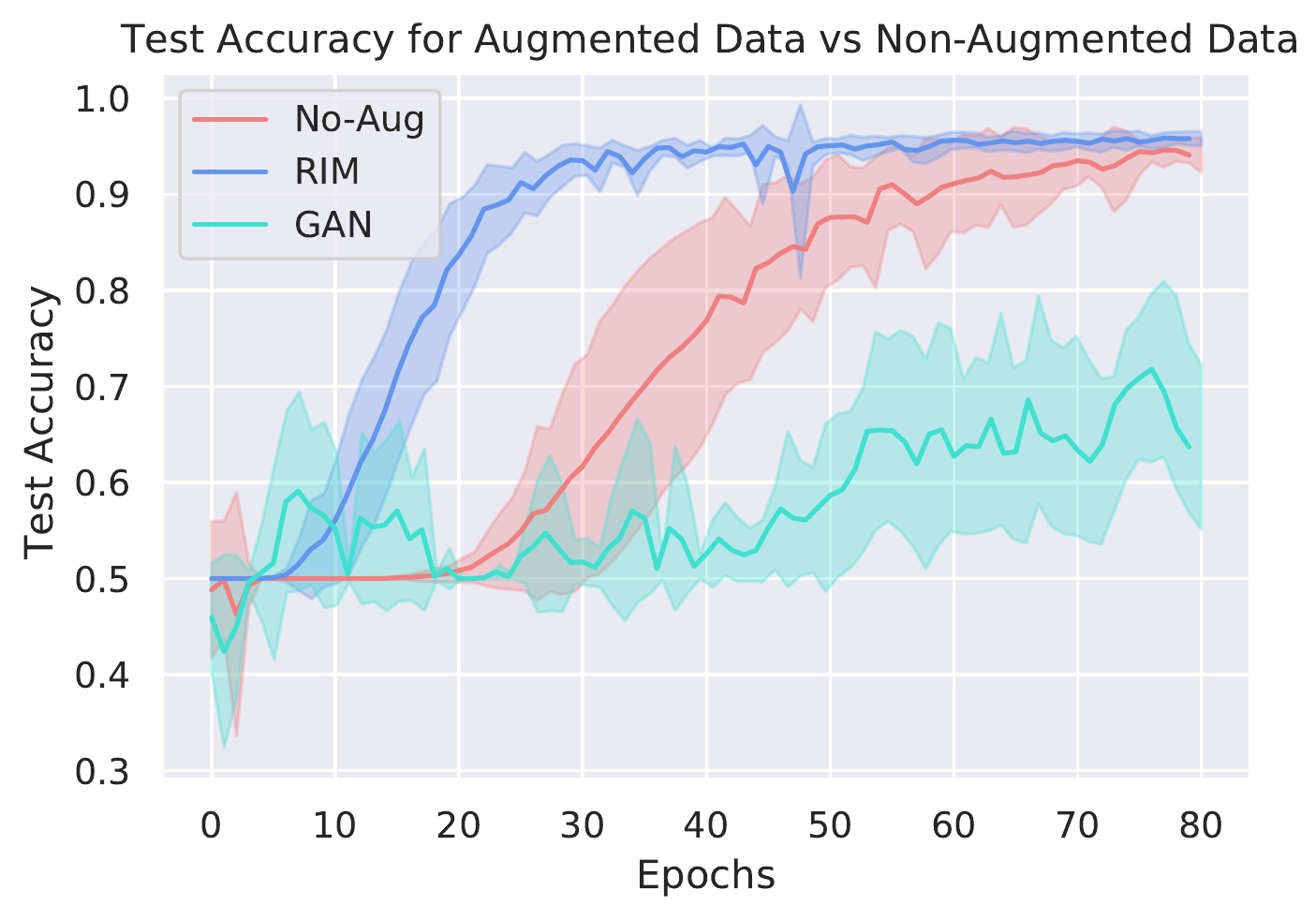}
\caption{Time series from two classes and Test Accuracy for the trigonometric synthetic ODE system using a Convolutional Neural Network with kernel size=3, filter=32, batch size=16, using BatchNorm and Adam optimizer. The test accuracy plot indicates the resulting mean {$\pm$} standard deviation from 10 runs.}
\label{fig:trig}
\end{figure}


\textbf{Task 3: Real dataset - Indoor User Movement from the Radio Signal Strength (RSS) Data}
This binary classification task from \cite{bacciu2014experimental} is associated with predicting the pattern of user movements in real world office environments from time series generated by a Wireless Sensor Network (WSN). The input data contains RSS measured between the nodes of a WSN, comprising of 5 sensors: 4 in the environment and 1 for the user. Data has been collected during movement of the users and labelled to indicate whether the user's trajectory will lead to a change in the room or not. In experiments, we use a subset of the data to form a small training set to challenge our algorithm. We achieve better and more robust test accuracy than the TimeGAN and the non augmented case when using augmented data as reflected in Figure \ref{fig:figureindoor}.{ Since $\lambda$ is the only parameter used in our RIM augmentation technique, our ablation study paid very precise attention to the choice of the $\lambda$ parameter. Under this, we tested different $\lambda$ distributions. Given that we were interested in convex combinations between $x_i$ and $x_{{i-1},{\lambda_{i-1}}}$, we had to restrict $\lambda$ between 0 and 1. Two ways to perform this would be: (1) uniformly distribute the weights while sampling $\lambda$; (2) concentrating on a specific part of $\lambda$ distribution. To address (1), we use $\mathcal{U}(0,1)$ which is the main test-bed for all the experiments in the current main text. Whereas to address the (2), we perform studies using beta distribution by varying its shape parameters to focus on specific parts of densities. We tested $Beta(2,2)$, $Beta(0.5,0.5)$ and $Beta(2,5)$, for which the resulting plots can be found in Appendix \ref{appendix:lambda}. For all these cases, we observed improvements from using RIM compared to non-augmented training, both in terms of a higher final testing accuracy and with fewer training iterations thereby solidifying the effectiveness of RIM.}


\begin{figure}[!htp]
\centering
\includegraphics[width=.47\textwidth]{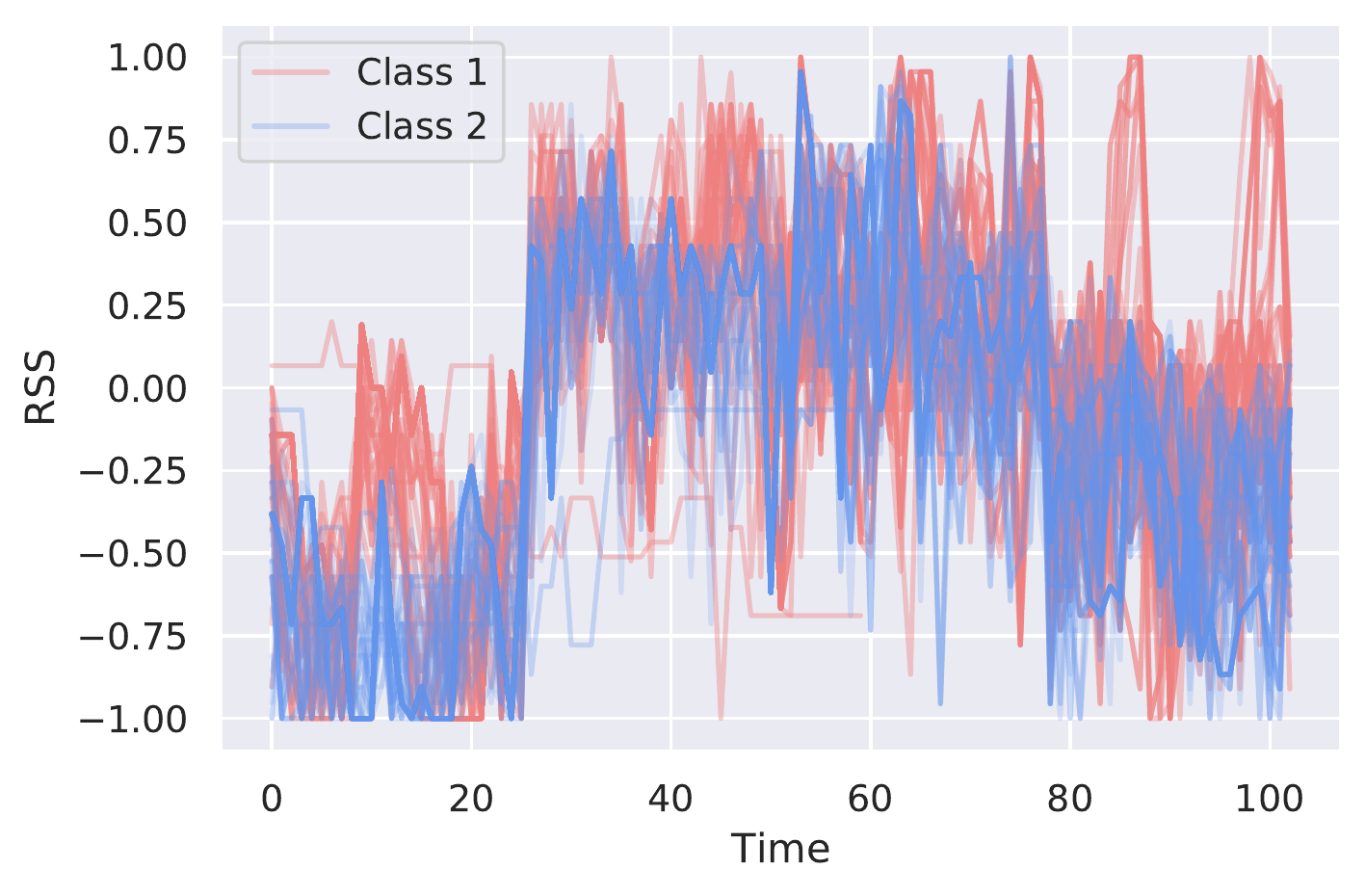}\quad
\includegraphics[width=.46\textwidth]{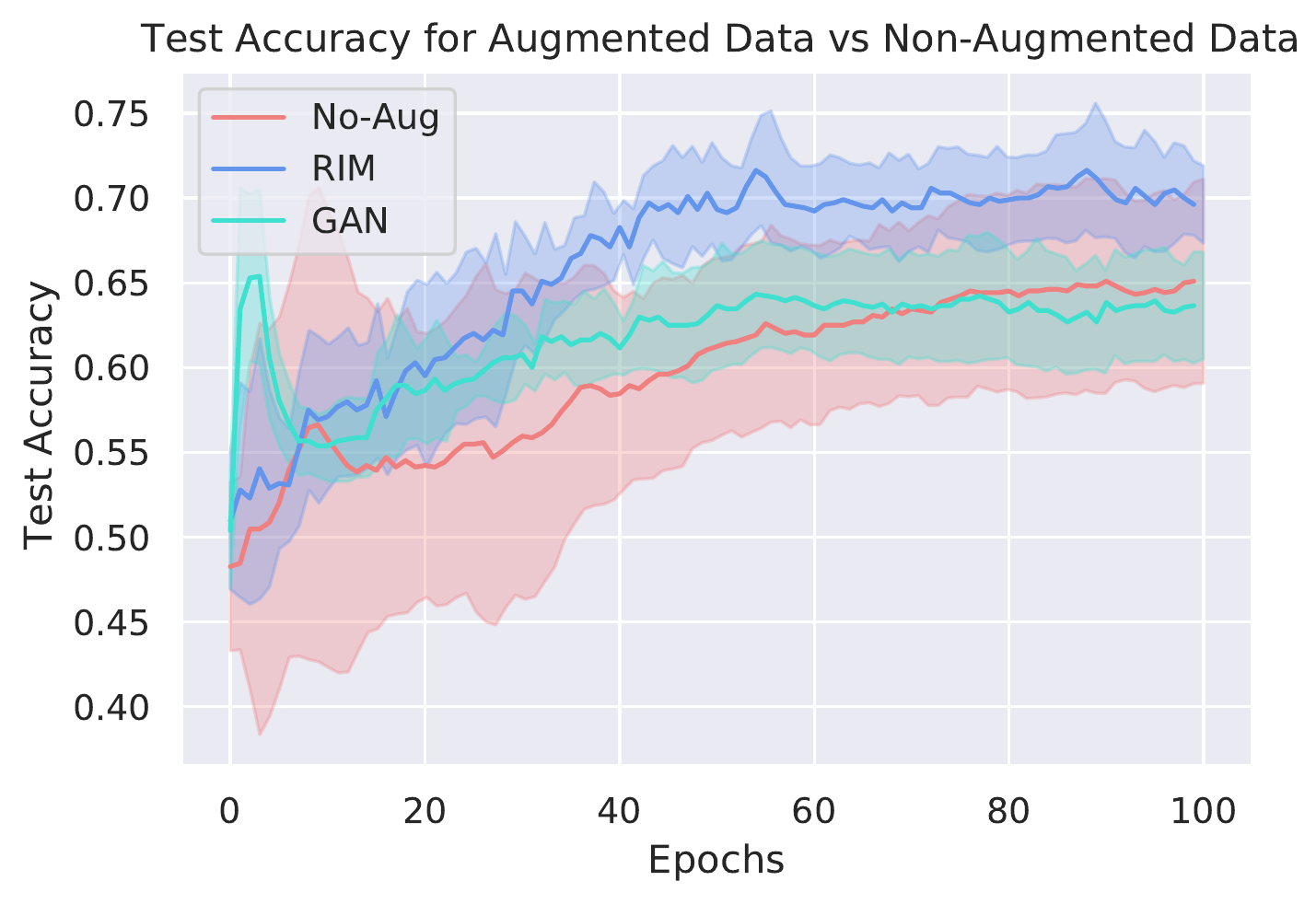}
\caption{Time series from two classes and Test Accuracy for the Indoor User Movement Classification using a Convolutional Neural Network with kernel size=3, filter=32, batch size=16, using BatchNorm and Adam optimizer. The test accuracy plot indicates the resulting mean {$\pm$} standard deviation from 10 runs.}
\label{fig:figureindoor}
\end{figure}

\textbf{Task 4: Real dataset - Ford Engine Condition}\\
We use a subset of the FordA dataset from 2008 WCCI Ford classification challenge \cite{abou2007ford}. This dataset contains time series corresponding to measurements of engine noise captured by a motor sensor. The goal is to detect the presence of a specific issue with the engine by classifying each time series into issue/no issue classes. We sample 100 time series from FordA to form a small training set to challenge our algorithm and 100 time series for testing. As shown in Figure \ref{fig:Ford}, RIM outperforms the TimeGAN and the non augmented case on the test accuracy. 

\begin{figure}[!htp]
\centering
\includegraphics[width=.48\textwidth]{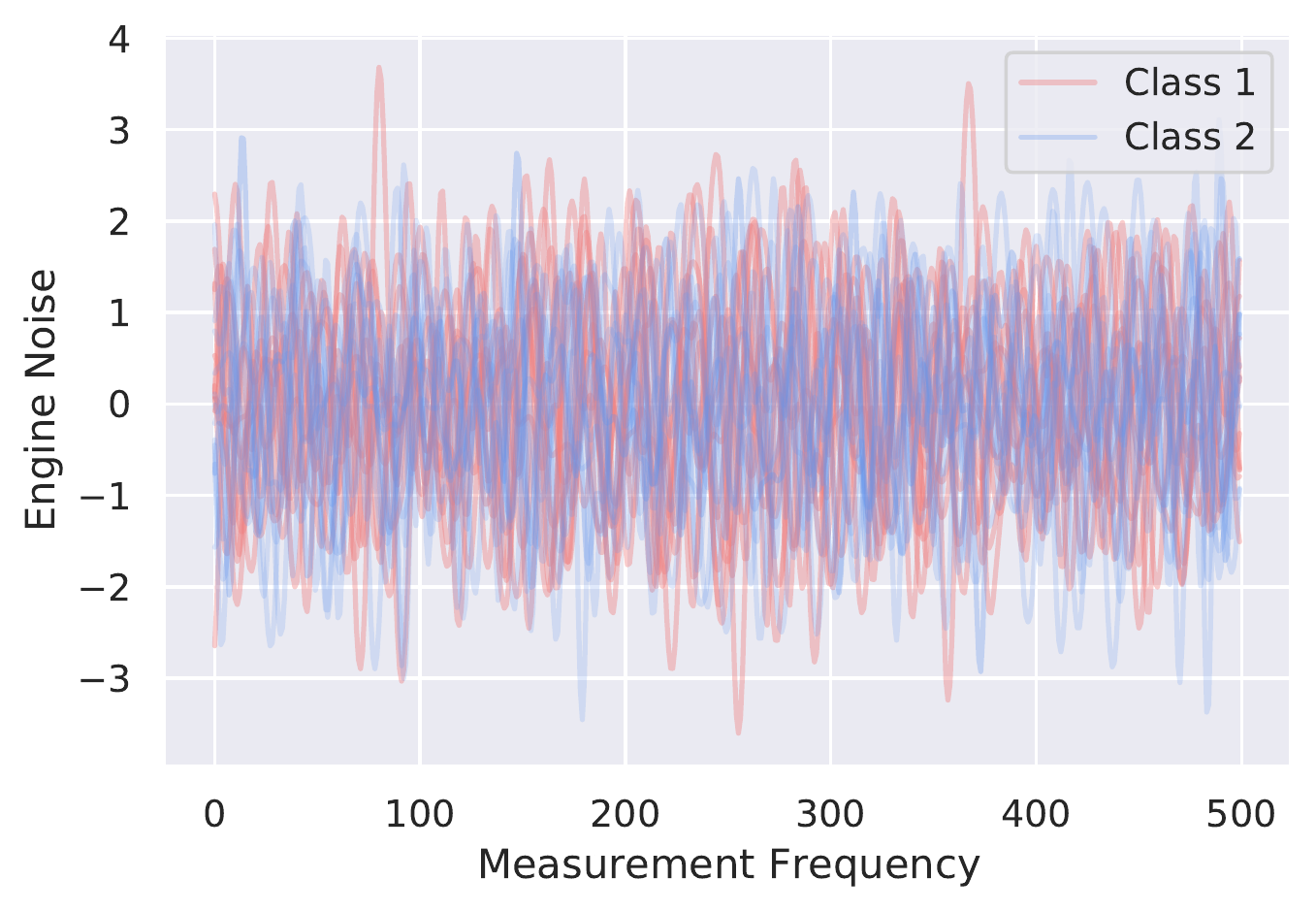}\quad
\includegraphics[width=.48\textwidth]{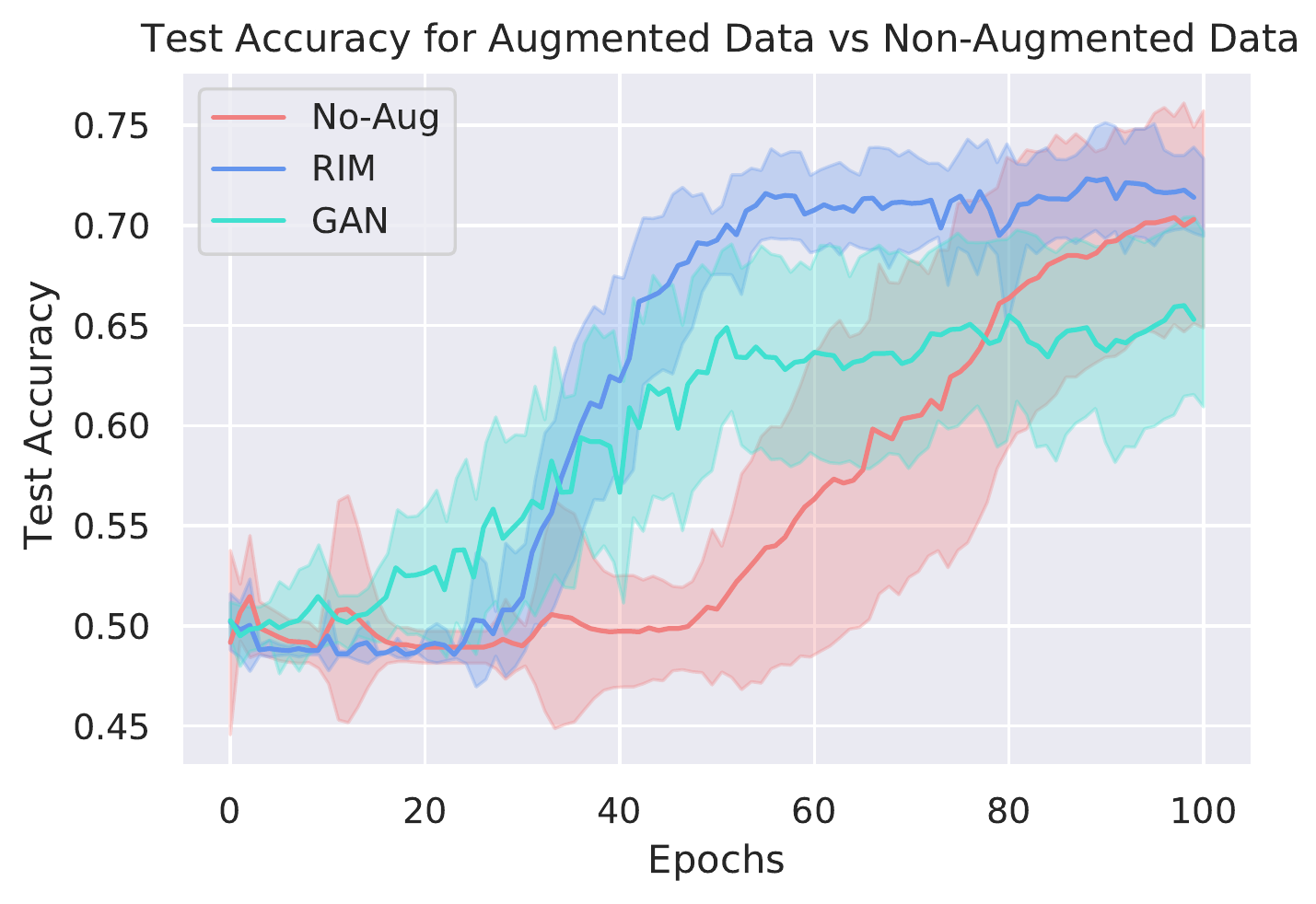}
\caption{Time series from two classes and Test Accuracy for the Ford Engine Classification with a Convolutional Neural Network with kernel size=3, filter=32, batch size=16, using BatchNorm and Adam optimizer. The test accuracy plot indicates the resulting mean $\pm$ standard deviation from 10 runs.}
\label{fig:Ford}
\end{figure}

{
From Figure \ref{fig:exp} to Figure \ref{fig:Ford}, we can see that RIM achieves superior performance over the non augmented case. Furthermore, RIM is also able to achieve better or comparable performance than TimeGAN on these tasks without going through the extensive training process associated with GANs. For our experiments, we train the TimeGAN for 2500 epochs (3 hours on Xeon Processors CPU) for synthetic datasets and 5000 epochs (6 hours on Xeon Processors CPU) for real datasets. Visual comparisons of the time series generated by RIM and TimeGAN with the original time series are shown in Appendix \ref{append:visual}. As expected from Theorem \ref{Asymptotic normality_variance_reduction}, RIM has smaller variance and better convergence compared to the non augmented case across all experiments.
}

\vspace{-0.1cm}

\subsection{Extension to other learning tasks}
\label{subsec:other_tasks}

\vspace{-0.1cm}
Section \ref{subsec:ode_experiments} demonstrates results for time series classification. In this section, we show that RIM can also be used in other learning tasks including continuous time series forecasting and RL. In continuous time series forecasting, we generally have one historical realization and we leverage that to form our training set composed of (x,y) pairs where x is the previous $n$ time steps' data and y is the future step target by decomposing the time series into smaller components. RIM can then be used to generate more time series from the unique realization so that we can enlarge our training set by adding more (x,y) pairs from the original realizations. We also used RIM to augment state trajectories in RL tasks (please refer to pseudo code in Appendix \ref{RL_pseudocode}). { Preliminary experiments for continuous time series forecasting can be found in the Appendix \ref{apped:ts pred} and RL tasks can be found in the Appendix \ref{appendix:RL}.}

\vspace{-0.2cm}
\section{Conclusion}
\label{conclusion}
\vspace{-0.1cm}
We developed a Recursive Interpolation Method (RIM) for time series as a data augmentation technique to learn models accurately with limited data. The RIM is simple, yet effective, supported by theoretical analysis guaranteeing faster convergence. Theoretically, we proved that the RIM guarantees better parameter convergence with reduced variance. Empirically, our methodology outperforms the current state-of-the-art approaches for different real world problem domains and synthetic datasets by obtaining higher accuracy with reduced variance. Because our approach operates on the input time series data, it is invariant to the choice of the ML algorithm. The methodology described in this paper can be used to enhance ML solutions to a wide variety of time series learning problems.

\newpage

\newpage

\newpage
\appendix 
\Large \textbf{Appendix}
\normalsize

The code for all the experiments can be found at the following \href{https://drive.google.com/drive/folders/1rEW9FyAIYhIBdILtIy4C70xJZ-nOQkxW?usp=sharing}{link}.
\section{Proofs}

\label{appendix:proofs}

\subsection{Proof of Theorem \ref{aug_characterization}}

Let $\sign(t)=
\begin{cases}
0 &\text{ if } t =0\\
1 &\text{ if } t>0 \\
-1 &\text{ if } t<0 \\
\end{cases}
$, $\delta_{ab}=
\begin{cases}
1 &\text{ if } a=b\\
0 &\text{ if } a\not=b \\
\end{cases}$,
and $\mathcal{D}$ is a distribution with support $[0, 1)$.
\newline{}
\hfill
\\
{\em{\textbf{Proof of Theorem \ref{aug_characterization}}}}
$(1)$ Note that $\lambda_0$ is a dummy value for mathematical convenience and $\prod_{i=k+1}^n \lambda_i=1$ if $n < k+1$.
We prove this by induction on $n$. 
The case $n=1$ shows that $x_{1, \lambda_1}=g(\lambda_1)x_1 + \lambda_1 g(\lambda_0) x_0=(1-\lambda_1)x_1+\lambda_1 x_0$ since $g(\lambda_0)=1$ and $g(\lambda_1)=1-\lambda_1$.
We now assume that the inequality holds for $n-1$ and prove it for $n$. By construction of $\xn$,
\begin{equation*}
\begin{aligned}
    \xn&=(1-\lambda_n) x_n + \lambda_n \xnn\\
    &=(1-\lambda_n) x_n +
    \lambda_n \sum_{k=0}^{n-1}
    \left(\prod_{i=k+1}^{n-1} \lambda_i\right) g(\lambda_k) x_k\\
    &=\left(\prod_{i=n+1}^n \lambda_i\right) g(\lambda_n) x_n+
    \sum_{k=0}^{n-1}
    \left(\prod_{i=k+1}^{n} \lambda_i\right) g(\lambda_k) x_k\\
    &=\sum_{k=0}^{n}
    \left(\prod_{i=k+1}^{n} \lambda_i\right) g(\lambda_k) x_k
\end{aligned}
\end{equation*}
Thus $(1)$ holds by mathematical induction.
\newline{}
$(2)$
Consider the equation $\norm{\xn-x_n}$. Then by $(1)$, the first bound can be found by
\begin{equation*}
\begin{aligned}
    \norm{ \E[ (\xn-x_n)]}&
    =\norm{ \E[ \sum_{k=0}^n (\prod_{i=k+1}^n \lambda_i)g(\lambda_k) x_k-x_n]}\\
    &= 
    \norm{
    \E[
    \sum_{k=0}^{n-1} (\prod_{i=k+1}^n \lambda_i) g(\lambda_k) x_k-
    \lambda_n x_n
    ]} \\
    &=
    \norm{ 
    \sum_{k=1}^{n-1} e^{n-k} (1-e) x_k+ e^{n-1}x_0- 
    e x_n
    }\\
    &\leq
    \sum_{k=1}^{n-1} e^{n-k} (1-e) \norm{x_k}+
    e^{n-1}\norm{x_0}+
    e \norm{x_n}\\
    &\leq
    (\sum_{k=1}^{n-1} e^{n-k}-
    \sum_{k=0}^{n-2} e^{n-k}+
    e^{n-1}+e)m \\
    &=
    (\sum_{k=0}^{n-1} e^{n-k}-
    \sum_{k=2}^{n-2} e^{n-k}+e)m  \leq (e^{n-1}-e^n+2e)m \leq 3em
\end{aligned}
\end{equation*}
where $e=\E[\mathcal{D}]$ and $m=\max_{i \in [0:n]}\{\norm{x_i}\}$.

Now we prove the second bound.
Since 
\begin{equation}\label{eq:another_bound}
\begin{aligned}
    \norm{\xn-x_n}&=
    \norm{x_n-(
    (1-\lambda_n)x_n
    +\lambda_n \xnn
    )} \\
    &=\lambda_n
    \norm{
    x_n-x_{n-1}+x_{n-1}-\xnn
    }\\
    &\leq
    \lambda_n(
    \norm{
    x_n-x_{n-1}}+
    \norm{x_{n-1}-\xnn}),
\end{aligned}
\end{equation}
By recursively applying (Eq. \ref{eq:another_bound}), 
we obtain
\begin{equation}\label{eqn: jensen inequ}
\begin{aligned}
    \norm{\xn-x_n}\leq
    \sum_{k=1}^n 
    (\prod_{i=k}^n \lambda_i)
    \norm{x_k-x_{k-1}}.
\end{aligned}
\end{equation}
By Jensen's inequality and (Eq. \ref{eqn: jensen inequ}), we obtain the second bound
\begin{equation}
\begin{aligned}
    \norm{ \E[ (\xn-x_n)]}&\leq
    \E[\norm{\xn-x_n}]\\
    &\leq \E[ 
    \sum_{k=1}^n 
    (\prod_{i=k}^n \lambda_i)
    \norm{x_k-x_{k-1}}
    ] \\
    &\leq 
    \sum_{k=1}^n 
    e^{n-k+1} m'
    \leq \min\{ \frac{e}{1-e} m',
      n e m'\}
\end{aligned}
\end{equation}
where $e=\E[\mathcal{D}]$ and $m'=\max_{i \in [1:n]}\{\norm{x_i-x_{i-1}}\}$.
Now we show that $$
\E_{\lambda_1, \dots, \lambda_n}[
     \norm{    
    (x_{n, \lambda_n}-x_n)
    }] 
    \leq 2 \lambda_n m.
$$
\begin{equation}
    \begin{aligned}  
    &\E_{\lambda_1, \dots, \lambda_n}[
     \norm{    
    (x_{n, \lambda_n}-x_n)
    }] = \E[ \norm{
    \sum_{k=0}^{n-1} (\prod_{i=k+1}^n \lambda_i) g(\lambda_k) x_k-
    \lambda_n x_n
    }] \\
    & \leq \E [\sum_{k=0}^{n-1}\lvert(\prod_{i=k+1}^n \lambda_i) g(\lambda_k) \rvert \norm{x_k} + 
    \lvert \lambda_n \rvert \norm{x_n}]
    \\
    &\leq
    \E [\sum_{k=0}^{n-1}(\prod_{i=k+1}^n \lambda_i) g(\lambda_k)   + 
     \lambda_n ] m 
    \\
    & = \big( \lambda_1 \lambda_2 \cdots \lambda_n  + \lambda_2 \lambda_3 \cdots \lambda_n (1-\lambda_1) + \lambda_3 \lambda_4 \cdots \lambda_n (1-\lambda_2) + \cdots \lambda_n (1-\lambda_{n-1}) + \lambda_n \big) m
    \\
    & = 2 \lambda_n m 
    \end{aligned}
\end{equation}
where $m=\max_{i \in [0:N]}\{\norm{x_i}\}$.
The first inequality holds by Minkowski inequality.

\subsection{Proof of Theorem \ref{Augmented_optimal_parameter0}}
{\bf Neural Networks with ReLU Activations.}
Using ReLU activation functions, neural networks are constructed by piecewise linear functions of an input. Such a neural network $g_{\theta}$ can be formulated by $\nabla g_{\theta}^T x+b$ where $x$ is an input, 
$\nabla g_{\theta}^T$ is the gradient of $g_{\theta}(x)$ along $x$. 
\begin{lemma}(Structure of partial derivative)
Let $s=(x_0, \dots, x_d)$ be one sample drawn from time series and $\lambda \in [0, 1]^d$. 
Then  
\begin{equation}
    \frac{\p \xl}{\p \lambda_j}=
    \begin{cases}
    0 &\text{ if } i<j\\
    (\xll-x_i) & \text{ if } i=j\\
    (\prod_{k=j+1}^{i} \lambda_k) (x_{j-1, \lambda_{j-1}}-x_j) &\text{ if } i > j \\
    \end{cases}
\end{equation}
\end{lemma}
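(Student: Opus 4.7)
The plan is to prove the identity by direct induction on $i$, reading the three cases straight off the defining recursion
\[
\xl = (1-\lambda_i)\, x_i + \lambda_i\, \xll, \qquad x_{0,\lambda_0}=x_0.
\]
The base case $i=0$ is immediate: $x_{0,\lambda_0}=x_0$ carries no dependence on any $\lambda_j$ with $j\ge 1$, so $\p x_{0,\lambda_0}/\p \lambda_j=0$, which matches the first branch of the formula since $i=0<j$.

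For the inductive step I would split into three subcases according to the sign of $i-j$. If $j>i$, the recursion exhibits $\xl$ as a function of $\lambda_1,\dots,\lambda_i$ alone, so $\p \xl /\p\lambda_j=0$. If $j=i$, then $\xll$ does not involve $\lambda_i$, and differentiating the recursion gives $\p \xl /\p\lambda_i = -x_i + \xll = \xll - x_i$, which is the second branch. If $j<i$, only the second summand of the recursion depends on $\lambda_j$, and the chain rule yields $\p \xl /\p\lambda_j = \lambda_i\, \p \xll/\p\lambda_j$; applying the inductive hypothesis at index $i-1$ and absorbing the extra factor $\lambda_i$ into the cumulative product produces exactly $\bigl(\prod_{k=j+1}^{i}\lambda_k\bigr)(x_{j-1,\lambda_{j-1}}-x_j)$.

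There is essentially no serious obstacle; the one point worth checking carefully is the boundary subcase $j=i-1$ nested inside the third case. There the inductive hypothesis falls into the $i=j$ branch at level $i-1$ and reads $\p \xll/\p\lambda_{i-1}=x_{i-2,\lambda_{i-2}}-x_{i-1}$ (not the third branch, as one might reflexively expect), and one has to verify that multiplying by $\lambda_i$ still fits the third-branch template at index $i$ via the single-term product $\prod_{k=i}^{i}\lambda_k=\lambda_i$. Once this boundary is checked, the induction closes in a single line, and the overall argument is a bookkeeping exercise that mirrors, level by level, the recursive definition of the augmentation in (\ref{eqn:aug_label_classification_1}).
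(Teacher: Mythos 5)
Your proposal is correct and matches the paper's argument in all essentials: the $i<j$ and $i=j$ branches are read off the recursion directly, and the $i>j$ branch follows by induction via the chain rule, with the boundary case $j=i-1$ (the paper's base case $i=j+1$) handled by invoking the already-established $i=j$ branch and absorbing the factor $\lambda_i$ into the one-term product $\prod_{k=i}^{i}\lambda_k$. The only cosmetic difference is that you package all three cases into a single induction on $i$, whereas the paper treats the first two cases directly and reserves the induction for the third.
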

\begin{proof}
If $i<j$, then $\xl$ does not depend on $\lambda_j$. Hence $\frac{\p \xl}{\p \lambda_j}=0$.

If $i=j$, then
\begin{equation*}
x_{i,{\lambda_i}}
=(1-\lambda_i) x_i +\lambda_i x_{{i-1},{\lambda_{i-1}}}=x_i+\lambda_i(\xll-x_i).
\end{equation*}
Hence $\frac{\p \xl}{\p \lambda_j}=(\xll-x_i)$.

If $i>j$,
we prove this by induction on $i$. 
The case $i=j+1$ shows that 
\begin{equation*}
x_{j+1,{\lambda_{j+1}}}
=(1-\lambda_{j+1}) x_{j+1} +\lambda_{j+1} x_{{j},{\lambda_{j}}}=x_{j+1}+\lambda_{j+1}(
x_{j, \lambda_j}
-x_{j+1}).
\end{equation*}
Since $x_{j+1}$ is not depedent on $\lambda_j$ and $\frac{\p x_{j, \lambda_j}}{\p \lambda_j}=(x_{j-1, \lambda_{j-1}}-x_j)$,
we have $\frac{\p x_{j+1, \lambda_{j+1}}}{\p \lambda_j}=\lambda_{j+1}(x_{j-1, \lambda_{j-1}}-x_j)$.
We now assume that the inequality holds for $i$ and prove it for $i+1$. 
Since 
\begin{equation*}
\begin{aligned}
&x_{i+1,{\lambda_{i+1}}}
=(1-\lambda_{i+1}) x_{i+1} +\lambda_{i+1} x_{{i},{\lambda_{i}}}=x_{i+1}+\lambda_{i+1}(\xl-x_{i+1})~{\rm and}~\\
&\frac{\p \xl}{\p \lambda_j}=(\prod_{k=j+1}^{i} \lambda_k) (x_{j-1, \lambda_{j-1}}-x_j),
\end{aligned}
\end{equation*}
\begin{equation*}
\frac{\p x_{i+1,{\lambda_{i+1}}}}{\p \lambda_{j}}=\lambda_{i+1}(\prod_{k=j+1}^{i} \lambda_k) (x_{j-1, \lambda_{j-1}}-x_j)=(\prod_{k=j+1}^{i+1} \lambda_k) (x_{j-1, \lambda_{j-1}}-x_j). 
\end{equation*}
As a consequence of mathematical induction, the conclusion holds.
\end{proof}

We use neural networks with ReLU activations and sigmoid function as the activation function for the last layer. So, a neural network $f_{\theta}(x)=\sigma (g_{\theta}(x))$ where $g_{\theta}(x)=\nabla g_{\theta}^T x +b$ is the pre-activation signal for the last layer.
Recall that we denote by $\vl \triangleq \lambda$ and $\lambda \sim \mathcal{D}$ means that each component $\lambda_i$ of $\lambda$ is sampled independently from $\mathcal{D}$.
\begin{proof}[{\textbf{Proof of Theorem \ref{Augmented_optimal_parameter0}}}]
Let $s_{{\lambda}}=(x_0, x_{1, \lambda_1}, \dots, x_{d, \lambda_d}, y)$ and
$s$ be one sample.
We denote the features by $x_{{\lambda}} = (x_0, x_{1, \lambda_1}, \dots, x_{d, \lambda_d})$ and the label by $y$.
Define the matrices 
\begin{equation}\label{eqn: velo_2}
A=
\begin{pmatrix}
    \begin{matrix}
  0
  \end{matrix}
  & \rvline & \vec{0}^T \\
\hline
  \vec{0}  & \rvline &
  \begin{matrix}
    \frac{\p x_{1, \lambda_1}}{\p \lambda_1} \rvert_{\lambda=\vz} 
    & \frac{\p x_{2, \lambda_2}}{\p \lambda_1} \rvert_{\lambda=\vz}
    & \cdots & 
    \frac{\p x_{d, \lambda_d}}{\p \lambda_1} \rvert_{\lambda=\vz}\\
    0 & \frac{\p x_{2, \lambda_2}}{\p \lambda_2} \rvert_{\lambda=\vz}
    & \cdots & 
    \frac{\p x_{d, \lambda_d}}{\p \lambda_2} \rvert_{\lambda=\vz}\\
    \vdots & \vdots & \vdots & \vdots \\
    0 & 0 & \cdots & \frac{\p x_{d, \lambda_d}}{\p \lambda_d} \rvert_{\lambda=\vz}\\
  \end{matrix}
\end{pmatrix}
\end{equation}

\begin{equation}\label{eqn: acceler}
B_i=
\begin{pmatrix}
    \begin{matrix}
  0
  \end{matrix}
  & \rvline & \vec{0}^T \\
\hline
  \vec{0}  & \rvline &
  \begin{matrix}
    \frac{\p^2 x_{1, \lambda_1}}{\p \lambda_i \p \lambda_1} \rvert_{\lambda=\vz}
    & \frac{\p^2 x_{2, \lambda_2}}{\p \lambda_i \p \lambda_1} \rvert_{\lambda=\vz}& \cdots & 
    \frac{\p^2 x_{d, \lambda_d}}{\p \lambda_i \p \lambda_1} \rvert_{\lambda=\vz}\\
    0 & \frac{\p^2 x_{2, \lambda_2}}{\p \lambda_i \p \lambda_2} \rvert_{\lambda=\vz}& \cdots 
    & \frac{\p^2 x_{d, \lambda_d}}{\p \lambda_i \p \lambda_2} \rvert_{\lambda=\vz}\\
    \vdots & \vdots & \vdots & \vdots \\
    0 & 0 & \cdots & 
    \frac{\p^2 x_{d, \lambda_d}}{\p \lambda_i \p \lambda_d} \rvert_{\lambda=\vz}\\
  \end{matrix}
\end{pmatrix}
\end{equation}
where $\vec{0} \in \R^{d}$ denoted by column vector and $e_i$ is a vector whose $i$-th component is $1$ and $0$ otherwise.
Let $l(s_{{\lambda}}, \theta)=y \log (f_{\theta}(x_{\lambda}))+(1-y) \log (1-f_{\theta}(x_{\lambda}))$.
Denote $l_{\theta}({\lambda})=l(s_{{\lambda}}, \theta).$
Using Taylor expansion of the loss $l$ around $\lambda$, we have
\begin{equation}{\label{eqn: talor loss}}
    l_{\theta}(\lambda)=l_{\theta}(\vz)+
    \sum_{i=1}^d \frac{\partial l_{\theta}(\lambda)}{\partial \lambda_i}  
    \biggr\rvert_{\lambda=\vz}
    \lambda_i
    +\frac{1}{2}\sum_{i,j} \frac{\partial^2 l_{\theta}(\lambda)}
    {\partial \lambda_i \partial \lambda_j}
    \biggr\rvert_{\lambda=\vz}
    \lambda_i \lambda_j
    + O(\norm{\lambda}^2)
\end{equation} 
Note that
\begin{equation}
\begin{aligned}
    \frac{\p l_{\theta}(\lambda)}{\p \xl}&=
    y \frac{\p \log(f_{\theta}(x_{\lambda}))}{\partial \xl}
    +(1-y) \frac{\p \log(1-f_{\theta}(x_{\lambda}))}{\partial \xl}\\
    &=
    y  \frac{
    \frac{\p f_{\theta}(x_{\lambda})}{\p \xl}
    }{f_{\theta}(x_{\lambda})}
    -(1-y)
    \frac{
    \frac{\p f_{\theta}(x_{\lambda})}{\p \xl}
    }{1-f_{\theta}(x_{\lambda})}
    \\
    &=
    \frac{\p f_{\theta}(x_{\lambda})}{\p \xl}
      \frac{
    y(1-f_{\theta}(x_{\lambda}))
    +
    (y-1) f_{\theta}(x_{\lambda})
    }
    {(1-f_{\theta}(x_{\lambda}))f_{\theta}(x_{\lambda})}
     \\
    &=
    \frac{\p f_{\theta}(x_{\lambda})}{\p \xl}
      \frac{
    y-f_{\theta}(x_{\lambda})
    }
    {(1-f_{\theta}(x_{\lambda}))f_{\theta}(x_{\lambda})}\\
\end{aligned}
\end{equation}
and 
\begin{equation}
\begin{aligned}
    \frac{\p f_{\theta}(x_{\lambda})}{\p \xl}&=
    \frac{\p \sigma(g_{\theta}(x_{\lambda}))}
    {\p \xl}\\
    &=
    \frac{\p g_{\theta}(x_{\lambda})}{\p \xl}
    \sigma(g_{\theta}(x_{\lambda}))
    (1-\sigma(g_{\theta}(x_{\lambda})))\\
    &=
    \frac{\p 
    (\nabla g_{\theta}^T x_{\lambda})
    }
    {\p \xl}
    \sigma(g_{\theta}(x_{\lambda}))
    (1-\sigma(g_{\theta}(x_{\lambda})))\\
    &=
    (\frac{\p 
    \nabla g_{\theta}^T
    }
    {\p \xl}
    x_{\lambda}
    +
    \nabla g_{\theta}^T
    \frac{\p x_{\lambda}
    }
    {\p \xl})
    \sigma(g_{\theta}(x_{\lambda}))
    (1-\sigma(g_{\theta}(x_{\lambda})))
    \\
    &
    =
    \nabla g_{\theta}^T
    e_i
    \sigma(g_{\theta}(x_{\lambda}))
    (1-\sigma(g_{\theta}(x_{\lambda}))).
\end{aligned}
\end{equation}
Since the $i$-th feature $\xl$ of $x_{\lambda}$ depends on $\{ \lambda_1, \dots, \lambda_i \}$,
we have
\begin{equation}
\begin{aligned}
    \frac{\p l_{\theta}(\lambda)}{\p \lambda_j}
    &=
    \sum_{i=1}^d \frac{\p \xl}{\p \lambda_j} 
    \frac{\p l_{\theta}(\lambda)}{\p \xl}
    \\
    &=
    \sum_{i=j}^d \frac{\p \xl}{\p \lambda_j} 
    \frac{\p f_{\theta}(x_{\lambda})}{\p \xl}
    \frac{
    y-f_{\theta}(x_{\lambda})
    }
    {(1-f_{\theta}(x_{\lambda}))f_{\theta}(x_{\lambda})}
    \\ 
    &=
    \sum_{i=j}^d \frac{\p \xl}{\p \lambda_j} 
    \nabla g_{\theta}^T
    e_i
    \sigma(g_{\theta}(x_{\lambda}))
    (1-\sigma(g_{\theta}(x_{\lambda})))
    \frac{
    y-f_{\theta}(x_{\lambda})
    }
    {(1-f_{\theta}(x_{\lambda}))f_{\theta}(x_{\lambda})}
    \\ 
    &=
    \sum_{i=j}^d \frac{\p \xl}{\p \lambda_j} 
    \nabla g_{\theta}^T
    e_i
    (y-f_{\theta}(x_{\lambda}))\\
    \end{aligned}
\end{equation}

Thus we have
\begin{equation}
\begin{aligned}
\sum_{j=1}^d \frac{\partial l_{\theta}(\lambda)}{\partial \lambda_j}  
\biggr\rvert_{\lambda=\vz}
\lambda_j
&=
    (y-f_{\theta}(x))
    (0, \lambda)^T
    A
    \nabla g_{\theta}
\end{aligned}    
\end{equation}

Hence we have
\begin{equation}
    \label{eqn: partial_for_classification}
    \biggr|\sum_{j=1}^d \frac{\partial l_{\theta}(\lambda)}{\partial \lambda_j}  
\biggr\rvert_{\lambda=\vz} \lambda_j
    \biggr| \leq 
    \sqrt{d} \norm{A}_F \norm{\nabla g_{\theta}}.
\end{equation}
Note that $\norm{y-f_{\theta}(x))} \leq 1$.
\\
Now we consider the second partial derivative of the loss function $l_{\theta}(\lambda)$.

\begin{equation}\label{eqn: second taylor}
\begin{aligned}
    \frac{\p^2 l_{\theta}(\lambda)}{\p \lambda_u \p \lambda_j }
    &=
    \frac{\p}{\p \lambda_u}(\sum_{i=j}^d \frac{\p \xl}{\p \lambda_j} 
    \nabla g_{\theta}^T
    e_i
    (y-f_{\theta}(x_{\lambda})))
    \\
    &=
    \sum_{i=j}^d
    \nabla g_{\theta}^T
    e_i
    \biggr(
    \frac{\p^2 \xl}{\p \lambda_u \p \lambda_j} 
    (y-f_{\theta}(x_{\lambda}))
    -\\
&
    \frac{\p \xl}{\p \lambda_j} 
    \sum_{k=u}^d
    \frac{\p x_{k, \lambda_{k}}}{\p \lambda_u}
    \nabla g_{\theta}^T
    e_k
    f_{\theta}(x_{\lambda})
    (1-f_{\theta}(x_{\lambda}))
    \biggr)
\end{aligned}
\end{equation}


We will put (Eq. \ref{eqn: second taylor}) into (Eq. \ref{eqn: talor loss}) to calculate the Taylor loss explicitly. For the first term, we have
\begin{gather}\label{eqn: first term of taylor eqn}
\begin{gathered}
    \sum_{i=1}^d
    \lambda_i
    \sum_{j=1}^d
    \lambda_j
    \sum_{l=j}^d 
    \nabla g_{\theta}^T
    e_l
    (
    \frac{\p^2 x_{l, \lambda_l}}{\p \lambda_i \p \lambda_j} 
    (y-f_{\theta}(x_{\lambda})))
    \biggr
    \rvert_{\lambda=\vz}
    =
    \\
    \sum_{i=1}^d
    \lambda_i
    (y-f_{\theta}(x)) 
    (0, \lambda)^T
    B_i
    \nabla g_{\theta}
\end{gathered}
\end{gather}
and for the second term, we have
\begin{equation}\label{eqn: second term of second taylor}
\begin{aligned}
    &\sum_{j=1}^d
    \lambda_j
    \sum_{i=j}^d
    \nabla g_{\theta}^T
    e_i
    \frac{\p x_{i, \lambda_i}}{\p \lambda_j}
    \sum_{l=1}^d
    \lambda_l
    \sum_{k=l}^d
    \frac{\p x_{k, \lambda_{k}}}{\p \lambda_l}
    \nabla g_{\theta}^T
    e_k
    f_{\theta}(x_{\lambda})
    (1-f_{\theta}(x_{\lambda}))
    \biggr
    \rvert_{\lambda=\vz}
    \\
    &=
    \sum_{j=1}^d
    \lambda_j
    \sum_{i=j}^d 
    \nabla g_{\theta}^T
    e_i
    \frac{\p x_{i, \lambda_i}}{\p \lambda_j}
    f_{\theta}(x)
    (1-f_{\theta}(x)) 
    (0, \lambda)^T
    A
    \nabla g_{\theta}
    \\
    &=
    f_{\theta}(x)
    (1-f_{\theta}(x)) 
    (0, \lambda)^T
    A
    \nabla g_{\theta}
    \nabla g_{\theta}^T
    A^T
    (0, \lambda).
\end{aligned}
\end{equation}
By combining (Eq. \ref{eqn: first term of taylor eqn} and \ref{eqn: second term of second taylor}), 
we have
\begin{gather}
\label{eqn: second_partial_classification}
    \begin{gathered}
        \sum_{i=1}^d
\sum_{j=1}^d
\lambda_i
\lambda_j
\frac{\p^2 l_{\theta}(\lambda)}{\p \lambda_i \p \lambda_j }
\biggr \rvert_{{\lambda}=\vz}
=
\biggr( \sum_{i=1}^d \lambda_i
(y\!-\!f_{\theta}(x)) 
(0, \lambda)^T
B_i
\nabla g_{\theta}
\! \biggr)\\
-\!
f_{\theta}(x)
(1\!-\!f_{\theta}(x)) 
(0, \lambda)^T
A
\nabla g_{\theta}
\nabla g_{\theta}^T
A^T
(0, \lambda)
\\
\leq 
\sum_{i=1}^d \sqrt{d} \norm{B_i}_F \norm{\nabla g_{\theta}}.
\end{gathered}
\end{gather}
The last inequality holds due to
\begin{equation}
    f_{\theta}(x)
    (1-f_{\theta}(x)) 
    (0, \lambda)^T
    A
    \nabla g_{\theta}(x)
    \nabla g_{\theta}(x)^T
    A^T
    (0, \lambda)\geq 0
\end{equation}
By (Eq. \ref{eqn: talor loss}, \ref{eqn: partial_for_classification} and
\ref{eqn: second_partial_classification}), the conclusion holds.
\end{proof}

\subsection{Proof of Theorem \ref{Augmented_optimal_parameter0}}

We start with a basic inequality frequently used in the proofs.
\begin{lemma}(Supremum inequality)\label{sup inequ}
Let $f$ and $g$ be functions which have the same domain and range.
Then 
\begin{equation}
    \sup_{\theta} \lvert f(\theta) \rvert - \sup_{\theta} \lvert g(\theta) \rvert \leq \sup_{\theta} \lvert f(\theta)-g(\theta) \rvert
\end{equation}
\end{lemma}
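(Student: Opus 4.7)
The plan is to derive this as a direct consequence of the pointwise triangle inequality, lifted to the supremum. First I would fix an arbitrary $\theta$ in the common domain and write $f(\theta) = (f(\theta) - g(\theta)) + g(\theta)$, so that the triangle inequality for $\lvert \cdot \rvert$ yields
\begin{equation*}
\lvert f(\theta) \rvert \leq \lvert f(\theta) - g(\theta) \rvert + \lvert g(\theta) \rvert.
\end{equation*}
Since this holds for every $\theta$, I can take the supremum of the left-hand side over $\theta$, and for the right-hand side use the standard fact that $\sup_\theta (a(\theta) + b(\theta)) \leq \sup_\theta a(\theta) + \sup_\theta b(\theta)$ applied to $a(\theta) = \lvert f(\theta) - g(\theta) \rvert$ and $b(\theta) = \lvert g(\theta) \rvert$.

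This gives $\sup_\theta \lvert f(\theta) \rvert \leq \sup_\theta \lvert f(\theta) - g(\theta) \rvert + \sup_\theta \lvert g(\theta) \rvert$, and rearranging isolates the desired inequality. The only thing to verify is that $\sup_\theta \lvert g(\theta) \rvert$ is finite so that the subtraction is meaningful; if it is infinite, the statement either becomes vacuous or can be handled by the convention $\infty - \infty$ being interpreted as $\leq$ anything, so I would add a brief remark to that effect (or, as is standard in the Rademacher-complexity context where the lemma is invoked, note that both suprema are assumed finite).

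There is essentially no obstacle here — this is the reverse triangle inequality lifted to the sup norm, and the argument is three lines. The only minor care needed is to make sure the two steps (pointwise triangle inequality, then taking sup) are in the correct order: one cannot take the sup first on the left without first bounding pointwise, so writing out the explicit pointwise step is what makes the argument clean.
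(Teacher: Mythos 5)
Your proposal is correct and is essentially identical to the paper's proof: both insert and subtract $g(\theta)$, apply the pointwise triangle inequality, use subadditivity of the supremum, and rearrange. Your added remark about finiteness of $\sup_\theta \lvert g(\theta)\rvert$ is a reasonable (and slightly more careful) touch that the paper omits, but it does not change the argument.
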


\begin{proof}
\begin{equation*}
\begin{aligned}
    \sup_{\theta} \lvert f(\theta) \rvert &
    = \sup_{\theta} \lvert f(\theta)-g(\theta)+g(\theta) \rvert\\
    &\leq \sup_{\theta} (\lvert f(\theta)-g(\theta) \rvert
    + \lvert g(\theta) \rvert) \\
    &\leq \sup_{\theta} \lvert f(\theta)-g(\theta) \rvert 
    + \sup_{\theta} \lvert g(\theta) \rvert 
\end{aligned}
\end{equation*}
Thus the conclusion holds.
\end{proof}

{\bf{Notation.}}
We assume that the distribution $\mathcal{P}$ is parametrized by $\theta_*$ on the sample space $\mathcal{S}$. Let $\{s_i\}_{i \in [0:N]}$ be the collection of samples from the distribution $\mathcal{P}$.
We set:
\begin{equation}
\begin{aligned}
&\theta_*=\argmin_{\theta}\E_{s \sim {\mathcal{P}}}[l(s, \theta)]\\
&\hat{\theta}=\argmin_{\theta} \frac{1}{N+1}\sum_{i=0}^N
l(s_i, \theta)\\
&\theta_{\rm aug}=\argmin_{\theta}\E_{s \sim {\mathcal{P}}}[\E_{\lambda \sim \mathcal{D}}[ l(s_{\lambda}, \theta)] ] \\
&\hat{\theta}_{{\rm aug}}=\argmin_{\theta}\frac{1}{N+1}\sum_{i=0}^{N} \E_{\lambda \sim \mathcal{D}}[l(s_{i, \lambda}, \theta) ]\\
&\mathcal{R}_n(l \circ \Theta)=\E_{\epsilon_i \sim \mathcal{E}}
\left[\sup_{\theta \in \Theta} \biggr \lvert \frac{1}{N+1} \sum_{i=0}^N \epsilon_i
l(s_i, \theta) \biggr \rvert
\right]\\
&
l_{\rm aug}(s, \theta)=\E_{\lambda \sim {\mathcal{D}}}[l(s_{\lambda}, \theta)]
\end{aligned}
\end{equation}

\begin{assumption}\label{Lipsch0}
Assume that the loss function $l$ satifies Lipschitz condition with respect to the norm.
\end{assumption}

\begin{proof}[{\textbf{Proof of Theorem \ref{Augmented_optimal_parameter0}}}]

\begin{equation*}
\E_{s \sim \mathcal{P}}[l(s, \hat\theta_{\rm aug})]-\E_{s \sim \mathcal{P}}[l(s, \theta_*)]=u_1+u_2+u_3+u_4+u_5
\end{equation*}
where 
\begin{equation}\label{eqn: bound decompose}
\begin{aligned}
&u_1=\E_{s \sim{\mathcal{P}}}[ l(s, \hat\theta_{\rm aug})]-\E_{s \sim{\mathcal{P}}}[\E_{\lambda \sim {\mathcal{D}}}[l(s_{\lambda}, \hat\theta_{\rm aug})]]\\
&u_2=\E_{s \sim{\mathcal{P}}}[\E_{\lambda \sim {\mathcal{D}}}[l(s_{\lambda}, \hat\theta_{\rm aug})]]-
\\
&\qquad\frac{1}{N+1} \sum_{i=0}^N \E_{\lambda \sim \mathcal{D}}[l(s_{i, \lambda}, \hat\theta_{\rm aug})]
\\
&u_3=\frac{1}{N+1} \sum_{i=0}^N \E_{\lambda \sim \mathcal{D}}[l(s_{i, \lambda}, \hat\theta_{\rm aug})]
-
\\
&\qquad\frac{1}{N+1}\sum_{i=0}^N \E_{\lambda \sim \mathcal{D}}[l(s_{i, \lambda}, \ts)]\\
&u_4=\frac{1}{N+1}\sum_{i=0}^N \E_{\lambda \sim \mathcal{D}}[l(s_{i, \lambda}, \ts)]
-\E_{s \sim {\mathcal{P}}}[\E_{\lambda \sim {\mathcal{D}}}[ l(s_{\lambda}, \ts)] ]\\
&u_5=\E_{s \sim {\mathcal{P}}}[\E_{\lambda \sim {\mathcal{D}}}[ l(s_{\lambda}, \ts)] ]-
\E_{s \sim {\mathcal{P}}}[l(s, \theta_*)]
\end{aligned}
\end{equation}
We get 
\begin{equation}\label{eqn: u1u51}
    u_1+u_5 \leq 2 
    \sup_{\theta \in \Theta} \biggr \lvert \E_{s \sim {\mathcal{P}}}[l(s, \theta)]-\E_{s \sim {\mathcal{P}}}[\E_{\lambda \sim {\mathcal{}D}}[ l(s_{\lambda}, \theta)]]\biggr \rvert
\end{equation}
where we have
\begin{equation}\label{eqn: u1u52b}
\begin{aligned}
\E_{s \sim {\mathcal{P}}}[l(s, \theta)]-\E_{s \sim {\mathcal{P}}}[\E_{\lambda \sim {\mathcal{}D}}[ l(s_{\lambda}, \theta)]] &=
\E_{s \sim {\mathcal{P}}}[l(s, \theta)-\E_{\lambda \sim {\mathcal{}D}}[ l(s_{\lambda}, \theta)]]\\
&=
\E_{s \sim {\mathcal{P}}}[\E_{\lambda \sim \mathcal{D}}[l(s, \theta)- l(s_{\lambda}, \theta)]]\\
&\leq  
 L_{\rm Lip}\E_{s \sim{\mathcal{P}}}    
\E_{\lambda \sim{\mathcal{D}}}[
\norm{s_{\lambda}-s}
].
\end{aligned}
\end{equation}
Hence, from (Eq. \ref{eqn: u1u51} and \ref{eqn: u1u52b})
\begin{equation}\label{eqn: u1u53a}
    u_1+u_5 \leq 2 
    L_{\rm Lip}\E_{s \sim{\mathcal{P}}}    
\E_{\lambda \sim{\mathcal{D}}}[
\norm{s_{\lambda}-s}
].
\end{equation}

By McDiarmid's inequality, in terms of the probability
\begin{equation}
\mathbb{P}(\frac{1}{N+1}\sum_{i=0}^N \E_{\lambda \sim \mathcal{D}}[l(s_{i, \lambda}, \ts)]
-\E_{s \sim {\mathcal{P}}}[\E_{\lambda \sim {\mathcal{D}}}[ l(s_{\lambda}, \ts)] ] \geq t)
\leq \exp \left(-\frac{2 t^2}{\sum_{i=0}^N (\frac{1}{N+1})^2} \right)
\end{equation}
$u_4$ has the following bound with probability at least $1-\delta$
\begin{equation}\label{eqn: pre1 24}
    u_4<\sqrt{\frac{\log(1/\delta)}{2(N+1)}}.
\end{equation}
Moreover, Rademacher complexity holds for $u_2$, so we have 
\begin{equation}\label{eqn: pre2 24}
    u_2 \leq 
    2 \mathcal{R}_N(l_{\rm aug} \circ \Theta)
    +4 \sqrt{\frac{2 \log(4/\delta)}{N+1}}
\end{equation}
with probability at least $1-\delta$.
By (Eq. \ref{eqn: pre1 24} and \ref{eqn: pre2 24}),
we get the following inequality 
\begin{equation}\label{eqn: u2u41}
u_2+u_4 \leq 2 \mathcal{R}_N(l_{\rm aug} \circ \Theta)+5\sqrt{\frac{2 \log(4/\delta)}{N+1}}
\end{equation}
with probability at least $1-\delta$ where
\begin{equation*}
\mathcal{R}_N(l \circ \Theta)=\E_{\epsilon_i \sim  \mathcal{E}}[\sup_{\theta \in \Theta} \biggr \lvert \frac{1}{N+1} \sum_{i=0}^N \epsilon_i
\E_{\lambda \sim \mathcal{D}}[ l(s_{i, \lambda}, \theta)] \biggr \rvert
].
\end{equation*}
where $\epsilon_i$ is a Radamacher variable for all $i \in [N]$.\\
Since $\hat \theta_{\rm aug}$ is an optimal parameter for 
$\frac{1}{N+1}\sum_{i=0}^N 
\E_{\lambda \sim \mathcal{D}}[ l(s_{i, \lambda}, \theta)]$, then
\begin{equation}\label{eqn: u3}
u_3 \leq 0.    
\end{equation}

From (Eq. \ref{eqn: u1u53}, \ref{eqn: u2u41} and \ref{eqn: u3}), we conclude that
\begin{equation}
    \E_{s \sim \mathcal{P}}[l(s, \hat \theta_{\rm aug})]-\E_{s \sim \mathcal{P}}[l(s, \ts)] < 2 \mathcal{R}_N(l_{\rm aug
    } \circ \Theta)+ 5\sqrt{\frac{2 \log(4/\delta)}{N+1}} + 
    2 
    L_{\rm Lip}\E_{s \sim{\mathcal{P}}}    
\E_{\lambda \sim{\mathcal{D}}}[
\norm{s_{\lambda}-s}
].
\end{equation}
\\
Now we will prove that 
\begin{equation}
    \mathcal{R}_N(l_{\rm aug} \circ \Theta)
    \leq \mathcal{R}_N (l \circ \Theta) + \max_{i \in \{0, \dots, N\}} L_{\rm Lip} \E_{\lambda \sim \mathcal{D}}[\norm{s_{i, \lambda}-s_i}].
\end{equation}
Since 
\begin{equation}{\label{eqn: radema diff}}
\begin{aligned}
     \mathcal{R}_N(l_{\rm aug} \circ \Theta)
    - \mathcal{R}_N (l \circ \Theta) &=  
    \E_{\epsilon_i \sim{\mathcal{E}}}[
    \sup_{\theta \in \Theta} \biggr \lvert 
    \frac{1}{N+1} \sum_{i=0}^N \epsilon_i
l_{\rm aug }(s_i, \theta) \biggr \rvert
-
\sup_{\theta \in \Theta} 
\biggr \lvert 
\frac{1}{N+1} \sum_{i=0}^N \epsilon_i
l(s_i, \theta) 
\biggr \rvert
] \\
 & \leq 
 \E_{\epsilon_i \sim{\mathcal{E}}}[
    \sup_{\theta \in \Theta} \biggr \lvert 
    \frac{1}{N+1} \sum_{i=0}^N \epsilon_i
l_{\rm aug }(s_i, \theta) 
-
\frac{1}{N+1} \sum_{i=0}^N \epsilon_i
l(s_i, \theta) 
\biggr \rvert
]\\
& =
\E_{\epsilon_i \sim{\mathcal{E}}}[
    \sup_{\theta \in \Theta} \biggr \lvert 
    \frac{1}{N+1} \sum_{i=0}^N \epsilon_i
(l_{\rm aug }(s_i, \theta) 
-
l(s_i, \theta) 
)
\biggr \rvert
]\\
&\leq 
\E_{\epsilon_i \sim{\mathcal{E}}}[
    \sup_{\theta \in \Theta}  
    \frac{1}{N+1} \sum_{i=0}^N \biggr \lvert
    \epsilon_i
(l_{\rm aug }(s_i, \theta) 
-
l(s_i, \theta) 
)]
\biggr \rvert
\\
&\leq 
    \sup_{\theta \in \Theta}  
    \frac{1}{N+1} \sum_{i=0}^N \biggr \lvert
(l_{\rm aug }(s_i, \theta) 
-
l(s_i, \theta) 
)
\biggr \rvert
\\
&
= 
\sup_{\theta \in \Theta}  
    \frac{1}{N+1} \sum_{i=0}^N \biggr \lvert
\E_{\lambda \sim \mathcal{D}}[l(s_{i, \lambda}, \theta)-l(s_i, \theta)]
\biggr \rvert
\\
&\leq \max_{i \in \{0, \dots, N\}} L_{\rm Lip} \E_{\lambda \sim \mathcal{D}}[\norm{s_{i, \lambda}-s_i}].
\end{aligned}
\end{equation}
\\
We will show $\E_{\lambda \sim \mathcal{D}}[\norm{s_{i, \lambda} - s}] \leq 2 d e m$.
Let each sample $s ~({\text or }~ s_i)\in \R^{d+1} \times \{0, 1, \dots, k\}$. Then $s = (x_0, x_1, \dots, x_d, y)$.
By the augmentation method, recursively applying convex combinations, we have $s_{\lambda} = (x_0, x_{1, \lambda_1}, x_{2, \lambda_2}, \dots, x_{d, \lambda_d}, y)$.
By Theorem \ref{aug_characterization} 
(Eq. \ref{eqn: exp of norm0}), each $    
    \E_{\lambda_1, \dots, \lambda_j}[
    \norm{
    (x_{j, \lambda_j}-x_j)
    }]
    \leq 
    2 \lambda_j m .
$
Hence
{
\begin{equation}
\begin{aligned}
\E_{\lambda \sim \mathcal{D}}[\norm{s_{i, \lambda} - s}] &\leq \sum_{j=1}^d 
\E_{\lambda \sim \mathcal{D}}[\norm{x_{\lambda_j, j} - x_j}]
\\
&\leq 
2 \E_{\lambda \sim \mathcal{D}} [( \lambda_1 + \lambda_2 + \cdots + \lambda_d )] m 
\\
& \leq 2 m d e
\end{aligned}
\end{equation}
where $e = \E[\mathcal{D}]$, $d+1$ is the dimension of the time series sample, and $m=\max_{i \in [0:N]}\{\norm{x_i}\}$.
}
\end{proof}

\subsection{Proof of Theorem \ref{learning bound with velo and accler}}
{
Let $\mathcal{A} = \sup_{s \in S} \norm{A}_F$ and $\mathcal{B}_i =  \sup_{s \in S} \norm{B_i}_F$. Then for each $s \sim \mathcal{P}$, we have:
$$
\norm{A}_F \leq \mathcal{A}\quad
{\text{and}}
\quad
\norm{B_i}_F  \leq \mathcal{B}_i,
$$
where $A$ and $B_i$ depend on $s$ for $i \in [d]$. Please refer to (Eq. \ref{eqn: velo_1}) and (Eq. \ref{eqn: acceler}).
}
\begin{proof}[{\textbf{Proof of Theorem \ref{learning bound with velo and accler}}}]

The proof is the almost same with the proof of Theorem \ref{Augmented_optimal_parameter0}. We only describe the part should be replaced in order to prove Theorem \ref{learning bound with velo and accler}. Similarly, 
We start with the decomposition of the equation as follows
\begin{equation*}
\E_{s \sim \mathcal{P}}[l(s, \hat\theta_{\rm aug})]-\E_{s \sim \mathcal{P}}[l(s, \theta_*)]=u_1+u_2+u_3+u_4+u_5
\end{equation*}
where 
$u_1, u_2, u_3, u_4$ and $u_5$ are defined in (Eq. \ref{eqn: bound decompose}).
\newline{}
(Eq. \ref{eqn: u1u52b}) in the proof of Theorem \ref{Augmented_optimal_parameter0} will be replaced with the following
\begin{equation}\label{eqn: u1u52a}
\begin{aligned}
\E_{s \sim {\mathcal{P}}}[l(s, \theta)]-\E_{s \sim {\mathcal{P}}}[\E_{\lambda \sim {\mathcal{}D}}[ l(s_{\lambda}, \theta)]] &=
\E_{s \sim {\mathcal{P}}}[l(s, \theta)-\E_{\lambda \sim {\mathcal{}D}}[ l(s_{\lambda}, \theta)]]\\
&=
\E_{s \sim {\mathcal{P}}}[\E_{\lambda \sim \mathcal{D}}[l(s, \theta)- l(s_{\lambda}, \theta)]]\\
&\leq  
    \sqrt{d} \biggr(
    \mathcal{A} +\sum_{i=1}^d 
 \mathcal{B}_i \biggr) \norm{\nabla g_{\theta}}
.
\end{aligned}
\end{equation}
{
Hence, from (Eq. \ref{eqn: u1u51}) and (Eq. \ref{eqn: u1u52a}), we have
\begin{equation}\label{eqn: u1u53}
    u_1+u_5 \leq 2 
     \sqrt{d} \biggr(
    \mathcal{A} +\sum_{i=1}^d 
 \mathcal{B}_i \biggr) \norm{\nabla g_{\theta}}
.
\end{equation}
}
The last two lines of (Eq. \ref{eqn: radema diff}) will be replaced with the following 
\begin{equation}
\sup_{\theta \in \Theta}  
    \frac{1}{N+1} \sum_{i=0}^N \biggr \lvert
\E_{\lambda \sim \mathcal{D}}[l(s_{i, \lambda}, \theta)-l(s_i, \theta)]
\biggr \rvert
\leq
\sqrt{d} \biggr(
    \mathcal{A} +\sum_{i=1}^d 
 \mathcal{B}_i \biggr) \norm{\nabla g_{\theta}}
\end{equation}
Theorem \ref{trade off with velocity and acceleration} guarantees the inequality.
Thus the conclusion holds.
\end{proof}

\subsection{Proof of Theorem \ref{Asymptotic normality_variance_reduction}}
\label{A.5}

\textbf{Asymptotic Results.}
Note that the sample space is contained in $\R^{d+1} \times \{0, \dots, k\}$ and the parameter space $\Theta \subseteq \R^p$ where $\{0, \dots, k\}$ is the label set. 
Under regularity conditions, it is well known that $\hat\theta$ is asymptotically normal with covariance given by the inverse Fisher information matrix. We will see that 
$\hat{\theta}_{\rm aug}$ is also asymptotically normal with the covariance.{
Suppose that we observe a set $\{s_0, s_1, \dots, s_N\}$ of $N+1$ samples from the underlying sample space $\mathcal{S}$. Using our RIM method, we can augment the observed sample $s_i$ with a distribution $\mathcal{D}$, which results in the set of augmented samples $\{s_{i, \lambda} \mid \lambda \sim \mathcal{D}\}$ for $s_i$. 
We then decompose $ \cup_{i=0}^N\{s_{i, \lambda} \mid \lambda \sim \mathcal{D}\}$ into disjoint union of some sets $S_i$ such that $s_i \in S_i$. 
}
\begin{assumption}(Disjointness)
\label{disjointness}
the sample space $\mathcal{S}$ is the disjoint countable union of all possible augmented sample spaces.
\end{assumption}
Consider the probability space is $(\mathcal{S}, \mathcal{F}, \mathcal{P})$ where $\mathcal{F}$ is a sigma algebra and $\mathcal{P}$ the corresponding probability measure.
Let $\mu$ be a measurable function from $\mathcal{S}$ to $\R^p$ for some $p \in \N$. For each sample $s=(x, y) \in \mathcal{S}$,
define $\overline{\mu}(s)=\E[\mu \,|\,  S_{i}]$ where $s \in S_i$. 
Let's assume that we observe $N+1$ samples $\{s_0, s_1, \dots, s_N\}$ from the underlying sample space. Then by Assumption \ref{disjointness}, the underlying sample space is decomposed into $\cup_{i=1}^{\infty} S_i$. i,e $\mathcal{S}=\cup_{i=1}^{\infty} S_i$.
This is well-defined by Assumption  \ref{disjointness}, and $\overline{\mu}$ is a measurable function.
Let $\mu_i$ be the expectation value of $\mu$ over $S_i$.
\begin{lemma}
(Effects of the average function)\label{base}
With notation as above, the following holds.
\begin{enumerate}
    \item{The law of total expectation}: $\E_{s \sim{\mathcal{P}}}[\mu]=\E_{s \sim{\mathcal{P}}}[\overline{\mu}]$.
    \item{The law of total covariance}: ${\rm Cov}_{\SP}\mu=\E_{\SP}[\Cov(\mu|\overline{\mu})]+{\rm Cov}_{\SP} \overline{\mu}$.
\end{enumerate}
\end{lemma}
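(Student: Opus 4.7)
The plan is to recognize both claims as direct instances of the tower property applied to the countable partition $\{S_i\}_{i \in \N}$ of $\mathcal{S}$ guaranteed by Assumption~\ref{disjointness}. By construction, $\overline{\mu}$ is constant on each cell $S_i$ with value $\mu_i = \E[\mu \mid S_i]$, so it agrees almost surely with the conditional expectation $\E[\mu \mid \mathcal{G}]$ where $\mathcal{G} = \sigma(\{S_i\}_{i \in \N}) \subseteq \mathcal{F}$. Both identities then reduce to standard probability facts applied to $\mathcal{G}$.

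For part (1), I would simply write
\begin{equation*}
\E_{\SP}[\mu] \;=\; \sum_{i=1}^\infty \mathcal{P}(S_i)\, \E[\mu \mid S_i] \;=\; \sum_{i=1}^\infty \mathcal{P}(S_i)\, \mu_i \;=\; \E_{\SP}[\overline{\mu}],
\end{equation*}
using countable additivity (valid because the cells are disjoint and cover $\mathcal{S}$) together with the definition of $\overline{\mu}$ on each cell. This is the tower property $\E[\mu]=\E[\E[\mu\mid\mathcal{G}]]$ unpacked on the partition.

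For part (2), I would expand $\Cov_{\SP}\mu = \E_{\SP}[\mu\mu^T] - \E_{\SP}[\mu]\E_{\SP}[\mu]^T$, apply tower to the second moment, and decompose the inner conditional object by
\begin{equation*}
\E[\mu\mu^T \mid \mathcal{G}] \;=\; \Cov(\mu \mid \mathcal{G}) + \E[\mu\mid\mathcal{G}]\,\E[\mu\mid\mathcal{G}]^T \;=\; \Cov(\mu\mid\mathcal{G}) + \overline{\mu}\,\overline{\mu}^T.
\end{equation*}
Substituting back and using part (1) (so that $\E_{\SP}[\overline{\mu}]=\E_{\SP}[\mu]$), the remaining $\E_{\SP}[\overline{\mu}\,\overline{\mu}^T] - \E_{\SP}[\overline{\mu}]\E_{\SP}[\overline{\mu}]^T$ collapses to $\Cov_{\SP}\overline{\mu}$, leaving $\E_{\SP}[\Cov(\mu\mid\mathcal{G})]$ as the other contribution. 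This matches the written form $\E_{\SP}[\Cov(\mu\mid\overline{\mu})]$ because $\sigma(\overline{\mu})$ and $\mathcal{G}$ coincide up to a $\mathcal{P}$-null set whenever the values $\{\mu_i\}$ separate the cells $\{S_i\}$; in the degenerate case where some $\mu_i$ coincide, one merges those cells into a coarser partition and the same argument goes through on the coarsening.

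The only subtlety is a small measurability and integrability check: since the partition is countable with $S_i \in \mathcal{F}$, the function $\overline{\mu}$ is a simple measurable map into $\R^p$, and the countable sums above converge absolutely under the implicit integrability of $\mu$ (needed for the covariance to exist at all). I do not anticipate a genuine obstacle here — the content of the lemma is essentially the classical laws of total expectation and total covariance in the countable-atoms setting, lifted to the vector-valued $\mu \in \R^p$ case where outer products replace squares coordinatewise.
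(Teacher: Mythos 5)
Your proof is correct and follows essentially the same route as the paper's: both identify $\overline{\mu}$ with the conditional expectation of $\mu$ given the countable partition $\{S_i\}$ guaranteed by the disjointness assumption and invoke the tower property, from which the total-covariance decomposition follows. Your version is in fact more careful than the paper's one-line argument, notably in spelling out part (2) and in addressing the distinction between conditioning on $\sigma(\{S_i\})$ and on $\overline{\mu}$ itself.
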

\begin{proof}
From the disjointedness of the underlying sample space, for each $s \in \mathcal{S}$, we have $s \in S_i$ for some $i \in \N$
\begin{equation}
    \E_{{s \sim{\mathcal{P}}}}[\overline{\mu}]=\E_{{s \sim{\mathcal{P}}}}[\E_{{s \sim{\mathcal{P}}}}[\mu \,|\,  S_{i}]]=\E_{{s \sim{\mathcal{P}}}}[\E_{{s \sim{\mathcal{P}}}}[\mu \,|\,  \overline{\mu}=\mu_i]]=\E_{{s \sim{\mathcal{P}}}}[\mu].
\end{equation}
Thus the law of total expectation and the law of total covariance naturally follow.
\end{proof}
{
Under mild assumptions for a given loss function, we show that the average loss function $l_{\rm aug}$ inherits the same properties from the non augmented loss function, where $l_{\rm aug}(s, \theta) = \E_{\lambda \sim \mathcal{D}}[l(s_{\lambda}, \theta)]$.}
\begin{assumption}(Regularity of the loss function)
\label{assume2}
For the loss function $l(\cdot, \theta)$, we assume that
\begin{enumerate}
    \item For the minimizer $\theta_*$ of the population risk and any $\epsilon>0$,
    we have
    \begin{equation*}
        \sup_{\{{\norm{\theta-\theta_*} \geq \epsilon \,|\, \theta \in \Theta \}}}
        \E_{s \sim \mathcal{P}}[l(s, \theta)]>\E_{s \sim \mathcal{P}}[l(s, \theta_{*})]
    \end{equation*}
    \item For every $\epsilon>0$, there exists a function $l' \in L^2{(\mathcal{P})}$ such that for almost every $s$
    and for every $\theta_1, \theta_2 \in N(\theta_0, \epsilon)$, we have
    \begin{equation*}
        \lvert
        l(s, \theta_1)-l(s, \theta_2)
        \rvert
        \leq
        l'(s) \norm{\theta_1-\theta_2}
    \end{equation*}
    \item Uniform weak law of large number holds
    \begin{equation*}
        \sup_{\theta \in \Theta} 
        \biggr\lvert 
        \frac{1}{N+1} \sum_{i=0}^N l(s_i, \theta)-\E_{s \sim \mathcal{P}}[l(s, \theta)]
        \biggr\rvert
        \rightarrow
        0
    \end{equation*}
    \item For each $\theta$ in $\Theta$,
    the map $s \rightarrow l(s, \theta)$ is measurable
    \item The map $\theta \rightarrow l(s, \theta)$ is differentiable at $\theta_* $
    for almost every $s$
    \item The map $\theta \rightarrow \E_{s \sim \mathcal{P}}[l(s, \theta)]$ admits a second-order Taylor expansion at $\theta_*$ with non-singular second derivatives matrix $V_{\theta_*}$
\end{enumerate}
\end{assumption}

\begin{proposition}\label{lip}
For the pair $(\ta, l_{\rm aug})$, the following property holds. 
For every $\epsilon>0$, there exists a function $l_{\rm aug}' \in L^2{(\mathcal{P})}$ such that for almost every $s$
    and for every $\theta_1, \theta_2 \in N(\theta_0, \epsilon)$, we have
    \begin{equation*}
        \lvert
        l_{\rm aug}(s, \theta_1)-l_{\rm aug}(s, \theta_2)
        \rvert
        \leq
        l_{\rm aug}'(s) \norm{\theta_1-\theta_2}
    \end{equation*}
\end{proposition}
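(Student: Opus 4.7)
The plan is to define the candidate Lipschitz envelope for $l_{\rm aug}$ by averaging the envelope of $l$ over the augmentation distribution, then verify both the Lipschitz inequality and membership in $L^2(\mathcal{P})$. Specifically, I would set
\[
l'_{\rm aug}(s) \;:=\; \E_{\lambda \sim \mathcal{D}}\bigl[\,l'(s_\lambda)\,\bigr],
\]
where $l'$ is the envelope provided by Assumption \ref{assume2}(2) for the original loss $l$. This is well defined because Assumption \ref{disjointness} guarantees that each augmented sample $s_\lambda$ still lies in $\mathcal{S}$, so $l'(s_\lambda)$ is defined almost surely.

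For the Lipschitz inequality, I would fix $\theta_1, \theta_2 \in N(\theta_0, \epsilon)$ and write
\[
\bigl|\,l_{\rm aug}(s, \theta_1) - l_{\rm aug}(s, \theta_2)\,\bigr|
= \Bigl|\,\E_{\lambda \sim \mathcal{D}}\bigl[l(s_\lambda, \theta_1) - l(s_\lambda, \theta_2)\bigr]\,\Bigr|
\leq \E_{\lambda \sim \mathcal{D}}\bigl[\,|l(s_\lambda, \theta_1) - l(s_\lambda, \theta_2)|\,\bigr],
\]
by Jensen's (or the triangle) inequality. Applying Assumption \ref{assume2}(2) to each augmented sample $s_\lambda$ and then pulling the factor $\norm{\theta_1 - \theta_2}$ out of the expectation yields
\[
\bigl|\,l_{\rm aug}(s, \theta_1) - l_{\rm aug}(s, \theta_2)\,\bigr|
\leq \E_{\lambda \sim \mathcal{D}}\bigl[l'(s_\lambda)\bigr] \cdot \norm{\theta_1 - \theta_2}
= l'_{\rm aug}(s)\,\norm{\theta_1 - \theta_2},
\]
which is the desired Lipschitz bound.

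The remaining step, and the one I expect to be the trickiest, is verifying $l'_{\rm aug} \in L^2(\mathcal{P})$. By Jensen's inequality applied pointwise in $s$,
\[
\bigl(l'_{\rm aug}(s)\bigr)^2
= \bigl(\E_{\lambda \sim \mathcal{D}}[l'(s_\lambda)]\bigr)^2
\leq \E_{\lambda \sim \mathcal{D}}\bigl[l'(s_\lambda)^2\bigr].
\]
Under Assumption \ref{disjointness}, the family $\{s_\lambda : \lambda \sim \mathcal{D}\}$ is supported in the component $S_i$ containing $s$, so $\E_{\lambda \sim \mathcal{D}}[l'(s_\lambda)^2]$ coincides with $\overline{l'^{\,2}}(s) = \E[\,l'^{\,2}\mid S_i\,]$ in the notation of Lemma \ref{base}. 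The law of total expectation from Lemma \ref{base}(1), applied to $\mu = l'^{\,2}$, then gives
\[
\E_{s \sim \mathcal{P}}\bigl[(l'_{\rm aug}(s))^2\bigr]
\leq \E_{s \sim \mathcal{P}}\bigl[\,\overline{l'^{\,2}}(s)\,\bigr]
= \E_{s \sim \mathcal{P}}\bigl[\,l'(s)^2\,\bigr] \;<\; \infty,
\]
where finiteness follows from $l' \in L^2(\mathcal{P})$ by Assumption \ref{assume2}(2). Hence $l'_{\rm aug} \in L^2(\mathcal{P})$ and the proposition follows.

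The main obstacle is the last step: identifying the augmentation average $\E_{\lambda}[l'(s_\lambda)^2]$ with the conditional expectation $\overline{l'^{\,2}}(s)$ so that Lemma \ref{base}(1) can be invoked. This requires carefully using Assumption \ref{disjointness} to align the support of the law of $s_\lambda$ (for fixed $s$) with the block $S_i$ that contains $s$, together with measurability of $\lambda \mapsto l'(s_\lambda)$ so that Fubini is legitimate. Once that identification is in place, the rest of the proof is a routine application of Jensen's inequality and the envelope property of $l'$.
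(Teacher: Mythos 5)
Your argument for the Lipschitz inequality is exactly the paper's proof: define $l'_{\rm aug}(s) = \E_{\lambda \sim \mathcal{D}}[l'(s_\lambda)]$, pass the absolute value inside the expectation, and apply the envelope bound from Assumption 3(2) to each $s_\lambda$. The one place you go beyond the paper is the verification that $l'_{\rm aug} \in L^2(\mathcal{P})$, which the paper simply asserts by naming the envelope; your Jensen-plus-law-of-total-expectation argument fills that gap, and it rests on the same identification of $\E_{\lambda \sim \mathcal{D}}[\,\cdot\,]$ with the conditional expectation $\E[\,\cdot \mid S_i\,]$ that the paper itself uses implicitly in Lemma A.4 and Proposition A.7, so it is consistent with the paper's framework even though that identification is nowhere made fully rigorous in the paper either.
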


\begin{proof}
Since we have
\begin{equation*}
\begin{aligned}
\lvert
l_{\rm aug}(s, \theta_1)-l_{\rm aug}(s, \theta_2)
\rvert
& = 
\lvert
\E_{\lambda \sim \mathcal{D}}[l(s_{\lambda}, \theta_1)-l(s_{\lambda}, \theta_2)]
\rvert
\\
&\leq
\E_{\lambda \sim \mathcal{D}}[\lvert
l(s_{\lambda}, \theta_1)-l(s_{\lambda}, \theta_2)
\rvert]\\
&\leq
\E_{\lambda \sim \mathcal{D}}[
l'(s_{\lambda})
\norm{\theta_1 - \theta_2}
]\\
&=\E_{\lambda \sim \mathcal{D}}[
l'(s_{\lambda})]
\norm{\theta_1 - \theta_2},
\end{aligned}
\end{equation*}
the conclusion holds and $l'_{\rm aug} = \E_{\lambda \sim \mathcal{D}}[
l'(s_{\lambda})]$.
\end{proof}

\begin{proposition}\label{weakconverge}
For the pair $(\ta, l_{\rm aug})$, the following property holds. 
    \begin{equation*}
        \sup_{\theta \in \Theta} 
        \biggr\lvert 
        \frac{1}{N+1} \sum_{i=0}^N l_{\rm aug}(s_i, \theta)-\E_{s \sim \mathcal{P}}[l_{\rm aug}(s, \theta)]
        \biggr\rvert
        \rightarrow
        0
    \end{equation*}
\end{proposition}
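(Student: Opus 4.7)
The plan is to bootstrap from the uniform weak law already assumed for the original loss $l$ (Assumption \ref{assume2}(3)) together with the Lipschitz property just proved in Proposition \ref{lip}. The standard recipe is: cover $\Theta$ by finitely many small balls, apply the pointwise weak law at the centers, and use Lipschitz continuity with an integrable envelope to control oscillations inside each ball.

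First, I would fix $\epsilon>0$. Assuming (as is implicit in the Rademacher setup in the rest of the paper) that $\Theta$ is totally bounded, choose a finite $\epsilon$-net $\{\theta_1,\dots,\theta_K\}\subset\Theta$. By Proposition \ref{lip}, for each $\theta\in\Theta$ there is some $\theta_k$ with $\norm{\theta-\theta_k}<\epsilon$ and
\[
\bigl|l_{\rm aug}(s,\theta)-l_{\rm aug}(s,\theta_k)\bigr|\;\le\; l'_{\rm aug}(s)\cdot\epsilon,\qquad l'_{\rm aug}(s)=\E_{\lambda\sim\mathcal{D}}[l'(s_{\lambda})]\in L^2(\mathcal{P})\subseteq L^1(\mathcal{P}).
\]
Second, writing $\hat{E}_N[f]=\frac{1}{N+1}\sum_{i=0}^N f(s_i)$, the triangle inequality yields
\[
\bigl|\hat{E}_N[l_{\rm aug}(\cdot,\theta)]-\E_{\mathcal{P}}[l_{\rm aug}(\cdot,\theta)]\bigr|\le \bigl|\hat{E}_N[l_{\rm aug}(\cdot,\theta_k)]-\E_{\mathcal{P}}[l_{\rm aug}(\cdot,\theta_k)]\bigr|+\epsilon\bigl(\hat{E}_N[l'_{\rm aug}]+\E_{\mathcal{P}}[l'_{\rm aug}]\bigr).
\]
Taking the sup over $\theta$ and then the maximum over the net,
\[
\sup_{\theta\in\Theta}\bigl|\hat{E}_N[l_{\rm aug}(\cdot,\theta)]-\E_{\mathcal{P}}[l_{\rm aug}(\cdot,\theta)]\bigr|\le \max_{k}\bigl|\hat{E}_N[l_{\rm aug}(\cdot,\theta_k)]-\E_{\mathcal{P}}[l_{\rm aug}(\cdot,\theta_k)]\bigr|+\epsilon\bigl(\hat{E}_N[l'_{\rm aug}]+\E_{\mathcal{P}}[l'_{\rm aug}]\bigr).
\]

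Third, since $K$ is finite, the classical weak law of large numbers applied to each of the $K$ integrable random variables $l_{\rm aug}(\,\cdot\,,\theta_k)$ drives the first term on the right to $0$ in probability. Similarly, $\hat{E}_N[l'_{\rm aug}]\to \E_{\mathcal{P}}[l'_{\rm aug}]$ in probability by the ordinary WLLN, so the second term is asymptotically bounded by $2\epsilon\,\E_{\mathcal{P}}[l'_{\rm aug}]$ in probability. Since $\epsilon>0$ was arbitrary and $\E_{\mathcal{P}}[l'_{\rm aug}]<\infty$, the left-hand side converges to $0$ in probability, which is the claim.

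The main obstacle is conceptual rather than technical: Proposition \ref{lip} only provides a \emph{local} Lipschitz bound valid on a neighborhood $N(\theta_0,\epsilon)$, so strictly speaking one must first cover $\Theta$ by finitely many such neighborhoods (relying on total boundedness of $\Theta$) and take the pointwise maximum of the associated Lipschitz functions $l'_{\rm aug}$; this maximum is still in $L^2(\mathcal{P})$ because the cover is finite. The only other care point is checking that $\E_{\mathcal{P}}[l_{\rm aug}(s,\theta_k)]$ is finite for each $k$, which follows from Assumption \ref{assume2}(2)-(6) applied to $l$ and then averaged over $\lambda\sim\mathcal{D}$ by Fubini.
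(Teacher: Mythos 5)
Your argument is correct in outline, but it takes a genuinely different route from the paper. The paper's own proof is a two-line assertion: it rewrites $\frac{1}{N+1}\sum_i l_{\rm aug}(s_i,\theta)-\E_{s\sim\mathcal{P}}[l_{\rm aug}(s,\theta)]$ in terms of $\E_{\lambda\sim\mathcal{D}}[l(s_{i,\lambda},\theta)]$ and declares the supremum to be $o_{\mathcal{P}}(1)$ ``because the underlying sample space is the disjoint countable union of all possible augmented sample spaces,'' i.e.\ it leans on the disjointness assumption and (implicitly) on Lemma~\ref{base} and Assumption~\ref{assume2}(3) without supplying the bridge from the empirical average of $l$ to that of $l_{\rm aug}$. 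You instead rebuild the uniform weak law for $l_{\rm aug}$ from scratch: finite $\epsilon$-net, pointwise WLLN at the net points (trivially applicable since $l$, hence $l_{\rm aug}$, is bounded in $[0,1]$), and oscillation control via the Lipschitz envelope of Proposition~\ref{lip}. Your version is more self-contained and makes explicit the hypotheses the paper leaves tacit (total boundedness of $\Theta$); the paper's version is shorter but, as written, does not actually justify the last equality. One refinement to your care point: Assumption~\ref{assume2}(2) only supplies Lipschitz envelopes on neighborhoods $N(\theta_0,\epsilon)$ of the \emph{single} point $\theta_0$, so ``covering $\Theta$ by finitely many such neighborhoods'' is not literally available from the hypothesis; the cleaner fix is to note that once $\Theta$ is bounded you may take the radius large enough that one neighborhood $N(\theta_0,R)$ contains all of $\Theta$, yielding a single global envelope $l'\in L^2(\mathcal{P})\subseteq L^1(\mathcal{P})$, after which your net argument goes through verbatim.
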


\begin{proof}
We have
\begin{equation*}
\begin{aligned}
    &\sup_{\theta \in \Theta} 
    \biggr \lvert 
    \frac{1}{N+1} 
    \sum_{i=0}^N 
    l_{\rm aug}(s_i, \theta) 
    - 
    \E_{s \sim \mathcal{P}}[ l_{\rm aug}(s, \theta)]] 
    \biggr \rvert \\
    &=
    \sup_{\theta \in \Theta} 
    \biggr \lvert 
    \frac{1}{N+1} 
    \sum_{i=0}^N 
    \E_{\lambda\sim{\mathcal{D}}}[l(s_{i, \lambda}, \theta)] 
    - 
    \E_{s \sim \mathcal{P}}[\E_{\lambda \sim \mathcal{D}}
    [l(s_{\lambda}, \theta)]]
    \biggr \rvert \\
    & = o_{\mathcal{P}}(1)
\end{aligned}
\end{equation*}
where the last equality holds because 
the underlying sample space $\mathcal{S}$ is the disjoint countable union of all possible augmented sample spaces.
\end{proof}

\begin{proposition}\label{measurable}
For the pair $(\ta, l_{\rm aug})$, the following property holds. 
For each $\theta$ in $\Theta$,
the map $s \rightarrow l_{\rm aug}(s, \theta)$ is measurable
\end{proposition}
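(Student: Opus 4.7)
The plan is to reduce the claim to joint measurability of the integrand followed by an application of Fubini--Tonelli. Recall $l_{\rm aug}(s,\theta)=\E_{\lambda\sim\mathcal{D}}[l(s_\lambda,\theta)]$, so it suffices to show that $(s,\lambda)\mapsto l(s_\lambda,\theta)$ is jointly measurable on $\mathcal{S}\times[0,1)^d$ and that its integral in $\lambda$ exists for each $s$. The strategy mirrors the structure already used for Proposition \ref{lip} and Proposition \ref{weakconverge}: exploit the explicit recursive formula for the augmentation to obtain analytic regularity, then transfer the measurability property of the base loss $l$ through composition.

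First I would show that the augmentation map $(s,\lambda)\mapsto s_\lambda$ is Borel measurable. Theorem \ref{aug_characterization}(1) gives the closed-form expression
\[
x_{n,\lambda_n}=\sum_{k=0}^n\Bigl(\prod_{i=k+1}^n\lambda_i\Bigr)g(\lambda_k)\,x_k,
\]
which is a polynomial in the components of $(x_0,\dots,x_d,\lambda_1,\dots,\lambda_d)$. Since each coordinate of $s_\lambda$ is continuous in $(s,\lambda)$ and the label $y$ is preserved under augmentation, the full map $(s,\lambda)\mapsto s_\lambda$ is continuous, hence Borel measurable between $\mathcal{S}\times[0,1)^d$ and $\mathcal{S}$.

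Next, Assumption \ref{assume2}(4) states that $s\mapsto l(s,\theta)$ is measurable for each fixed $\theta$. Composing this with the Borel measurable augmentation map yields joint measurability of $(s,\lambda)\mapsto l(s_\lambda,\theta)$ with respect to the product $\sigma$-algebra on $\mathcal{S}\times[0,1)^d$. Applying Tonelli's theorem (using nonnegativity, or Fubini after invoking the local Lipschitz bound of Assumption \ref{assume2}(2) together with continuity of $s_\lambda$ in $\lambda$ to control integrability on compact sets), the partial integral $s\mapsto\int l(s_\lambda,\theta)\,d\mathcal{D}(\lambda)=l_{\rm aug}(s,\theta)$ is measurable in $s$, which is the desired conclusion.

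The main obstacle I anticipate is subtle: the composition step requires that $l(\cdot,\theta)$ be Borel measurable, not merely measurable with respect to some $\mathcal{P}$-completed $\sigma$-algebra, since the pushforward measure under the augmentation map need not coincide with $\mathcal{P}$. If Assumption \ref{assume2}(4) is interpreted as Borel measurability, the argument goes through directly. Otherwise, one proceeds by approximating $l(\cdot,\theta)$ by a Borel measurable version that agrees $\mathcal{P}$-almost everywhere, using the fact that the augmentation distribution of $s_\lambda$ under $\mathcal{P}\otimes\mathcal{D}$ remains absolutely continuous with respect to $\mathcal{P}$ on each leaf $S_i$ by Assumption \ref{disjointness}, so the expectation defines the same measurable function up to a $\mathcal{P}$-null set.
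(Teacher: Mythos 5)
Your argument is correct and follows the same route as the paper, whose entire proof is the one-line assertion that $l_{\rm aug}$ is measurable because $l$ is measurable and $l_{\rm aug}(s,\theta)=\E_{\lambda\sim\mathcal{D}}[l(s_\lambda,\theta)]$. You have simply supplied the details the paper leaves implicit --- Borel measurability of $(s,\lambda)\mapsto s_\lambda$ via the polynomial formula from Theorem \ref{aug_characterization}, composition with the measurability of $l(\cdot,\theta)$, and Tonelli for the partial integral --- along with a legitimate caveat about Borel versus completed $\sigma$-algebras that the paper does not address.
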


\begin{proof}
$l_{\rm aug}$ is measurable since $l$ is measurable and $l_{\rm aug}(s, \theta)=\E_{\lambda \sim \mathcal{D}}[l(s_{\lambda}, \theta)]$.
\end{proof}

\begin{proposition}\label{diff}
For the pair $(\ta, l_{\rm aug})$, the following property holds. 
For each $\theta$ in $\Theta$,
the map $s \rightarrow l_{\rm aug}(s, \theta)$ is differentiable
\end{proposition}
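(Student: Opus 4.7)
The plan is to exploit the fact that for each fixed interpolation vector $\lambda \in [0,1)^d$, the augmented sample depends \emph{linearly} on the continuous part of $s$. Indeed, Theorem \ref{aug_characterization}(1) provides the explicit formula $x_{n, \lambda_n} = \sum_{k=0}^n (\prod_{i=k+1}^n \lambda_i) g(\lambda_k) x_k$, so I may write $s_\lambda = L_\lambda(s)$ where $L_\lambda$ is a linear operator whose matrix entries are polynomials in $\lambda_1, \dots, \lambda_d$. In particular, $\|L_\lambda\|_{\rm op}$ is uniformly bounded on $[0,1)^d$, since each entry lies in $[0,1]$. The label coordinate $y$ is passed through unchanged, so differentiability in $s$ reduces to differentiability in the continuous features $(x_0, \dots, x_d) \in \R^{d+1}$ for fixed $y$.

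First I would invoke the differentiability of $l(\cdot, \theta)$ in its first argument, a property inherited from the losses considered in the paper: the cross-entropy composed with a neural network whose last layer uses a sigmoid and whose earlier layers are piecewise linear (as in Theorem \ref{trade off with velocity and acceleration}). Combined with the chain rule, this yields, for each fixed $\lambda$ and $\theta$, a well-defined gradient $\nabla_s\bigl[l(L_\lambda(s), \theta)\bigr] = L_\lambda^{T}\, \nabla_s l(L_\lambda(s), \theta)$, where $\nabla_s l(L_\lambda(s),\theta)$ denotes the gradient of $l$ in its first argument evaluated at $L_\lambda(s)$.

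Next I would exchange expectation and derivative via the dominated convergence theorem applied to the directional difference quotient
\begin{equation*}
\frac{l_{\rm aug}(s + hv, \theta) - l_{\rm aug}(s, \theta)}{h}
= \E_{\lambda \sim \mathcal{D}}\!\left[\frac{l(L_\lambda(s + hv), \theta) - l(L_\lambda(s), \theta)}{h}\right].
\end{equation*}
The integrand converges pointwise (in $\lambda$) to $\langle L_\lambda^{T} \nabla_s l(L_\lambda(s), \theta),\, v\rangle$, and by the mean value theorem it is dominated by $\sup_{t \in [0,1]} \|\nabla_s l(L_\lambda(s + thv), \theta)\| \cdot \|L_\lambda\|_{\rm op} \cdot \|v\|$. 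Because $\|L_\lambda\|_{\rm op}$ is uniformly bounded and the supremum factor is locally bounded over a compact neighborhood of $s$, a single integrable majorant exists. Passing the limit inside yields $\nabla_s l_{\rm aug}(s, \theta) = \E_{\lambda \sim \mathcal{D}}[L_\lambda^{T} \nabla_s l(L_\lambda(s), \theta)]$, which establishes differentiability of $s \mapsto l_{\rm aug}(s, \theta)$ for every $\theta \in \Theta$.

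The hard part will be securing the local, uniform-in-$\lambda$ gradient bound on $l(\cdot, \theta)$ in its first argument. Assumption \ref{assume2} only spells out regularity of $l$ in $\theta$, so I would need to argue separately that the losses actually used in the paper (cross-entropy composed with a ReLU $+$ sigmoid network) inherit such a bound from the boundedness of $\|\nabla g_\theta\|$ on the compact neighborhood of interest, an estimate of the same flavor as the computations underlying Theorem \ref{trade off with velocity and acceleration}. Once this bound is in place, the dominated convergence step is routine and the proposition follows.
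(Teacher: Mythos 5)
There is a genuine mismatch between what you prove and what the paper proves (and needs). The proposition's wording ``the map $s \rightarrow l_{\rm aug}(s,\theta)$ is differentiable'' is evidently a typo carried over from Proposition \ref{measurable}; the property actually being inherited is item 5 of Assumption \ref{assume2}, namely that $\theta \mapsto l_{\rm aug}(s,\theta)$ is differentiable at $\theta_*$ for almost every $s$, and the paper's proof accordingly analyzes the difference quotient
\[
\frac{\bigl|\, l_{\rm aug}(s,\theta_*+\delta) - l_{\rm aug}(s,\theta_*) - \delta^T \nabla l_{\rm aug}(s,\theta_*) \,\bigr|}{\norm{\delta}}
\]
with the perturbation $\delta$ applied to the \emph{parameter}, not the sample. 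This is the hypothesis required to invoke van der Vaart's Theorem 5.23 for the pair $(\ta, l_{\rm aug})$ in the proof of Theorem \ref{Asymptotic normality_variance_reduction}; differentiability in $s$ plays no role there and is not among the regularity conditions of Assumption \ref{assume2}. Your argument --- writing $s_\lambda = L_\lambda(s)$ and differentiating through the linear map in $s$ --- therefore establishes a different statement from the one the surrounding argument uses.

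Beyond the misdirection, your route has its own difficulty: you assert that $l(\cdot,\theta)$ is differentiable in its first argument because the network is ``piecewise linear'' with a sigmoid head, but a ReLU network is precisely \emph{not} differentiable in its input at the kinks, so at best you would obtain almost-everywhere differentiability in $x$, and you would still need an input-gradient bound that the paper never assumes. The intended proof is actually quite close to your dominated-convergence skeleton, just aimed at $\theta$: write $l_{\rm aug}(s,\theta)=\E_{\lambda\sim\mathcal{D}}[l(s_\lambda,\theta)]$, bound the $\theta$-difference quotient inside the expectation by $G(s)=\E_{\lambda\sim\mathcal{D}}\bigl[\,|l'(s_\lambda)| + \norm{\nabla l(s_\lambda,\theta_*)}\,\bigr]$ using the Lipschitz envelope $l'$ supplied by Assumption \ref{assume2}(2), pass the limit $\delta\to 0$ inside by Lebesgue's dominated convergence theorem, and conclude from the pointwise differentiability of $\theta\mapsto l(s_\lambda,\theta)$ at $\theta_*$ (Assumption \ref{assume2}(5)) that the limit is zero, with $\nabla l_{\rm aug}(s,\theta_*)=\E_{\lambda\sim\mathcal{D}}[\nabla l(s_\lambda,\theta_*)]$.
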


\begin{proof}
we have
\begin{equation*}
\begin{aligned}
    &\lim_{\delta  \rightarrow 0} \frac{
    \biggr \rvert
    l_{\rm aug}(s, \theta_*+ \delta)
    -
    l_{\rm aug}(s, \theta_{*})
    -
    \delta^T \nabla 
    l_{\rm aug}(s, \theta_*)
    \biggr \lvert
    }
    {
    \norm{\delta}
    }
    \\
    &=
    \lim_{\delta  \rightarrow 0} \frac{
    \biggr \rvert
    \E_{\lambda \sim \mathcal{D}}[l(s_{\lambda}, \theta_*\!+\! \delta)
    \!-\!
    l(s_{\lambda}, \theta_*)]
    \!-\!
    \delta^T \E_{\lambda \sim \mathcal{D}}
    [\nabla l(s_{\lambda}, \theta_{*})]
    \biggr \lvert
    }
    {
    \norm{\delta}
    }
    \\
    &=
    \lim_{\delta  \rightarrow 0} \frac{
    \biggr \rvert
    \E_{\lambda \sim \mathcal{D}}\big[
    l(s_{\lambda}, \theta_*\!+\! \delta)
    \!-\!
    l(s_{\lambda}, \theta_*)
    \!-\!
    \delta^T
    (\nabla l(s_{\lambda}, \theta_{*}))
    \big]
    \biggr \lvert
    }
    {
    \norm{\delta}
    }
    \\
    & \leq 
    \lim_{\delta  \rightarrow 0} \frac{
    \E_{\lambda \sim \mathcal{D}}\big[\biggr \rvert
    l(s_{\lambda}, \theta_*\!+\! \delta)
    \!-\!
    l(s_{\lambda}, \theta_*)
    \!-\!
    \delta^T
    (\nabla l(s_{\lambda}, \theta_{*}))
    \biggr \lvert 
    \big]
    }
    {
    \norm{\delta}
    }
\end{aligned}
\end{equation*}
Let us define 
\begin{equation*}
\begin{aligned}
& F_{\delta}(s) = \frac{\E_{\lambda \sim \mathcal{D}}\big[\biggr \rvert
l(s_{\lambda}, \theta_*\!+\! \delta)
\!-\!
l(s_{\lambda}, \theta_*)
\!-\!
\delta^T
(\nabla l(s_{\lambda}, \theta_{*}))
\biggr \lvert 
\big]}
{\norm{\delta}}
\\
& G(s) = \E_{\lambda \sim \mathcal{D}}  [| l'(s_\lambda) + (\nabla l(s_{\lambda}, \theta_{*}))|] {\text{~for all~}} s \in S
\end{aligned}
\end{equation*}
\\
Since $| F_{\delta}(s) | \leq G(s)$ for all $ s \in S$ by Lebesgue's dominated convergence theorem, 
\begin{equation*}
\begin{aligned}
    & \lim_{\delta  \rightarrow 0} \frac{
    \biggr \rvert
    l_{\rm aug}(s, \theta_*+ \delta)
    -
    l_{\rm aug}(s, \theta_{*})
    -
    \delta^T \nabla 
    l_{\rm aug}(s, \theta_*)
    \biggr \lvert
    }
    {
    \norm{\delta}
    }  \\
    & \leq
    \lim_{\delta  \rightarrow 0} \frac{
    \E_{\lambda \sim \mathcal{D}}\big[\biggr \rvert
    l(s_{\lambda}, \theta_*\!+\! \delta)
    \!-\!
    l(s_{\lambda}, \theta_*)
    \!-\!
    \delta^T
    (\nabla l(s_{\lambda}, \theta_{*}))
    \biggr \lvert 
    \big]
    }
    {
    \norm{\delta}
    }\\
    & = \E_{\lambda \sim \mathcal{D}}\big[
    \lim_{\delta  \rightarrow 0} \frac{
    \biggr \rvert
    l(s_{\lambda}, \theta_*\!+\! \delta)
    \!-\!
    l(s_{\lambda}, \theta_*)
    \!-\!
    \delta^T
    (\nabla l(s_{\lambda}, \theta_{*}))
    \biggr \lvert 
    }
    {
    \norm{\delta}
    }
    \big] =0
\end{aligned}
\end{equation*}
where the last inequality holds because $l$ is differentiable.
\end{proof}

\begin{proposition}\label{Fisher}
The map $\theta \rightarrow \E_{s \sim \mathcal{P}}[l_{\rm aug}(s, \theta)]$ admits a second-order Taylor expansion at $\theta_*$ with non-singular second derivatives matrix $V_{\theta_*}$
\end{proposition}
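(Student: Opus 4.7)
The plan is to reduce the claim to Assumption \ref{assume2}(6) for $l$ by first establishing the pointwise identity $\E_{s \sim \mathcal{P}}[l_{\rm aug}(s, \theta)] = \E_{s \sim \mathcal{P}}[l(s, \theta)]$ for every $\theta \in \Theta$. Once that identity is in hand, the second-order Taylor expansion at $\theta_*$ with non-singular Hessian $V_{\theta_*}$ transfers verbatim from the non-augmented population risk to the augmented one, mirroring how Propositions \ref{lip}--\ref{diff} transferred Lipschitz continuity, uniform weak law, measurability, and differentiability from $l$ to $l_{\rm aug}$.

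First I would invoke Assumption \ref{disjointness} to decompose the sample space as $\mathcal{S} = \bigcup_i S_i$, where each block $S_i$ collects all RIM augmentations of a fixed base realization, and interpret the conditional law of $s \sim \mathcal{P}$ given $S_i$ as the push-forward of the augmentation law $\mathcal{D}$. Under that interpretation, drawing $s \sim \mathcal{P}$ conditionally on landing in $S_i$ is the same as drawing $\lambda \sim \mathcal{D}$ and setting $s = s_\lambda$ for the base sample of $S_i$.

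Next, I would apply the law of total expectation from Lemma \ref{base}(1) with $\mu(s) := l(s, \theta)$ at a fixed $\theta$. By the construction of the partition together with the definition of $l_{\rm aug}$, the conditional expectation $\overline{\mu}(s) = \E[l(\cdot, \theta) \mid S_i]$ is constant on each $S_i$ and coincides with $\E_{\lambda \sim \mathcal{D}}[l(s_\lambda, \theta)] = l_{\rm aug}(s, \theta)$. Hence
\begin{equation*}
\E_{s \sim \mathcal{P}}[l_{\rm aug}(s, \theta)] \;=\; \E_{s \sim \mathcal{P}}[\overline{\mu}] \;=\; \E_{s \sim \mathcal{P}}[\mu] \;=\; \E_{s \sim \mathcal{P}}[l(s, \theta)].
\end{equation*}
Because this holds identically in $\theta$, Assumption \ref{assume2}(6) applied to $l$ at $\theta_*$ immediately yields a second-order Taylor expansion of $\theta \mapsto \E_{s \sim \mathcal{P}}[l_{\rm aug}(s, \theta)]$ at $\theta_*$, and the same matrix $V_{\theta_*}$ serves as its non-singular Hessian.

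The main obstacle is justifying the identification $\overline{\mu} = l_{\rm aug}$, which hides a compatibility requirement between $\mathcal{P}$ and $\mathcal{D}$: the conditional law of $s$ given its block $S_i$ must coincide with the RIM measure induced by $\mathcal{D}$. If the paper only intends the blocks to be centered at the observed realizations $\{s_i\}$ (so that $\mathcal{P}$ and $\mathcal{D}$ are more loosely coupled), the fallback is to bound the difference of the two population risks via the Lipschitz estimate of Proposition \ref{lip} together with the distance bounds of Theorem \ref{aug_characterization}, and then argue that a uniformly small perturbation preserves both the Taylor expansion and the non-singularity of the Hessian by continuity of the spectrum of a symmetric matrix.
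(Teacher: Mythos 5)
Your proposal matches the paper's proof: the paper also derives $\E_{s \sim \mathcal{P}}[l_{\rm aug}(s, \theta)] = \E_{s \sim \mathcal{P}}[l(s, \theta)]$ from the law of total expectation in Lemma \ref{base} and then invokes bullet 6 of Assumption \ref{assume2}. Your additional remarks on the implicit compatibility between $\mathcal{P}$ and $\mathcal{D}$ flag a point the paper leaves tacit, but the core argument is identical.
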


\begin{proof}
By the total law of expectation of Lemma \ref{base}, we get 
$$
\E_{s \sim \mathcal{P}}[l_{\rm aug}(s, \theta)]
= 
\E_{s \sim \mathcal{P}}[l(s, \theta)].
$$
Hence the conclusion holds by the bullet $6$ from Assumption \ref{assume2}.
\end{proof}

Combining all the Propositions 
[\ref{lip} - \ref{Fisher}] with some results shown in \cite{{van1998asymptotic}}, we prove Theorem \ref{Asymptotic normality_variance_reduction}.
\begin{proof}[{\textbf{Proof of Theorem \ref{Asymptotic normality_variance_reduction}}}]
The results for $\hat\theta$ have already been proven in \cite[Theorem 5.23]{van1998asymptotic}. The Propositions 
[\ref{lip} - \ref{Fisher}]
guarantee that \cite[Theorem 5.23]{van1998asymptotic} can be applied to the pairs $(\theta_*, l)$ and $(\theta_{\rm aug}, l_{\rm aug})$.
Therefore $\hat{\theta}_{\rm aug}$ is asymptotically normal and satisfies (Eq. \ref{eqn: normality}). 
\\
Let
    $X=\nabla l(s, \theta_*) -
    \nabla l_{\rm aug}(s, \theta_*).$
Recall that 
${\rm Cov}_{\SP}\mu =\E_{\SP}[\Cov(\mu|\overline{\mu})]+{\rm Cov}_{\SP} \overline{\mu}$ by Lemma \ref{base}.
Let's consider $\mu$ and $\overline{\mu}$ to be $\nabla l(s, \theta_*)$ and $\nabla l_{\rm aug}(s, \theta_*)$, respectively.
Then $$\Sigma_0 = \Cov(\nabla l(s, \theta_*))$$ and 
$$
\Sigma_{\rm aug} = \Cov(\nabla l_{\rm aug}(s, \theta_*)).$$
Hence we have 
\begin{equation}
\begin{aligned}
    \Sigma_0-\Sigma_{\rm aug}&={\rm Cov}_{\SP}\mu - {\rm Cov}_{\SP} \overline{\mu} 
    =
    \E_{\SP}[\Cov(\mu|\overline{\mu})]
    \\
    &=\E_{\SP}[\E_{\SP}[(\mu - \E[\mu | \overline{\mu}] )(\mu - \E[\mu | \overline{\mu}])^T |~ \overline{\mu}]]
    \\
    &=\E_{\SP}[\E_{\SP}[(\mu - \overline{\mu})](\mu - \overline{\mu})^T ~|~ \overline{\mu}]]
    \\
    &= \E_{\SP}[\E_{\SP}[(\nabla l(s, \theta_*) - \nabla l_{\rm aug}(s, \theta_*))(\nabla l(s, \theta_*) - \nabla l_{\rm aug}(s, \theta_*))^T~|~ \overline{\mu}]]
    \\
    &
    =
    \E_{\SP}
    \E_{s \sim \mathcal{P}}[
    X X^T
    ~|~ \overline{\mu}]\\
    &
    =
    \E_{\SP}
    [
    X X^T
    ].
\end{aligned}
\end{equation}
Thus we get
\begin{equation}
    \Sigma_{\rm aug}=\Sigma_0-
    \E_{s \sim \mathcal{P}}[
    X X^T
    ].
\end{equation}
Since 
$
    tr(
    \E_{s \sim \mathcal{P}}[
    X X^T
    ])
     \geq 0,
$
we have ${\rm RE}=\frac{tr(\Sigma_0)}{tr(\Sigma_{\rm aug})} \geq 1$
\end{proof}



\section{Visualizations of RIM TimeGAN generated time series}
\label{append:visual}
This sections shows visualization of time series generated by RIM and TimeGAN. We plot samples from original time series, RIM generated time series and TimeGAN generated time series from both classes for Synthetic exponential ODE classification in Figure \ref{fig:figurevsual}.

\begin{figure}[!htp]
\centering
\includegraphics[width=.47\textwidth]{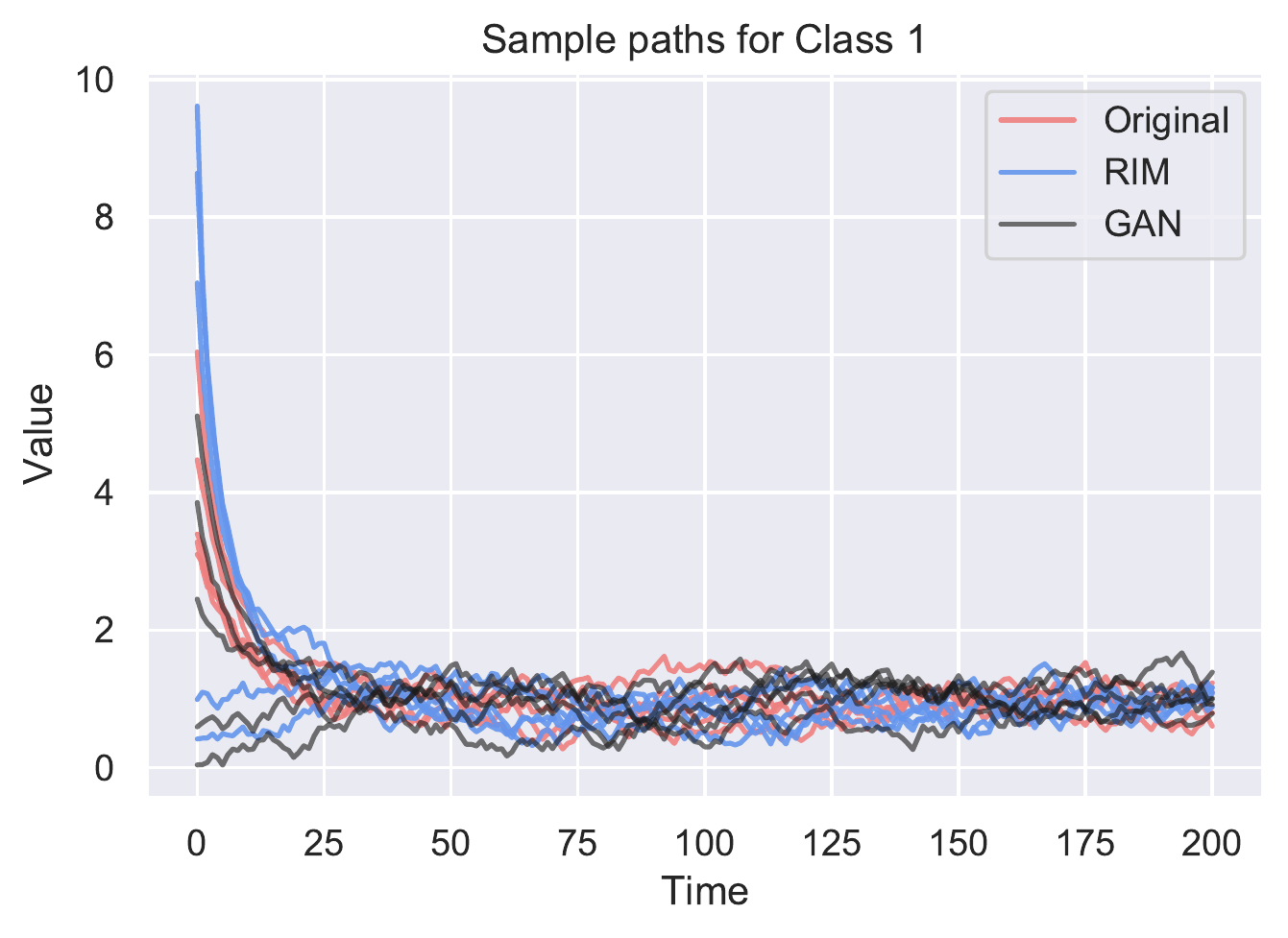}\quad
\includegraphics[width=.47\textwidth]{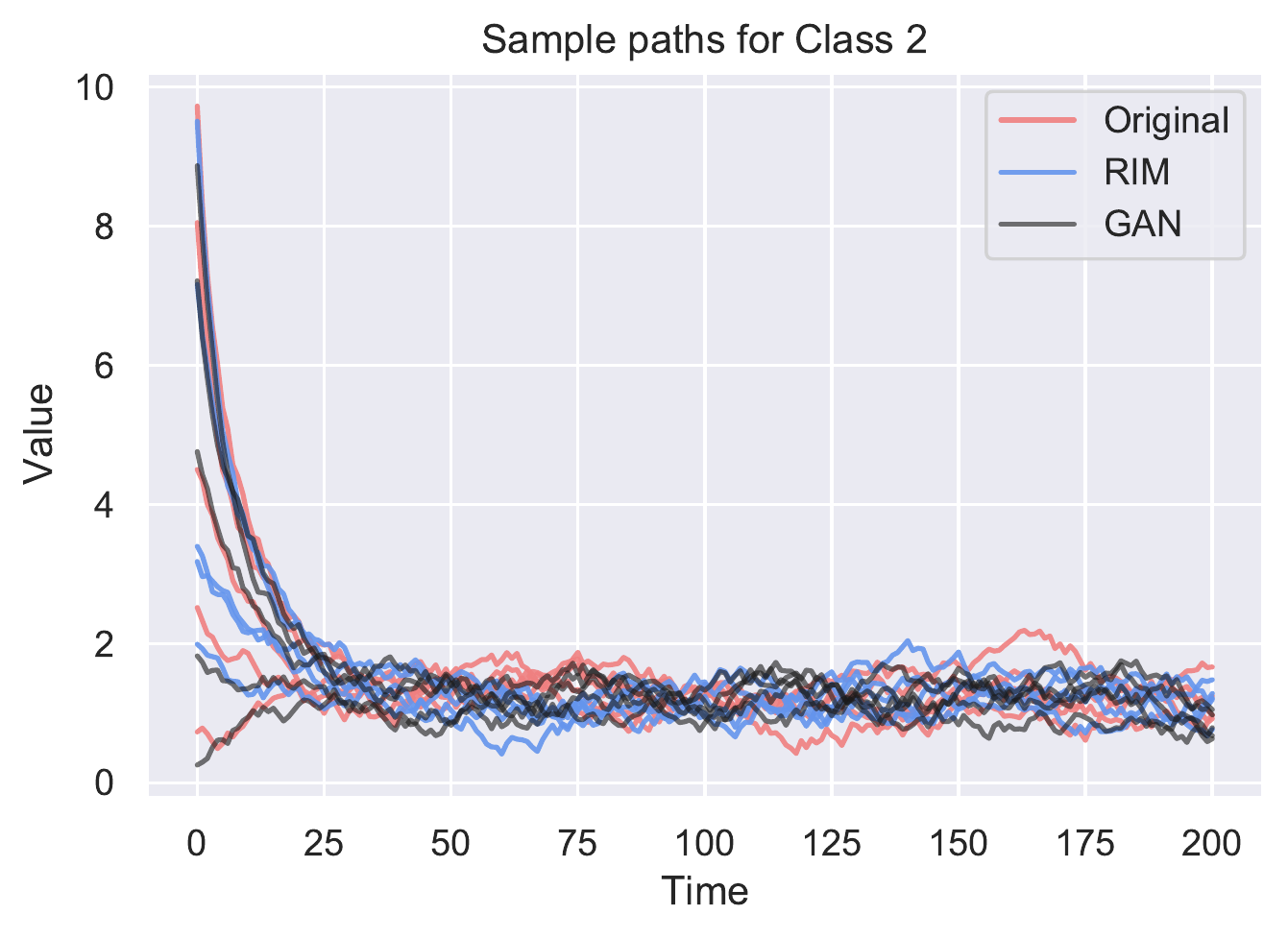}
    \caption{Visualization of exponential ODE classification series generated by RIM and TimeGAN against original series (5 each)}
\label{fig:figurevsual}
\end{figure}

{\section{Ablation analysis}
\label{appendix:lambda}




\begin{figure}[!htp]
\centering
\includegraphics[width=.315\textwidth]{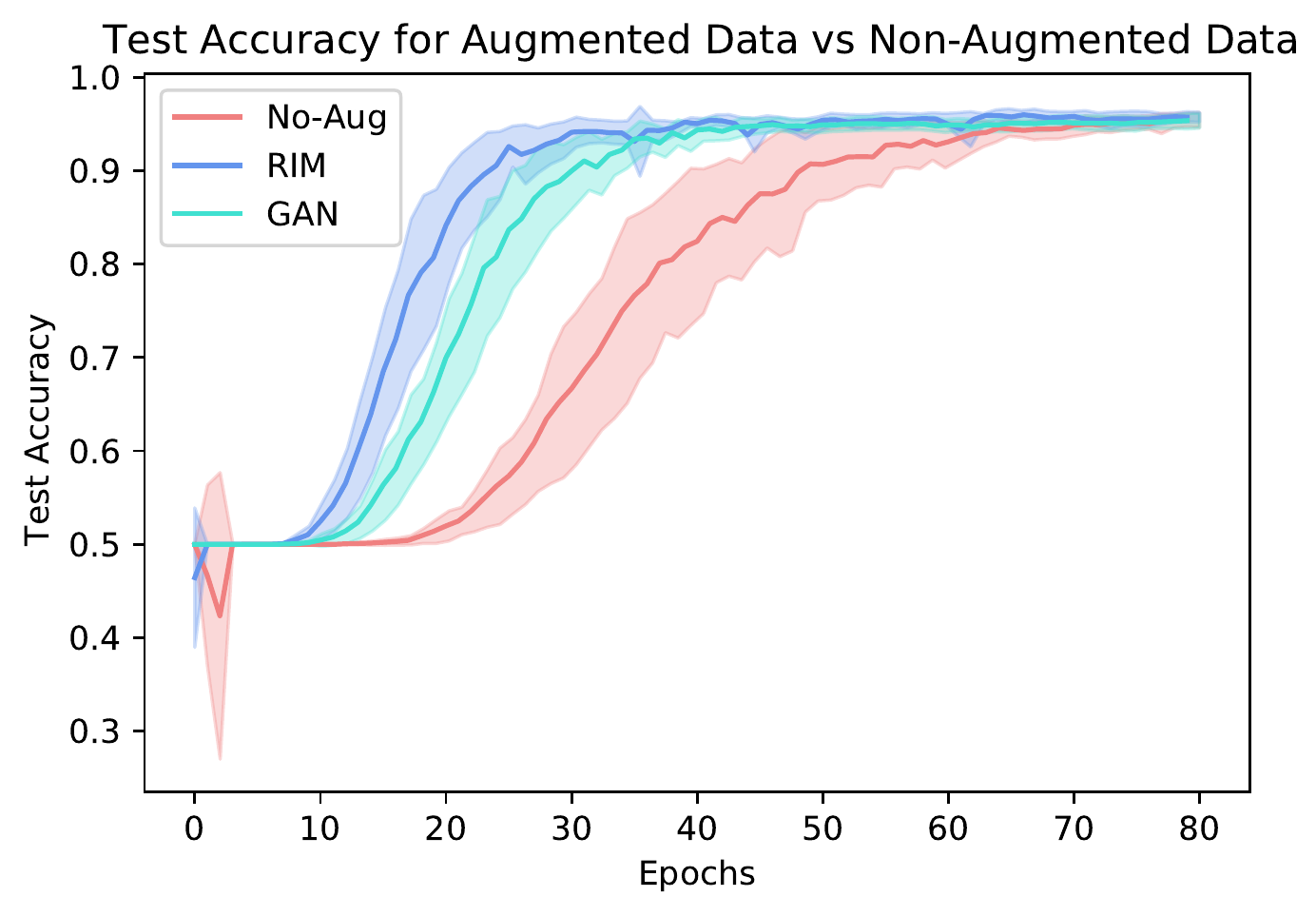}\quad
\includegraphics[width=.32\textwidth]{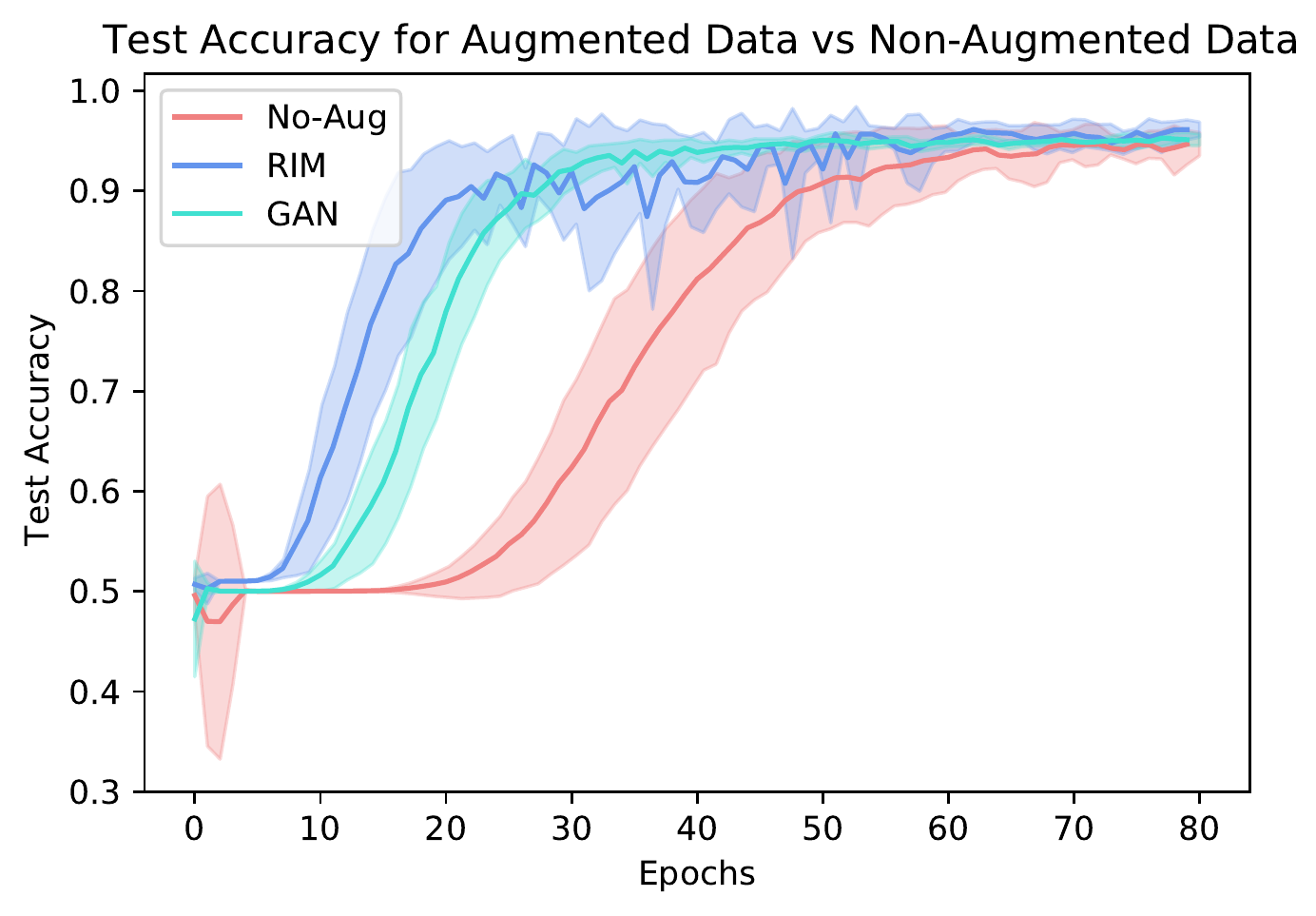}\quad
\includegraphics[width=.31\textwidth]{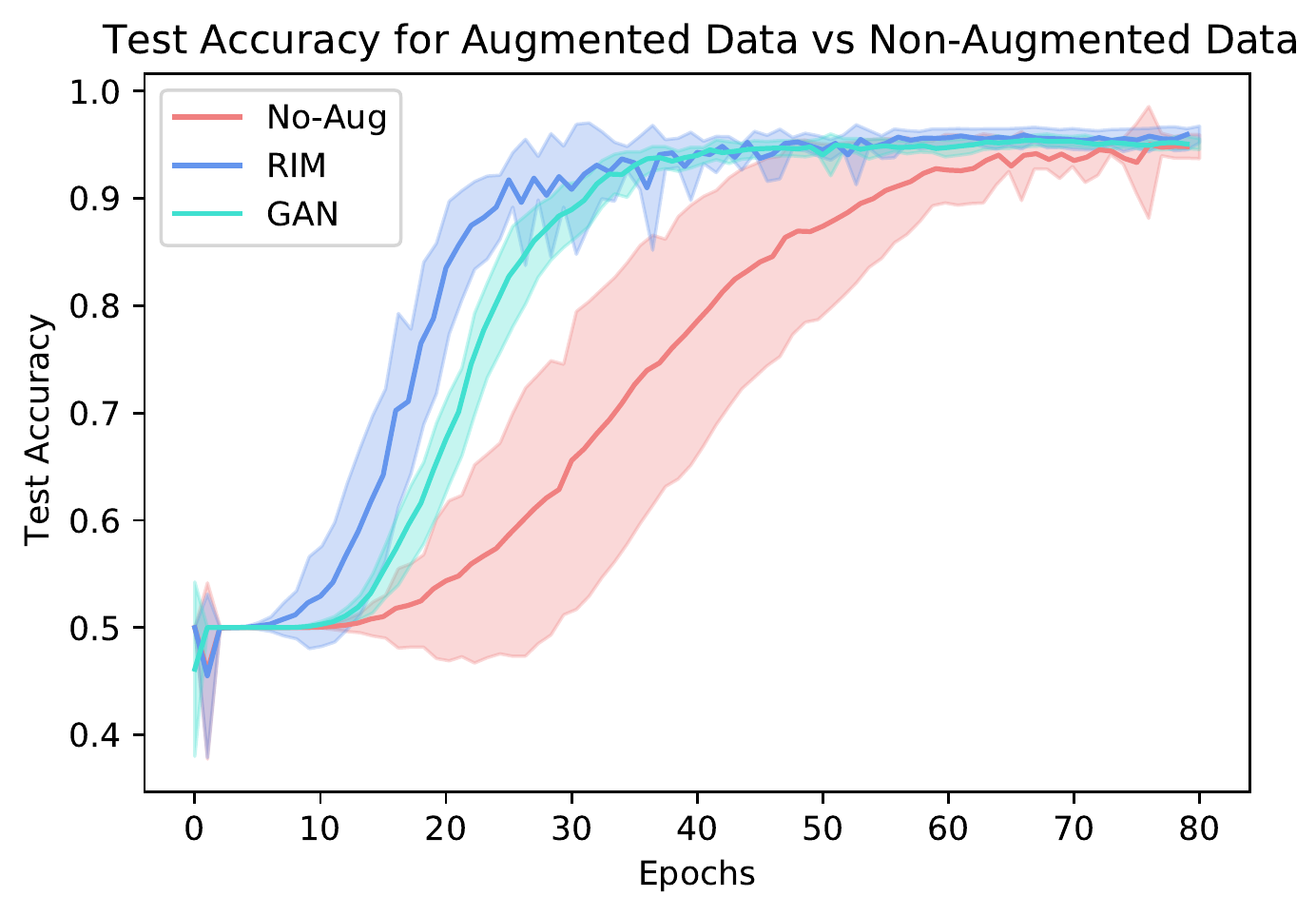}
\caption{{Test accuracy over epochs for synthetic data with Exponential ODE with $\lambda$ sampled from beta distributions with different scales (Left) $Beta(0.5,0.5)$, (Middle) $Beta(2,2)$, and (Right) $Beta(2,5)$}.}
\label{fig:regression_stock_prediction_1}
\end{figure}


\begin{figure}[!htp]
\centering
\includegraphics[width=.315\textwidth]{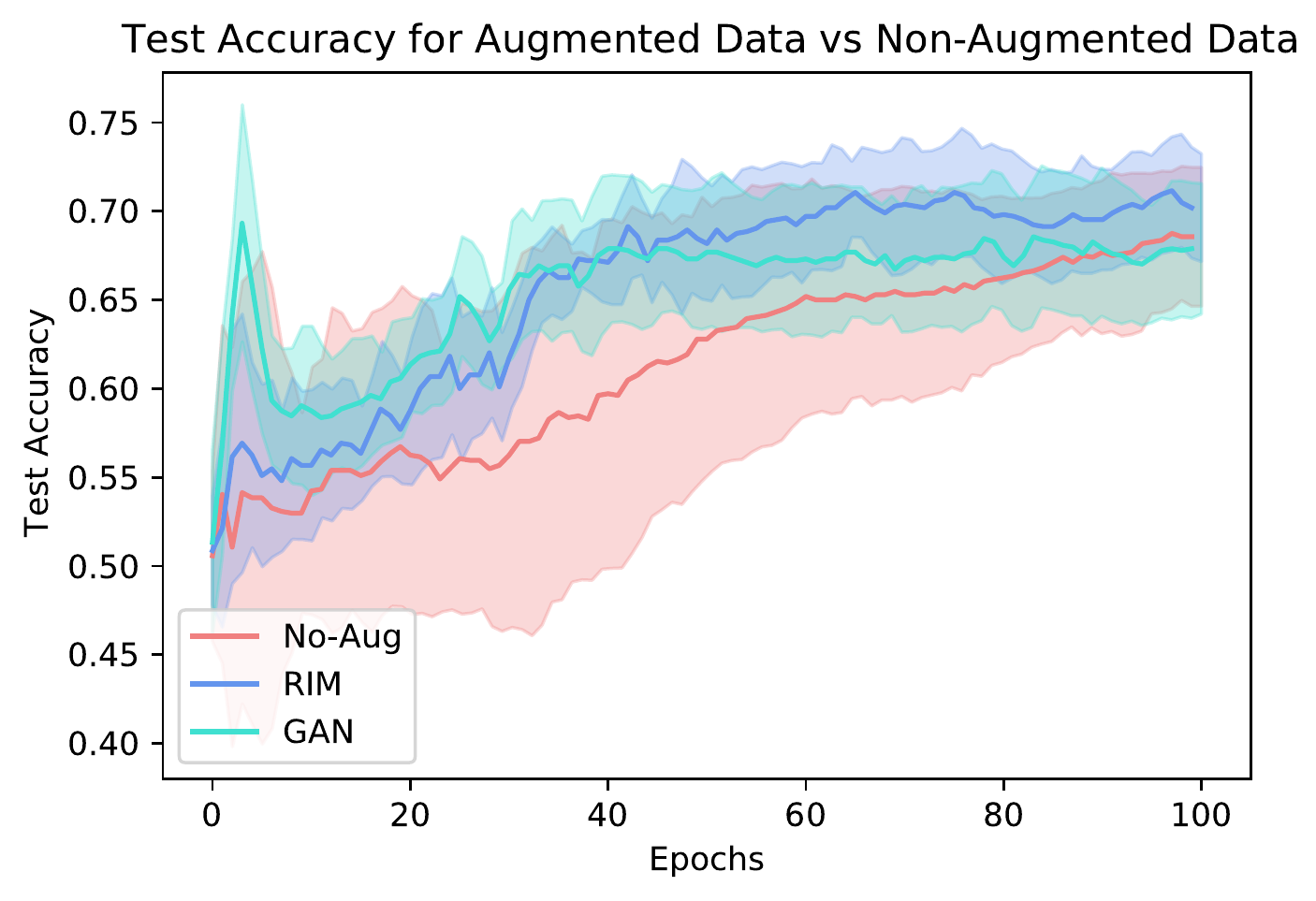}\quad
\includegraphics[width=.32\textwidth]{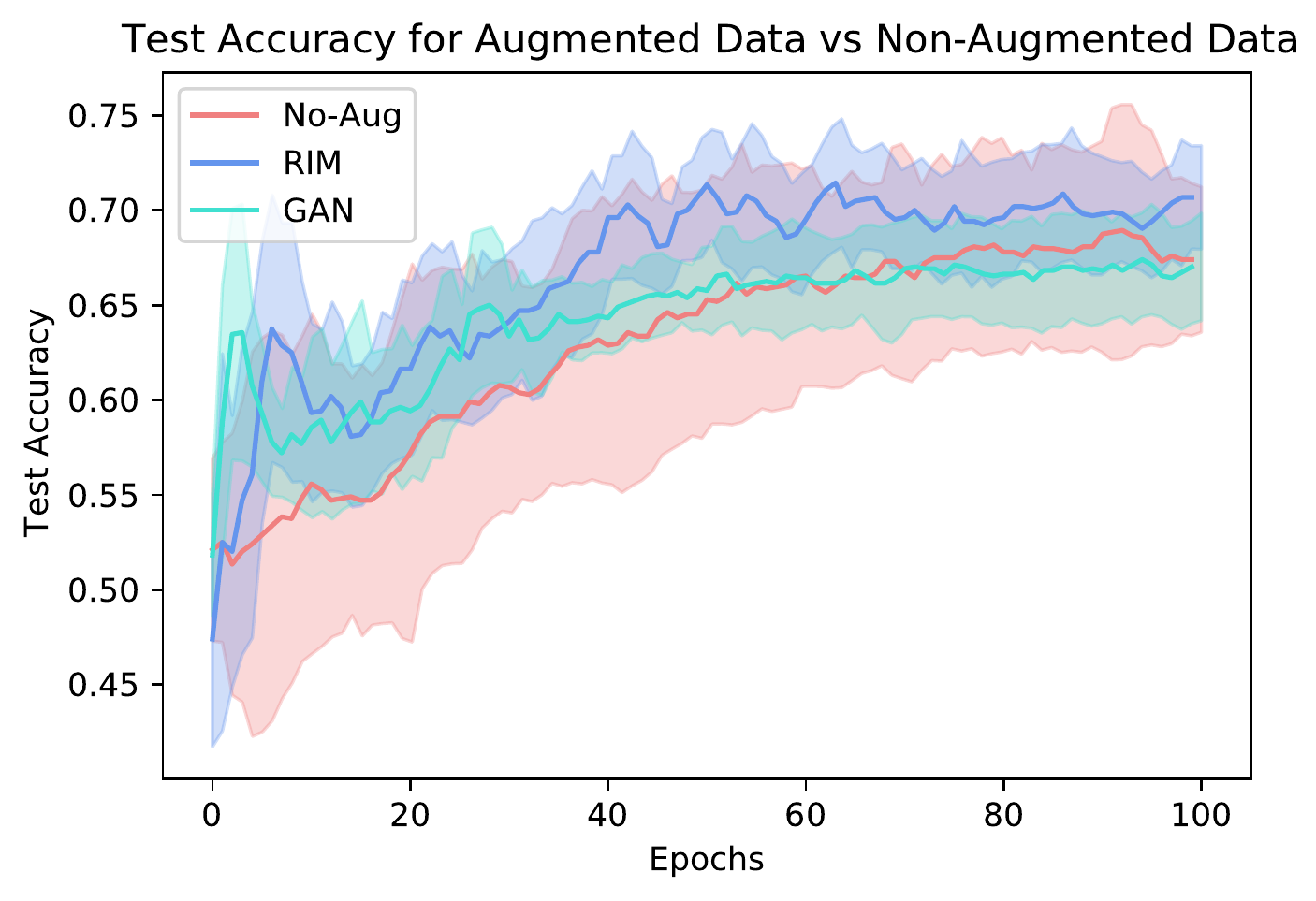}\quad
\includegraphics[width=.31\textwidth]{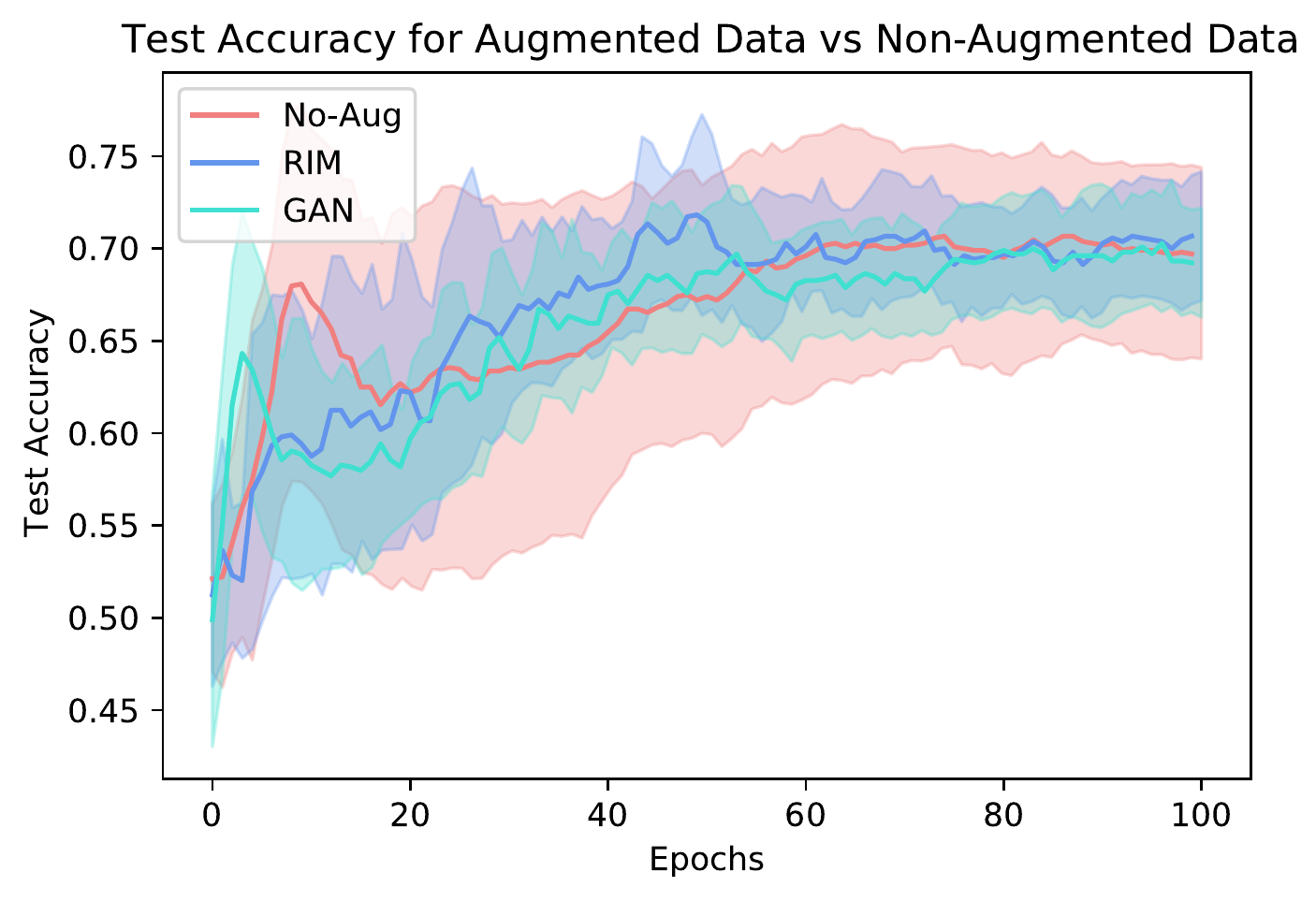}
\caption{{ Test accuracy over epochs for Indoor dataset with $\lambda$ sampled from beta distributions with different scales (Left) $Beta(0.5,0.5)$, (Middle) $Beta(2,2)$, and (Right) $Beta(2,5)$}.}
\label{fig:regression_stock_prediction_2}
\end{figure}

}


\section{{ Time series forecasting}}
\label{apped:ts pred}
In this section, we consider time series{ forecasting task} where we use previous n time steps data to predict next times step data. We compare performances of regression model trained with original dataset and regression models trained with augmented dataset using RIM.

\textbf{Predicting Stock Price Movement.}\newline
This regression task consists of predicting the next day SPY500 index Open price from historical SPY500 index using data from July 2008 to December 2012 as training data and data from January 2013 to March 2014 as testing data. The input data contains the last 7 days' historical Open, Close, High, Low, and Volume of the SPY500 index. After predicting the next day's Open price, we take a long position if the predicted next day Open is larger than today's Open, short otherwise. On comparing the results on the test data for the augmented case and the original case, we observe that the proportion of profitable trading signals is higher in the augmentation-trained model as observed in Figure \ref{fig:regression_stock_prediction_3}. Using these trading signals, we also calculate the trading system's CAGR (Compound Annual Growth Rate) which we observe to be higher in the augmentation trained model. The test loss plot shows that the MSE for the augmentation trained model is consistently lower than the non-augmentation trained model. 

\begin{figure}[!htp]
\centering

\includegraphics[width=.315\textwidth]{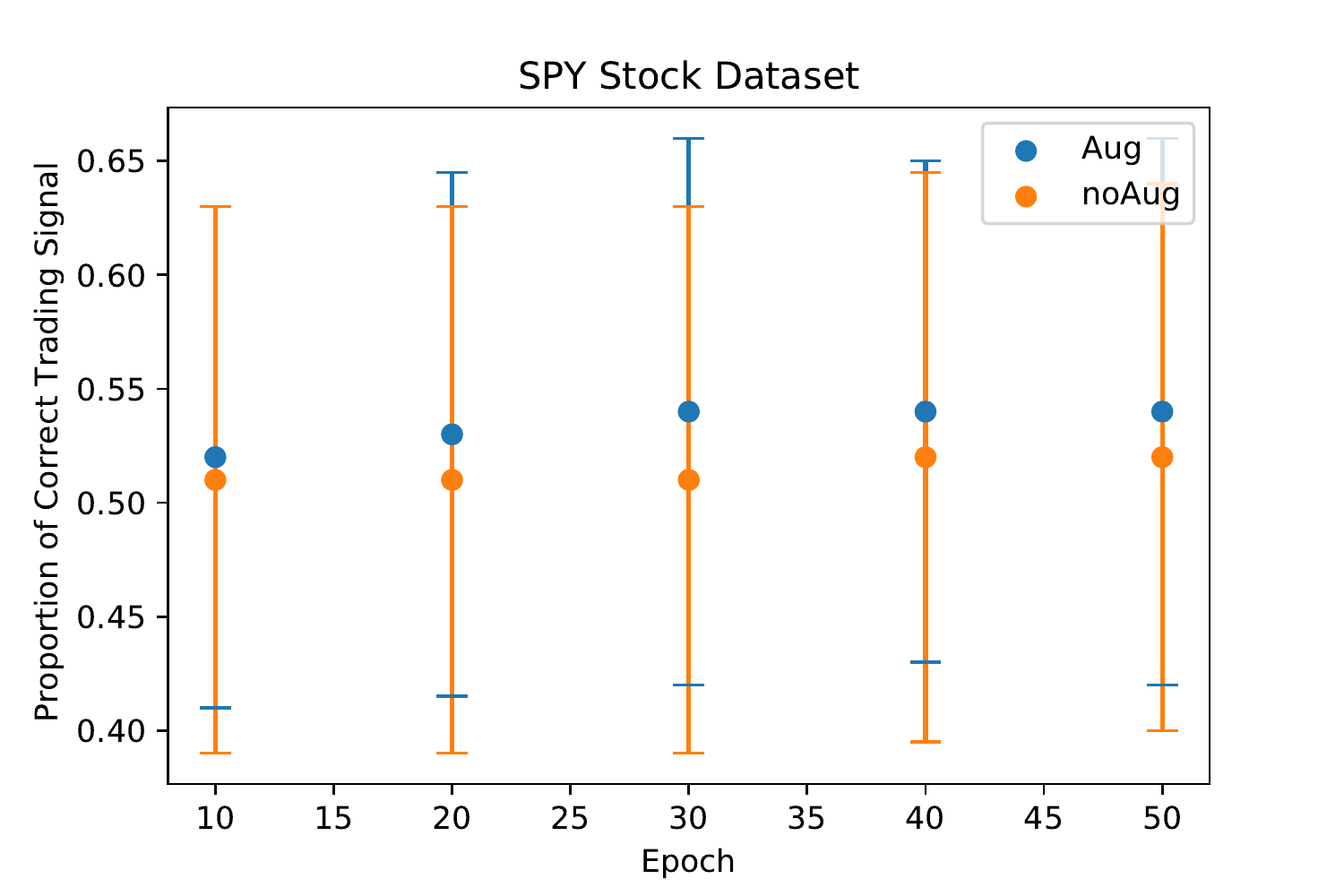}\quad
\includegraphics[width=.32\textwidth]{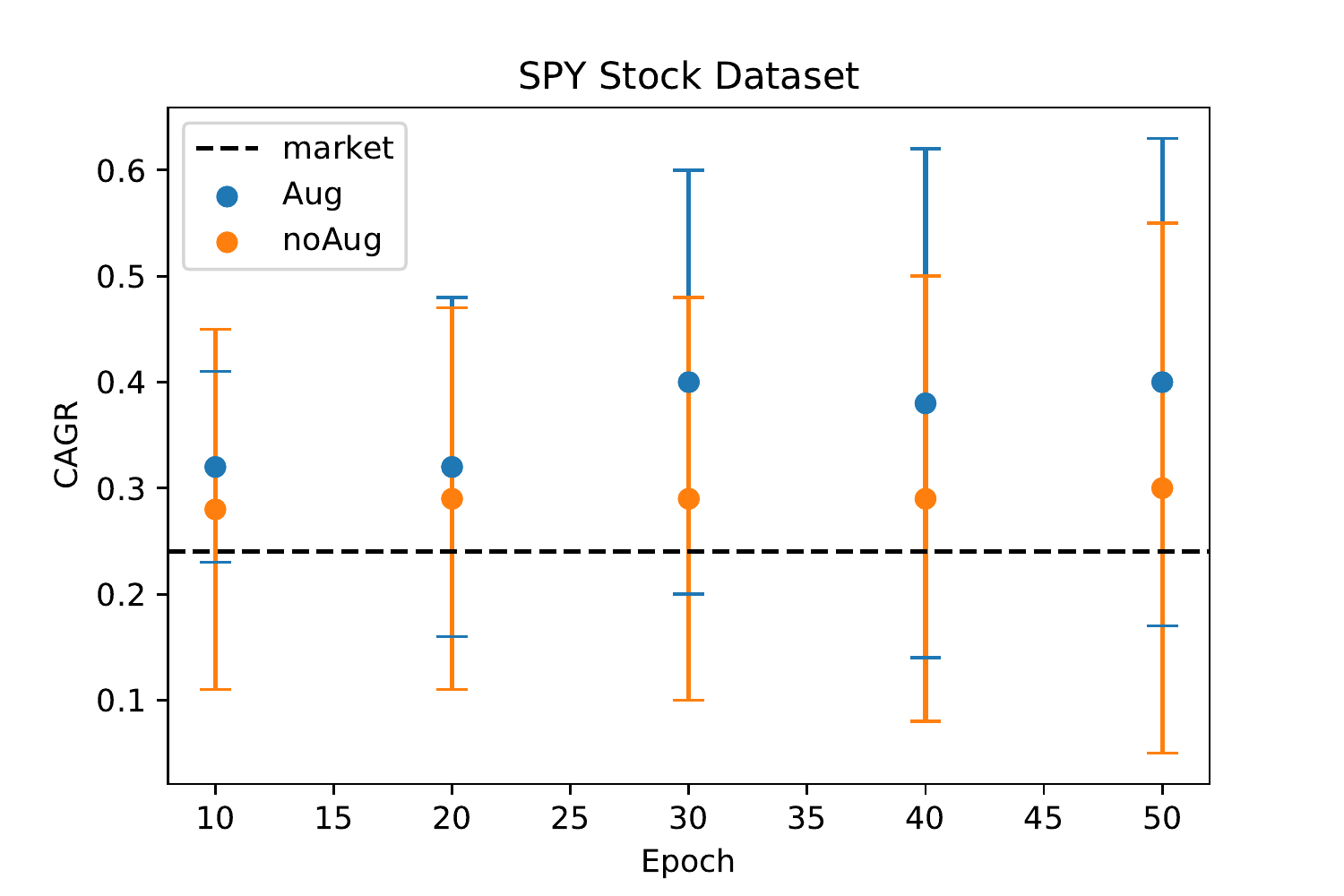}\quad
\includegraphics[width=.31\textwidth]{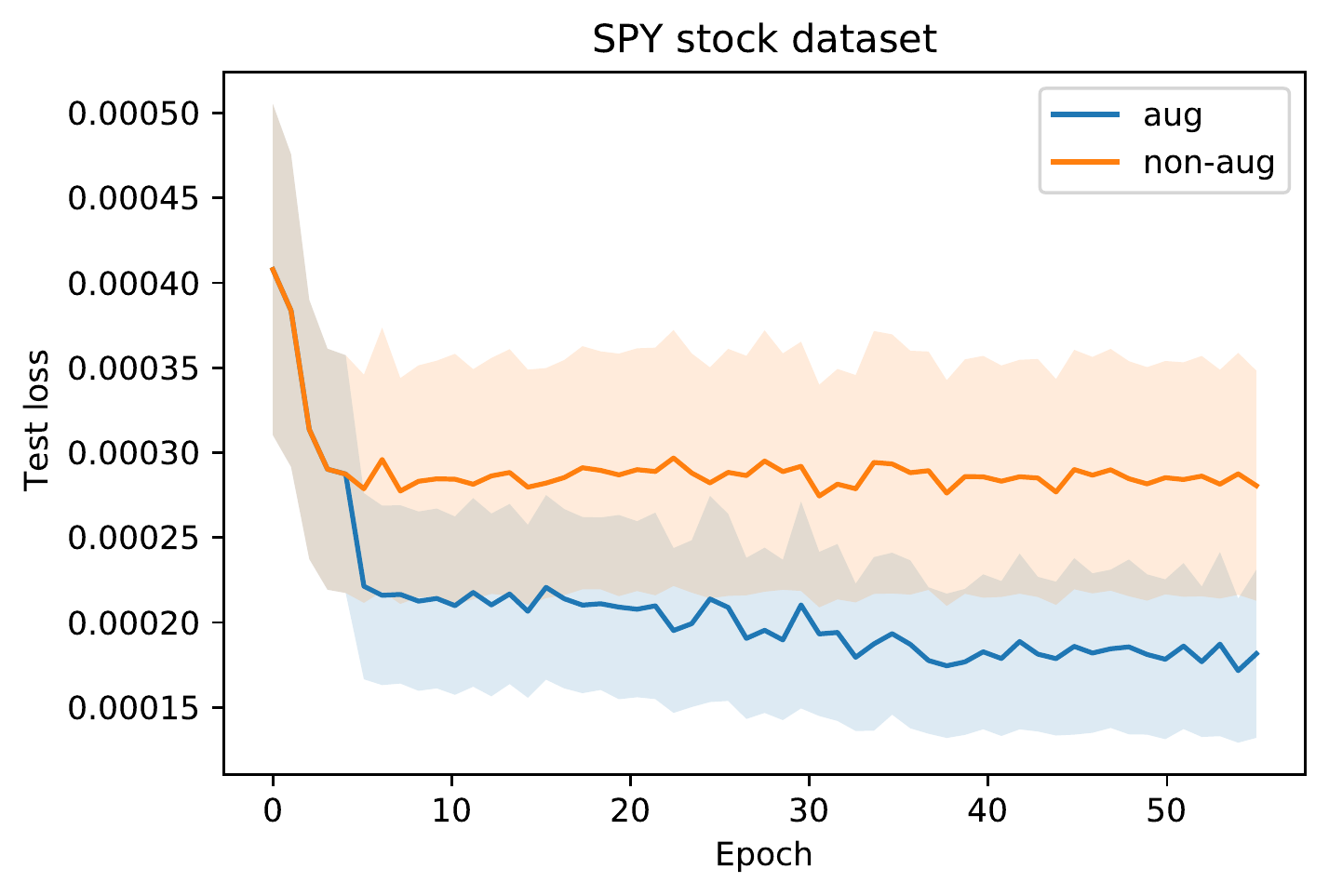}
\caption{Profitable trading signals (left), test set CAGR (middle), test set MSE (right) for the SPY500 Dataset using an LSTM model with 2 LSTM layers (200 neurons), 2 dense layers (100 neurons), lr=1e-4, batch size=16. The plots indicate resulting mean {$\pm$} standard deviation from 10 runs.}
\label{fig:regression_stock_prediction_3}
\end{figure}

\vspace{-2mm}
\textbf{Predicting Air Quality.}\newline
The restricted air quality dataset contains 1200 instances of hourly averaged responses from an array of 5 metal oxide chemical sensors. This is a time series regression task where the target is the next time step's CO concentration. The input data contains the last six time steps' 10 features as used in 
\cite{de2008field}.
Figure \ref{fig:regression_air_quality} shows that the test MSE of the augmentation trained model remains lower than the non-augmentation trained model during the training epochs validating our claims about the robustness of the approach. Accordingly, the proportion of correct predictions of CO up/down also remains higher for the augmented case.


\begin{figure}[!htp]
\begin{center}
\includegraphics[width=.34\textwidth]{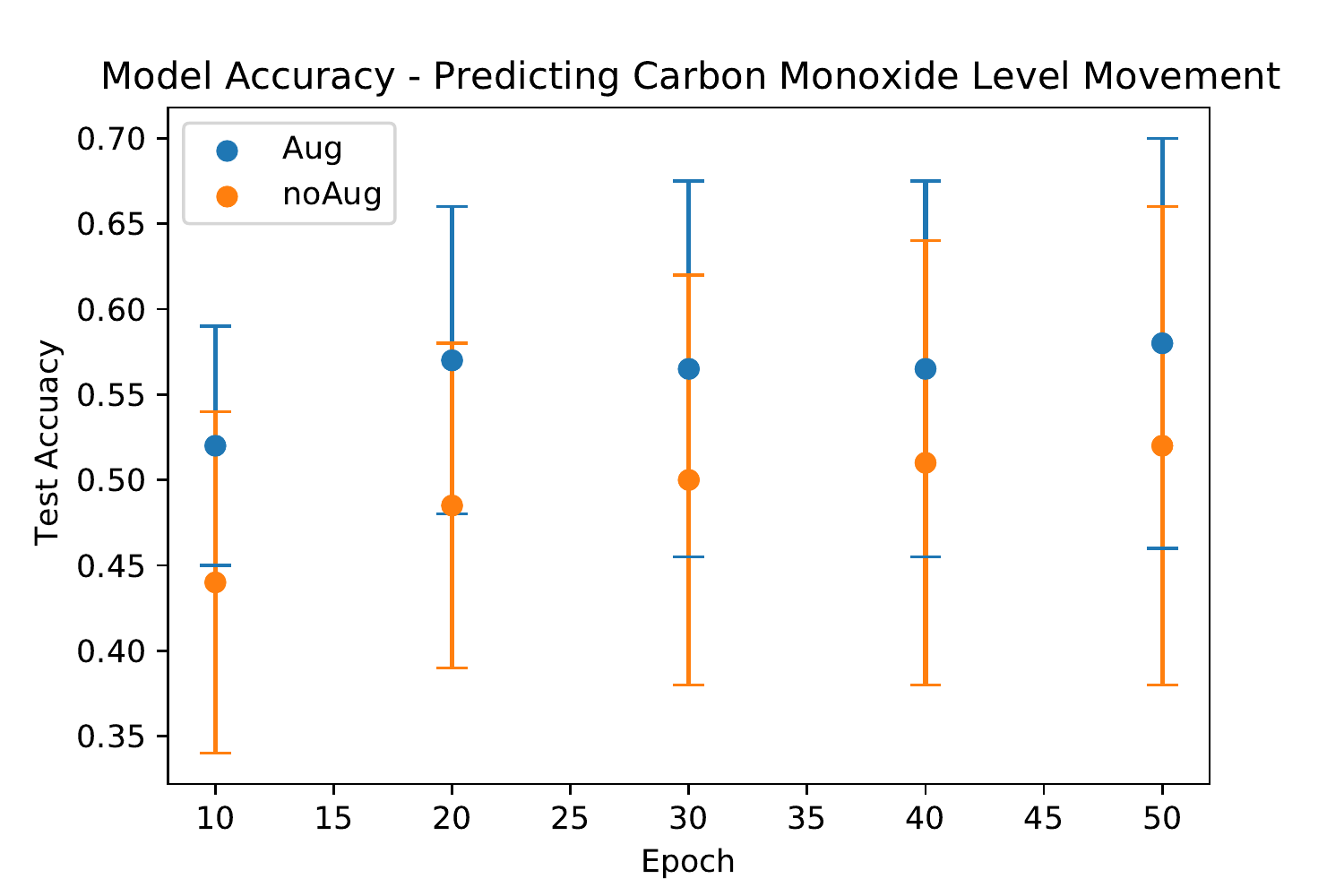}\quad
\includegraphics[width=.305\textwidth]{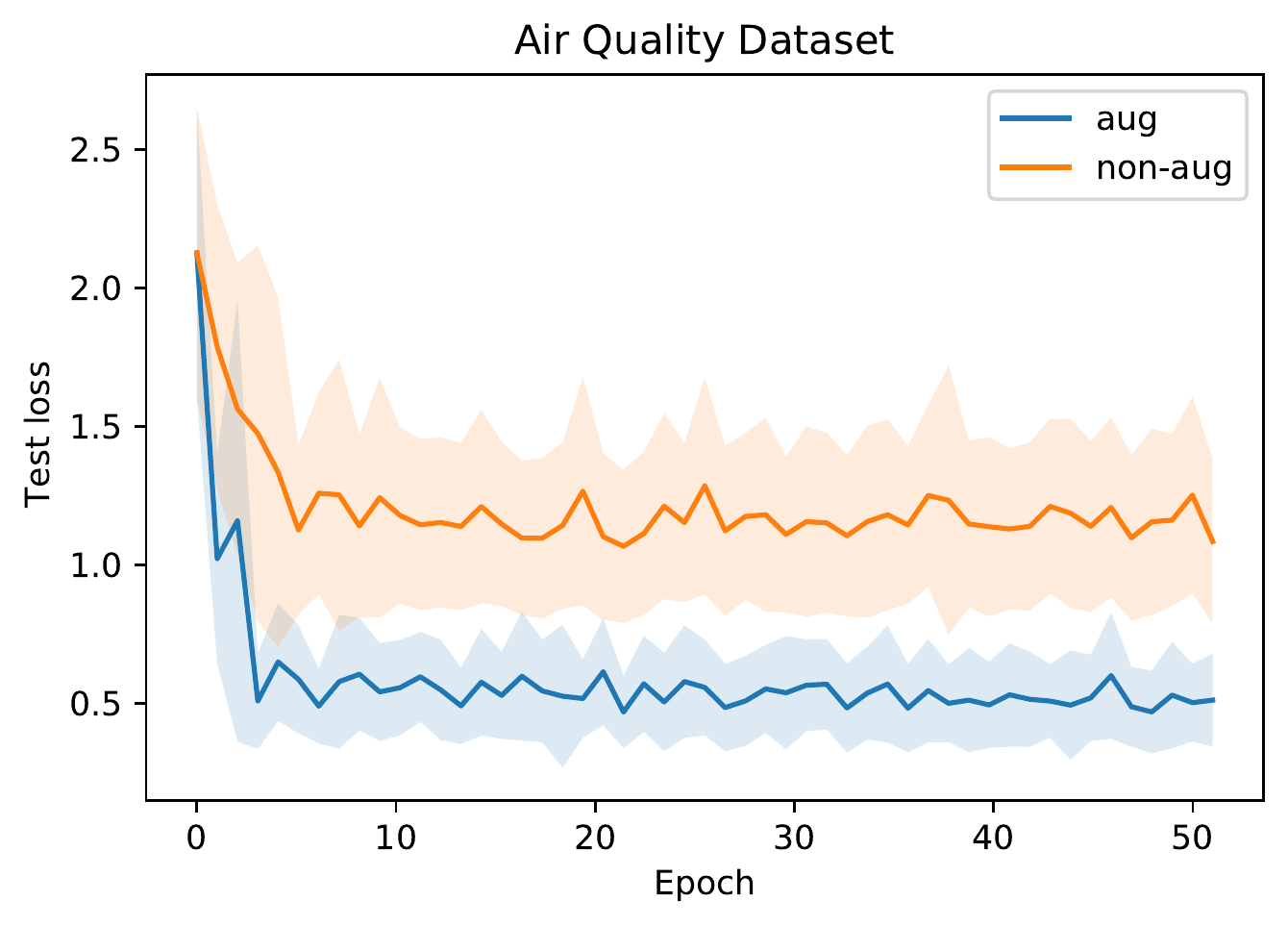} 
\end{center}
\caption{Test accuracy (left) and Test MSE (right) for the Air Quality Dataset using an LSTM model with 2 LSTM layers (200 neurons), 2 dense layers (100 neurons), lr=1e-4, and batch size=16. The plots indicate the resulting mean {$\pm$} standard deviation from 10 runs.}
\label{fig:regression_air_quality}
\end{figure}

\newpage
\section{Time series RL} \label{appendix:RL}
In this section, we consider portfolio management task using reinforcement learning. More specifically, we compare performances of agents (DPG) trained with original state trajectories (price evolution) and augmented state trajectories using RIM. 
\subsection{Dataset}
For RL, the data comes from the Quandl finance database that has daily data. Our RL models are trained over 600 trading days from 2010-08-09 to 2012-12-25 and tested over 200 trading days from 2012-12-26 to 2013-10-11 on a portfolio consisting of ten selected stocks: American Tower Corp. (AMT), American Express Company (AXP), Boeing Company (BA), Chevron Corporation (CVX), Johnson \& Johnson (JNJ), Coca-Cola Co (KO), McDonald's Corp. (MCD), Microsoft Corporation (MSFT), AT\&T Inc. (T) and Walmart Inc (WMT). To promote the diversification of the portfolio, these stocks were selected from different sectors of the S\&P 500, so that they are uncorrelated as much as possible.

\subsection{RL Pseudocode}
\label{RL_pseudocode}
\begin{algorithm}
\label{pseudoCode_reg_1}
\caption{RIM: RL Training}
    \SetAlgoLined
    \SetKwInOut{Input}{input}
    \SetKwInOut{Output}{output}
    \DontPrintSemicolon
    \Input{Observed time series data
    $S=\{s_0, s_1, \dots, s_T\}$ where $s_i=(x_i, y_i)$ for $x_i \in \R^{d}$ and $y_i \in \R$ for $i \in [0:T]$.}
    {\textbf{Initialize}}
    {$\theta$ parameter for the policy network, $Y$ epochs, and a distribution $\mathcal{D}$ with support $[0, 1)$.}
    \linebreak
    \For  {$e=1$ to $Y$} {
        {\textbf{Initialize}}
        {
        $\vl=(\lambda_1, \dots, \lambda_T)$ 
        with $\lambda_i \sim{\mathcal{D}}$
        // Initialize interpolation coefficients vector
        }
        \linebreak
        {\textbf{Augmented Path Simulator}}{
        Generate an augmented trajectory
        $S_{\vl}=\{s_0, s_{1, \lambda_1}, \dots, s_{T, \lambda_T} \}$
        where $s_{i, \lambda_i}=(\xl, \yl)$
        }
        \linebreak
        \For{$t=1$ to $T$}{
        $a_t \sim \pi(.|s_{t,\lambda_t})$ \\
        $s'_t \sim p(.|s_{t,\lambda_t},a_t)$ \\
        $S_t \leftarrow S \cup S_{\vl} $ // Add a transition to the replay buffer \\
        UpdateCritic($S_t$) \\
        UpdateActor($S_t$) // Data augmentation is applied to the samples for actor training as well
        }
    }
\end{algorithm} 

\subsection{Policy Deployment}

In our RL experiments, the investment decisions to rebalance the portfolio are made daily and each input signal represents a multidimensional tensor that aggregates historical open, low, high, close prices and volume. It should be noted that our training and testing include the transaction costs (TC). We used the typical cost due to bid-ask spread and market impact that is 0.25\%. We believe these are reasonable transaction costs for the portfolio trades.

\vspace{-3mm}
\begin{figure}[!htb]
\centering
\includegraphics[height=3.5cm,width=4.6cm]{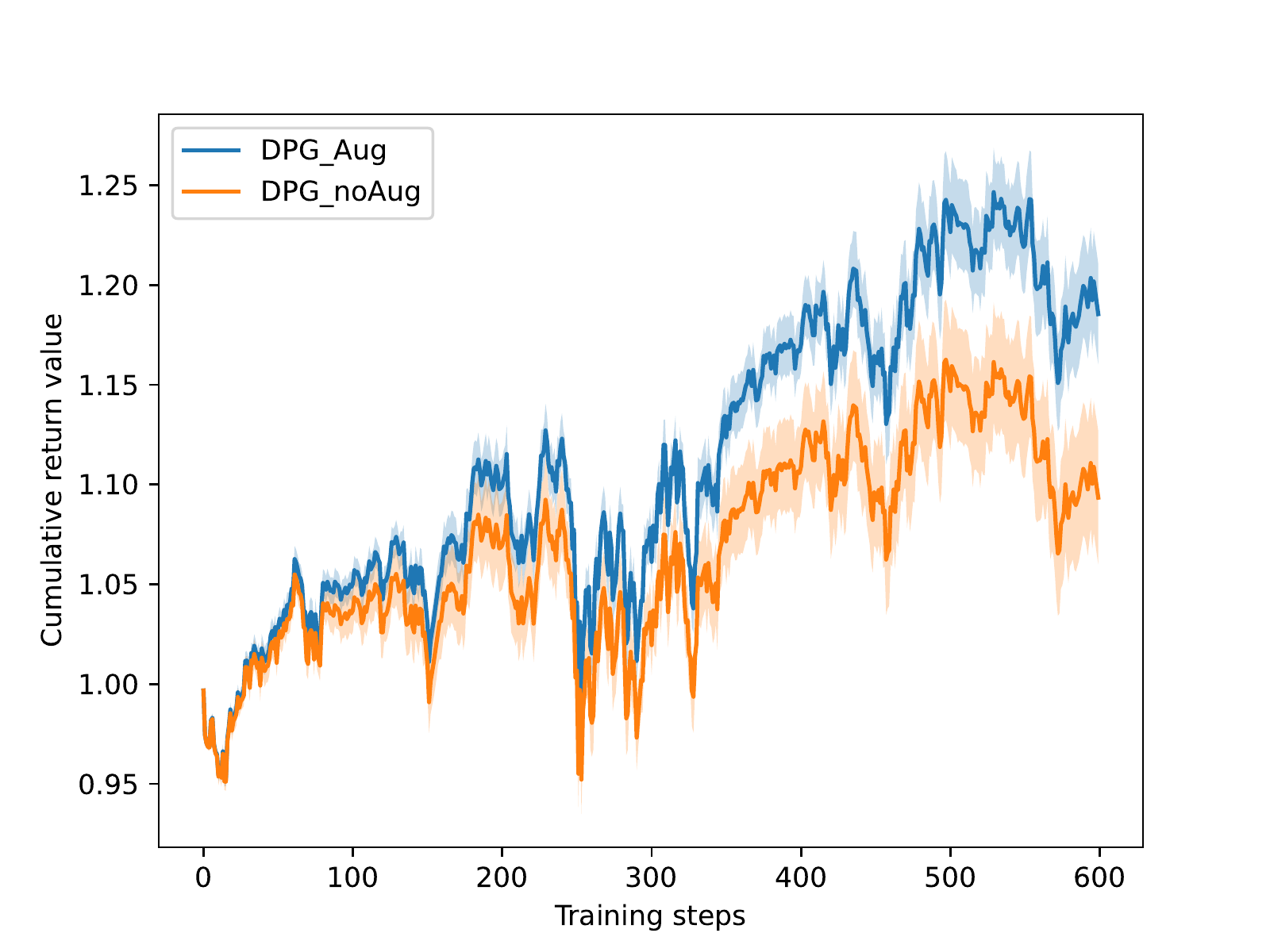}\qquad
\includegraphics[height=3.5cm,width=4.6cm]{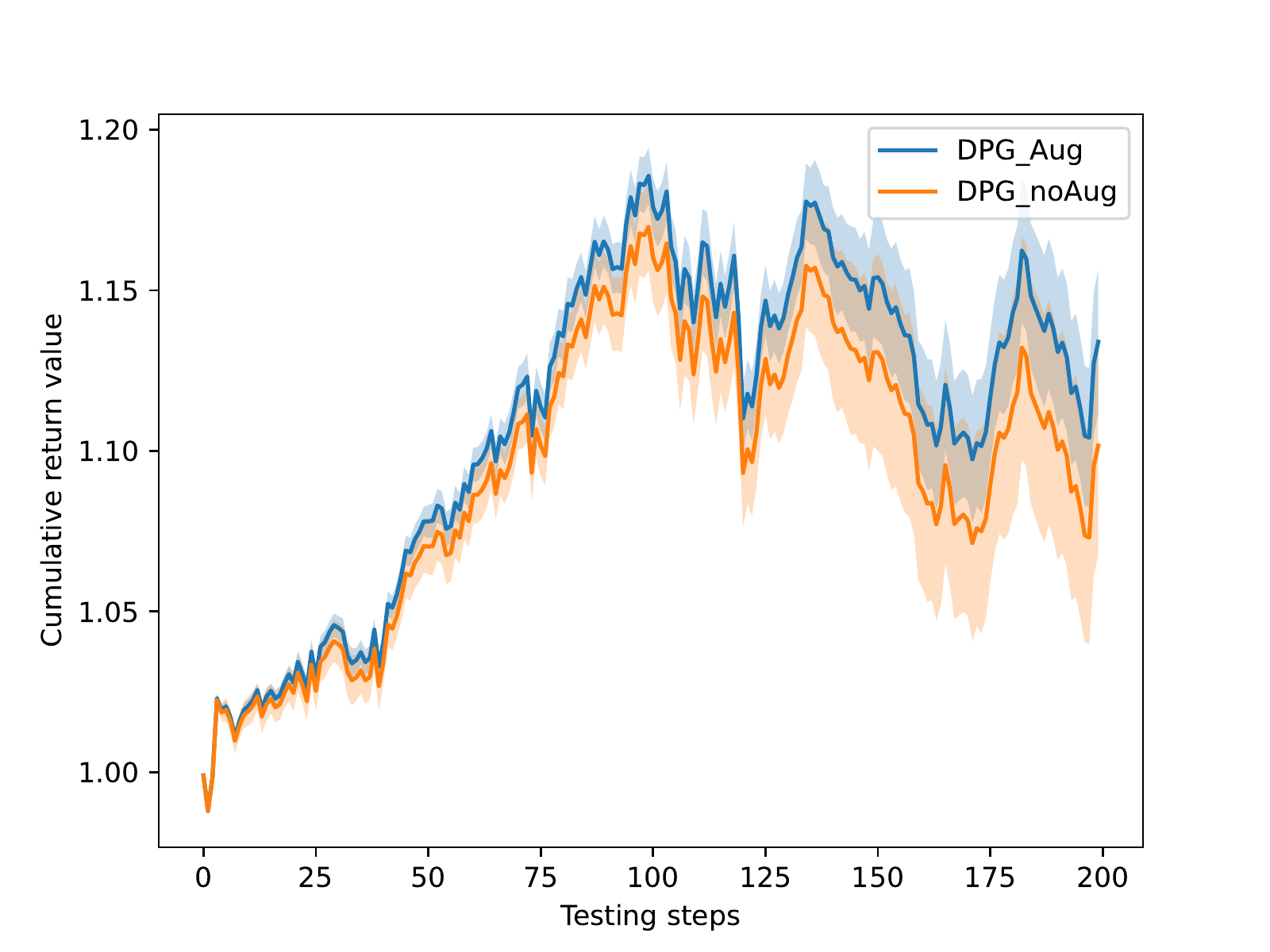} 

\includegraphics[height=3.5cm,width=4.6cm]{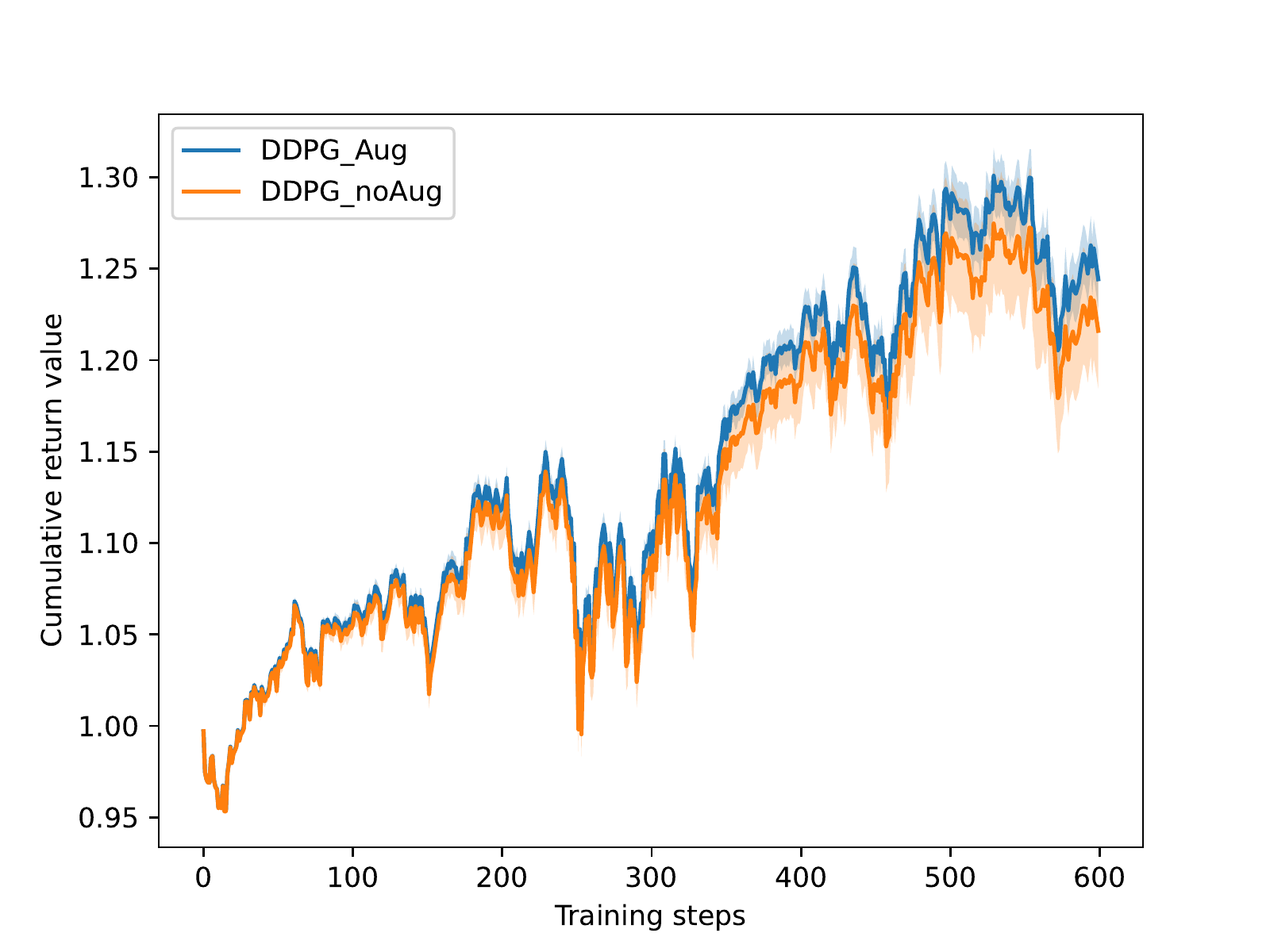}\qquad
\includegraphics[height=3.5cm,width=4.6cm]{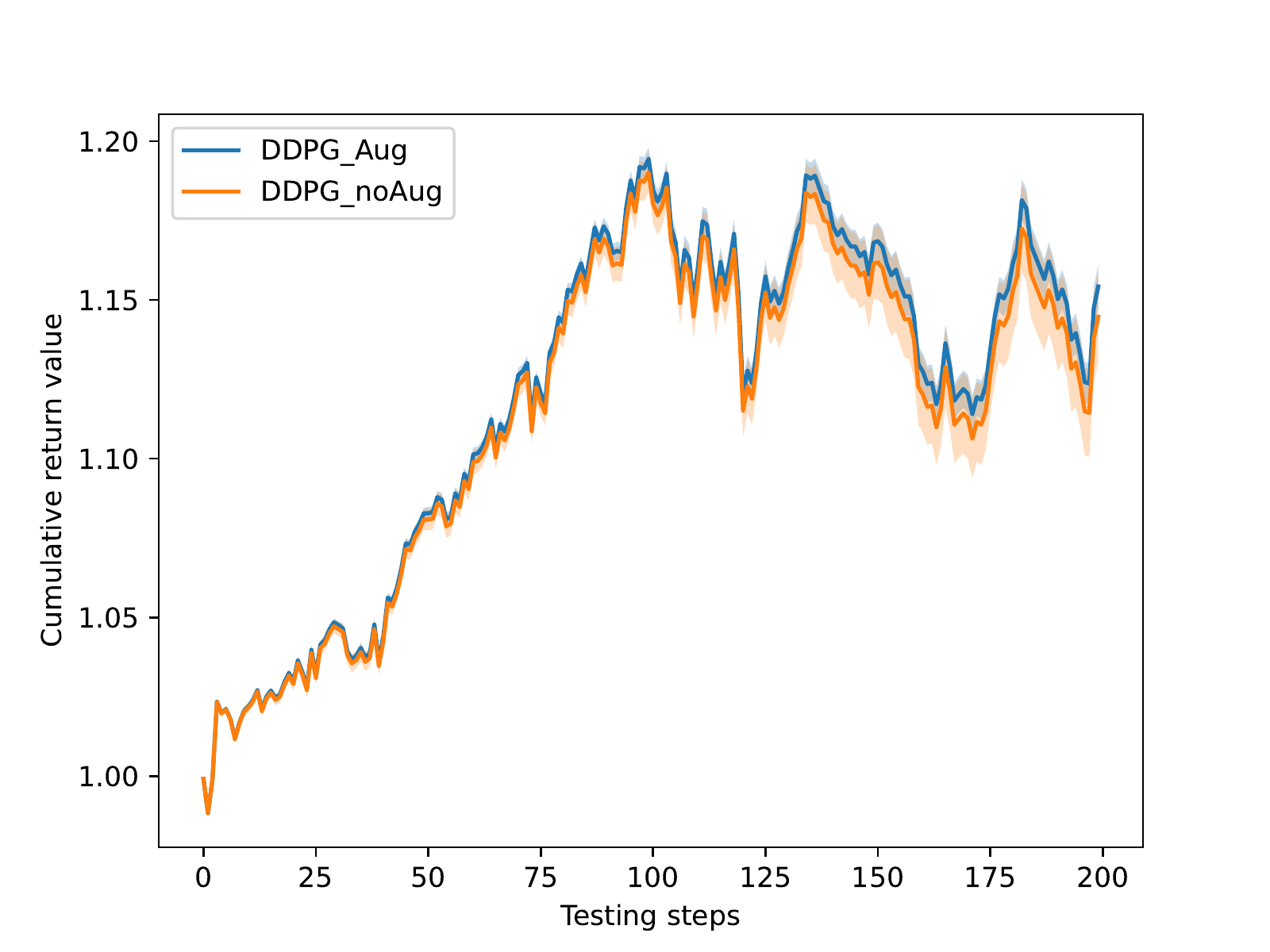}
\caption{Training and testing results for DPG (above) and DDPG (below). The plots indicate the resulting mean {$\pm$} standard deviation from 20 runs with different seeds.}
\label{fig:RL_task}
\end{figure}

\newpage
\section{Hyperparameters}
\label{appendix:hyper}
We note that the primary objective of the conducted experiments is to show that RIM can improve model performance. Therefore, in all experiments, instead of finding optimal set of parameters for augmented and non-augmented trained model, we compare the performance of augmented trained model and non-augmented trained model with the same hyperparameter configuration. We demonstrated in Section \ref{exp:re} that RIM indeed improves model performance with same hyperparameter configuration. However, are these improvements robust to other hyperparameters? To answer this question, we conducted sensitivity analysis for two supervised learning tasks (Indoor user movement classification and Air Quality regression) and RL task (Portfolio Management).

\subsection{Hyperparameter Sensitivity for Supervised Tasks}

For Indoor movement classification task, we conducted the same experiment in Section \ref{exp:re} with 9 different hyperparameter configurations as shown in table \ref{tab:sensitivity_class} and observed that for all the cases RIM outperforms Non-Augmented case (with smaller mean test loss and higher mean test accuracy) which solidifies our claim of enhancement observed in model performance when we use RIM.

{\renewcommand{\arraystretch}{1.2}
\tabcolsep=0.9\tabcolsep
\begin{table}[htb]\small
\caption{In the table, the Test Loss and Test Accuracy are the mean(standard deviation) over 10 runs for 50 epochs for varying Filters and Kernel size for Indoor User Movement Classification Task}
\label{tab:sensitivity_class}
\centering
\begin{tabular}{cccccc}\toprule
\textbf{Filters} & \textbf{Kernel Size} & \textbf{Test Loss -- RIM} & \textbf{Test Loss -- NoAug} & \textbf{Test Acc -- RIM} & \textbf{Test Acc -- NoAug} \\ \midrule
16 & 3 & \textbf{1.09 (0.43)} & 2.72 (0.95) & \textbf{0.71 (0.04)} & 0.63 (0.02) \\ 
16 & 4 & \textbf{0.90 (0.45)} & 1.70 (0.69) & \textbf{0.76 (0.04)} & 0.68 (0.04) \\
16 & 5 & \textbf{0.86 (0.19)} & 1.00 (0.19) & \textbf{0.72 (0.01)} & 0.60 (0.05) \\ 
32 & 3 & \textbf{1.37 (0.31)} & 3.07 (0.67) & \textbf{0.72 (0.02)} & 0.62 (0.05) \\ 
32 & 4 & \textbf{1.75 (0.71)} & 2.70 (1.20) & \textbf{0.74 (0.03)} & 0.68 (0.03) \\ 
32 & 5 & \textbf{1.70 (0.70)} & 2.57 (0.85) & \textbf{0.66 (0.08)} & 0.64 (0.06) \\ 
64 & 3 & \textbf{2.99 (2.17)} & 3.31 (1.36) & \textbf{0.68 (0.07)} & 0.68 (0.06) \\ 
64 & 4 & \textbf{2.60 (0.70)} & 4.80 (4.30) & \textbf{0.73 (0.03)} & 0.68 (0.02) \\ 
64 & 5 & \textbf{3.30 (0.74)} & 4.90 (4.08) & \textbf{0.66 (0.03)} & 0.60 (0.06) \\ \bottomrule
\end{tabular}
\end{table}
}

For Air quality regression task, we again conducted the same experiment as in Section \ref{exp:re} with 8 different hyperparameter configurations as shown in table \ref{tab:sensitivity_reg}. Here too, we find that RIM outperforms Non-Augmented case (with smaller mean test MSE and higher mean test accuracy) which confirms our claim of enhancement observed in model performance when we use RIM.

{\renewcommand{\arraystretch}{1.2}
\tabcolsep=0.25\tabcolsep
\begin{table}[htb]\small
\caption{In the table, the MSE and Accuracy are the mean(standard deviation) over 10 runs for 50 epochs for varying Epoch Initial, Batch Size, LSTM Layer and Dense Layer for Air Quality Regression Task on Test Data}
\label{tab:sensitivity_reg}
\centering
\begin{tabular}{cccccccc}\toprule
 \textbf{Epoch Init} & \textbf{Batch Size} & \textbf{LSTM Layer} & \textbf{Dense Layer} & \textbf{MSE -- RIM} & \textbf{MSE -- NoAug} & \textbf{Acc -- RIM} & \textbf{Acc -- NoAug} \\ \midrule
5 & 16 & 100 & 50 & \textbf{5.34 (0.11)} & 5.43 (0.25) & \textbf{0.63 (0.03)} & 0.47 (0.08) \\ 
1 & 32 & 80 & 40 & \textbf{5.40 (0.08)} & 5.44 (0.04) & \textbf{0.63 (0.05)} & 0.51 ( 0.02) \\
5 & 16 & 150 & 50 & \textbf{5.59 (0.01)} & 5.74 (0.35) & \textbf{0.66 (0.03)} & 0.53 (0.10) \\ 
6 & 32 & 200 & 100 & \textbf{5.23 (0.10)} & 5.55 (0.19) & \textbf{0.64 (0.02)} & 0.51 (0.06) \\ 
10 & 16 & 150 & 100 & \textbf{5.32 (0.05)} & 5.48 (0.11) & \textbf{0.57 (0.02)} & 0.51 (0.35) \\ 
1 & 64 & 200 & 100 & \textbf{5.59 (0.07)} & 5.67 (0.23) & \textbf{0.65 (0.02)} & 0.52 (0.03) \\
10 & 16 & 100 & 50 & \textbf{5.44 (0.24)} & 5.49 (0.38) & \textbf{0.59 (0.05)} & 0.56 (0.07) \\ 
15 & 16 & 100 & 50 & \textbf{5.56 (0.08)} & 5.60 (0.16) & \textbf{0.65 (0.01)} & 0.56 (0.02) \\ \bottomrule
\end{tabular}
\end{table}
} 

\subsection{Hyperparameter Sensitivity for RL Task}
\label{hyperparameter_sensitivity}

The hyperparameter space is represented by a hypercube: the more values it contains the harder it is to explore all the possible combinations. To efficiently find the optimal set of hperparameters, we explored the hyperparameter space using Bayesian optimization (BO) \cite{hutter2011sequential}. Table~\ref{table:hyperparam} shows the range of values for the hyperparameters used during the training and validation phase. The learning rate controls the speed at which neural network parameters are updated. The window is used to allow the deep RL agents to utilize a range of historical data values to relax the Markov assumption. We allow the use of 2 days up to 30 days of historical data. 
The number of filters and kernel strides are the hyperparameters for the convolution neural networks. It is important to carefully optimize these parameters in order to capture the best feature representations used by the policy networks. Finally, the training and testing sizes may also impact the RL performance. So, we also consider them as hyperparameters.

{\renewcommand{\arraystretch}{0.8}
\tabcolsep=1.0\tabcolsep
\begin{table}[H]\small
	\centering
	\caption{Hyperparameters used by our RL algorithms.}\label{table:hyperparam}	
		\begin{tabular}{>{\raggedright}m{38mm}cc}
			\toprule
			\begin{bf}Parameters\end{bf} & \begin{bf}Bounds\end{bf} & \begin{bf}Type\end{bf} \\
			\midrule
			Learning rate (lr) & $10^{-5}$--$5.10^{-1}$ & Discrete \\
			\midrule
		    Trading cost (tc) & 2--30 & Discrete \\
		    \midrule
			Number of filters (nf) & 2--52 & Discrete \\
			\midrule
			Kernel Strides (ks) & 2--10 & Discrete \\
			\midrule
		    Window & 2--30 & Discrete \\
			\midrule
			Training size (train) & 20--500 & Discrete \\
			\midrule
			Testing size (test) & 5--100 & Discrete \\
			\bottomrule
\end{tabular}
\end{table}
}


{\renewcommand{\arraystretch}{1.3}
\tabcolsep=0.9\tabcolsep
\begin{table}[htb]
\caption{Sensitivity analysis for DPG configurations without augmentation}
\label{tab:my_table_DPG_without_augmentation}
\begin{tabular}{lclcHHlllcc}\toprule
  lr   & tc & nf & ks & coef\_ew & ratio regul & window & train & test  & \textbf{validation cumulative return} \\ \midrule
0.001    & 0.0225 & {[}48, 20, 1710{]} & 9 & 0 & 0.8193  & 19 & 50  & 10  & 1.1563 \\ 
1.0      & 0.0125 & {[}50, 38, 1340{]} & 3 & 1 & 0.8262   & 6  & 500 & 100 &   1.1509 \\ 
0.0005   & 0.0125 & {[}14, 6, 1300{]}  & 4 & 1 & 0.8628  & 16 & 50  & 50  &  1.1600  \\
0.01     & 0.0125 & {[}28, 28, 210{]}  & 4 & 0 & 0.7464  & 13 & 100 & 10  & 1.1380 \\ 
0.01     & 0.0175 & {[}16, 2, 1080{]}  & 3 & 0 & 0.9817  & 9  & 50  & 5   & 1.1597 \\ 
0.001    & 0.0125 & {[}18, 12, 1290{]} & 8 & 1 & 0.3120  & 16 & 50  & 50  & 1.1542 \\ 
1.0      & 0.0175 & {[}20, 20, 1090{]} & 6 & 0 & 0.03283 & 16 & 50  & 10  & 1.1507 \\ 
\textbf{0.0005}   & \textbf{0.0075} & \textbf{{[}18, 2, 1270{]}}  & \textbf{3} & \textbf{1} & \textbf{0.5269}  & \textbf{16} & \textbf{50}  & \textbf{50}  & \textbf{1.1614} \\
1,00E-05 & 0.0225 & {[}10, 10, 1310{]} & 8 & 0 & 0.6224  & 9  & 50  & 50  & 1.1591 \\ 
0.1      & 0.0225 & {[}10, 10, 770{]}  & 2 & 1 & 0.8634  & 17 & 200 & 20  & 1.15470 \\
5,00E-05 & 0.0075 & {[}12, 16, 1300{]} & 7 & 1 & 0.9771  & 17 & 50  & 50  & 1.1558 \\ 
0.05     & 0.0125 & {[}16, 10, 1090{]} & 3 & 0 & 0.5643  & 7  & 50  & 5   &  1.1551 \\ 
0.0001   & 0.0075 & {[}24, 12, 1080{]} & 5 & 1 & 0.8063  & 15 & 50  & 5   & 1.1544 \\ 
0.1      & 0.0175 & {[}44, 2, 1500{]}  & 2 & 0 & 0.3853  & 11 & 20  & 10  & 1.16082 \\
0.005    & 0.0025 & {[}2, 42, 270{]}   & 8 & 0 & 0.3938  & 16 & 500 & 50  & 1.1374 \\
5,00E-05 & 0.0175 & {[}24, 10, 1090{]} & 2 & 0 & 0.2165  & 11 & 50  & 5   & 1.1537 \\ 
1.0      & 0.0175 & {[}32, 12, 1490{]} & 2 & 1 & 0.6657   & 8  & 20  & 20  & 1.1574  \\
0.5      & 0.0075 & {[}46, 6, 950{]}   & 4 & 0 & 0.6697  & 18 & 20  & 20  & 1.1589  \\ 
0.0001   & 0.0025 & {[}30, 26, 1190{]} & 7 & 1 & 0.8471  & 9  & 500 & 100 & 1.1510  \\ 
0.01     & 0.0025 & {[}50, 16, 940{]}  & 7 & 1 & 0.2006 & 15 & 20  & 5   & 1.1501 \\ \bottomrule
\end{tabular}
\end{table}
} 

{\renewcommand{\arraystretch}{1.3}
\tabcolsep=0.9\tabcolsep
\begin{table}[htb]
\caption{Sensitivity analysis for DPG configurations with augmentation}
\label{tab:my_table_DPG_with_augmentation}
\begin{tabular}{lclcHHlllcc}\toprule
  lr   & tc & nf & ks & coef\_ew & ratio regul & window & train & test  & \textbf{validation cumulative return} \\ \midrule
0.005    & 0.0175 & {[}36, 36, 1960{]} & 4 & 0 & 0.4495    & 9  & 20  & 10 & 1.1513 \\
0.0001   & 0.0025 & {[}22, 14, 1270{]} & 9 & 0 & 0.1183  & 10 & 50  & 50 & 1.1582  \\
0.1      & 0.0125 & {[}40, 22, 880{]}  & 3 & 0 & 0.9875   & 6  & 20  & 10 & 1.1503 \\ 
1,00E-05 & 0.0075 & {[}8, 16, 760{]}   & 2 & 0 & 0.3432   & 6  & 200 & 20 & 1.1521 \\
0.005    & 0.0175 & {[}44, 6, 1500{]}  & 5 & 0 & 0.3180   & 6  & 20  & 20 & 1.1593 \\ 
\textbf{1,00E-05} & \textbf{0.0025} & \textbf{{[}2, 2, 1310{]}}   & \textbf{2} & \textbf{1} & \textbf{1.0} & \textbf{19} & \textbf{50}  & \textbf{50} & \textbf{1.1619} \\ 
0.001    & 0.0125 & {[}36, 6, 960{]}   & 3 & 1 & 0.02272 & 17 & 20  & 20 & 1.1560 \\ 
0.0001   & 0.0075 & {[}22, 6, 770{]}   & 9 & 0 & 0.7631   & 13 & 200 & 20 & 1.1573 \\ 
\textbf{1,00E-05} & \textbf{0.0025} & \textbf{{[}22, 2, 1320{]}}  & \textbf{2} & \textbf{1} & \textbf{1.0} & \textbf{19} & \textbf{50}  & \textbf{50} & \textbf{1.1619}  \\ 
0.5      & 0.0225 & {[}28, 2, 970{]}   & 7 & 0 & 0.3032  & 12 & 20  & 5  & 1.1581 \\ 
0.5      & 0.0125 & {[}40, 14, 960{]}  & 9 & 0 & 0.5931   & 18 & 20  & 10 & 1.1547 \\ 
0.01     & 0.0175 & {[}32, 16, 760{]}  & 5 & 0 & 0.3582  & 7  & 200 & 20 & 1.1478  \\ 
0.01     & 0.0025 & {[}24, 10, 780{]}  & 7 & 0 & 0.9514   & 16 & 200 & 5  & 1.1517 \\ 
0.005    & 0.0025 & {[}46, 12, 770{]}  & 9 & 0 & 0.8770 & 19 & 200 & 20 & 1.1502 \\ 
0.01     & 0.0225 & {[}48, 44, 1190{]} & 4 & 0 & 0.3636   & 10 & 100 & 50 & 1.1450 \\ 
1.0      & 0.0175 & {[}28, 8, 960{]}   & 6 & 1 & 0.08354   & 13 & 20  & 5  & 1.1541 \\ 
0.005    & 0.0125 & {[}36, 2, 1500{]}  & 3 & 0 & 0.9818   & 15 & 20  & 5  & 1.1607  \\ 
0.005    & 0.0225 & {[}18, 8, 1270{]}  & 5 & 1 & 0.7043   & 9  & 50  & 50 & 1.1585 \\ 
1.0      & 0.0125 & {[}8, 6, 1260{]}   & 7 & 1 & 0.4670   & 8  & 50  & 50 & 1.1604  \\
1.0      & 0.0025 & {[}28, 2, 1250{]}  & 2 & 0 & 0.2898   & 7  & 50  & 50 & 1.1607 \\ \bottomrule
\end{tabular}
\end{table}
}

\end{document}